\title[A Distributionally Robust Approach to Fair Classification]{A Distributionally Robust Approach to Fair Classification}
\date{}
\author{Bahar Taskesen, Viet Anh Nguyen, Daniel Kuhn, Jose Blanchet}
\thanks{The authors are with the Risk Analytics and Optimization Chair, EPFL, Switzerland (\texttt{bahar.taskesen, daniel.kuhn@epfl.ch}) and the Department of Management Science and Engineering, Stanford University (\texttt{viet-anh.nguyen, jose.blanchet@stanford.edu}).}
\begin{document}
	
	\begin{abstract}
	We propose a distributionally robust logistic regression model with an unfairness penalty that prevents discrimination with respect to sensitive attributes such as gender or ethnicity. This model is equivalent to a tractable convex optimization problem if a Wasserstein ball centered at the empirical distribution on the training data is used to model distributional uncertainty and if a new convex unfairness measure is used to incentivize equalized opportunities. We demonstrate that the resulting classifier improves fairness at a marginal loss of predictive accuracy on both synthetic and real datasets. We also derive linear programming-based confidence bounds on the level of unfairness of any pre-trained classifier by leveraging techniques from optimal uncertainty quantification over Wasserstein balls. 
	\end{abstract}
	
	\maketitle
	
	\section{Introduction}

    Machine learning algorithms are increasingly used to support human decision-making in sensitive domains and may impact, for example, which individuals will receive jobs, loans, medication, bail or parole. There are several reasons to believe that algorithms will make better decisions than human beings because they are capable of handling much more data than any human may grasp and because they can perform complex computations much faster than humans. In addition, human decisions are often subjective and prone to biases. 
    
    Although algorithmic decision processes are arguably efficient and make good use of all available data, they are not always as objective as one would expect. For example, recent studies have revealed that an algorithm used by the United States criminal justice system had falsely predicted that African Americans are twice as likely to engage in criminality as white Americans \cite{chouldechova2017fair, ref:propublica}. Also, it was recently discovered that a hiring system used by Amazon AI was discriminating against female candidates for software development and technical positions \cite{ref:dastin2018amazon}. In addition, it was shown that Google’s ad-targeting algorithm had proposed higher-paying executive jobs more often to men than to women \cite{ref:datta2015automated}.
    
    There are several possible explanations for biased behaviour of machine learning algorithms. First, the training data could already be corrupted by human biases due to biased device measurements or historically biased human decisions, amongst others. 
    Machine learning algorithms are designed to learn and preserve these biases \cite{ref:buolamwini2018gender,  ref:manrai2016genetic}. Second, minimizing the average prediction error privileges the majority populations over the minorities. Third, sensitive attributes can have an implicit detrimental effect on the decision making process even if they are not explicitly represented in the training data. Sensitive attributes are any attributes such as the race, gender or age of a person that distinguish privileged from unprivileged individuals. It is often illegal to use these sensitive attributes for decision making. Thus, a na\"ive approach to mitigate algorithmic biases would be to remove all sensitive information from the training data. This leads to {\em fairness through unawareness}. However, sensitive attributes are often correlated to other attributes that seem less problematic (such as a person's hair length or skin pigmentation), and this enables algorithms to make unfair recommendations based on predictions of the sensitive attributes. Ultimately, this results in an implicit use of the sensitive attributes under the guise of fairness  \cite{ref:barocas2016big, ref:black2020fliptest, ref:kleinberg2018algorithmic, ref:lipton2018does}.
    
    The scientific community has spent substantial efforts to establish  mathematical definitions of algorithmic fairness and to ensure that machine learning models are actually fair in the sense of these definitions. In the following, we explain some of the most popular fairness definitions in the context of binary classification and identify without loss of generality the positive outcome with the “advantaged” outcome, such as “admission to a college” or “receiving a promotion.” {\em Demographic parity} \cite{ref:dwork2012fairness} requires the likelihood of a positive outcome ({\em e.g.}, a person being hired) to be the same regardless of whether the person is in the protected ({\em e.g.}, female) group or not. {\em Equalized odds}~\cite{ref:hardt2016equality}, which is also referred to as {\em disparate treatment}~\cite{ref:zafar2017fairness}, requires the probability of a person in the positive class being correctly classified
    and the probability of a person in a negative class being misclassified should both be the same for persons in the privileged and unprivileged groups. {\em Equalized opportunities}~\cite{ref:hardt2016equality} can be viewed as a relaxation of the equalized odds criterion as it requires non-discrimination only within the privileged group. Hence, equalized opportunities requires the true positive rates to be equal in the privileged and unprivileged groups. Other notions of fairness include the {\em disparate impact}~\cite{ref:feldman2015certifying} and {\em disparate mistreatment}~\cite{ref:zafar2017fairness} criteria. The central idea behind any notion of fairness is to require the decisions of a classifier to be balanced among the privileged and unprivileged groups and label sets. For a comprehensive survey and further discussions of fairness in machine learning we refer to \cite{ref:berk2018fairness, ref:chouldechova2020snapshot, ref:corbett2017algorithmic, ref:mehrabi2019survey}.

    Logistic regression is one of the most popular classification methods \cite{ref:hosmer2013applied}. Its objective is to establish a probabilistic relationship between a random feature vector $X\in \mc X = \R^p$ and a random binary explanatory variable $Y \in \mc Y=\{0, 1\}$. We assume here that there is a single sensitive attribute $A\in\mc A=\{0,1\}$, which is also random and is {\em not} contained in the feature vector~$X$, and we consider the privileged learning setting \cite{ref:vapnik2009new, ref:quadrianto2017recycling}, where the sensitive information is only available at the training stage but not at the testing stage. Note that predicting $Y$ from $X$ ensures fairness through unawareness. In the remainder, we denote by $\{(\hat x_i, \hat a_i, \hat y_i)\}_{i=1}^N$ a finite set of training samples that are drawn independently from the probability distribution $\PP$ of the joint random vector $(X,A,Y)$. In logistic regression, the conditional probability $\PP[Y=1|X=x]$ is modeled as the sigmoidal hypothesis
    \[
    h_\beta(x) = [{1 + \exp(-\beta^\top x)}]^{-1},
    \]
    where the weight vector $\beta \in \R^p$ constitutes an unknown regression parameter. Classical logistic regression determines $\beta$ by solving the tractable convex optimization problem
    \be \label{eq:LR-standard}
    \Min{\beta}~\frac{1}{N}\sum\limits_{i=1}^N  \ell_\beta(\wh x_i, \wh y_i),\quad \ell_\beta(x,y)= -y \log (h_\beta(x)) - (1-y) \log (1 - h_\beta(x))
    \ee
    which minimizes the empirical \textit{log-loss}, that is, the negative log-likelihood function of the training data.
    To make logistic regression fair, we will include an {\em unfairness measure} in problem~\eqref{eq:LR-standard}. Specifically, we will either include a fairness constraint that requires the unfairness measure to fall below a given threshold, or we will include the unfairness measure as a penalty term in the objective function. As it is not possible to satisfy multiple notions of fairness simultaneously \cite{ref:berk2018fairness, kleinberg2016inherent}, we focus on unfairness measures related to equalized opportunities. However, our method is general enough to cater for other notions of fairness.
    
\begin{definition}[Unfairness measure]
\label{def:unfair}
If $f: [0,1] \to \R$ is  measurable, then the unfairness of a hypothesis $h:\mc X\rightarrow[0,1]$ with respect to $f$ under a distribution $\QQ$ of $(X, A, Y)$ is
\[
    \mathds U_f (\QQ, h) = \big| \EE_{\QQ}[ f(h(X)) | A = 1, Y = 1] - \EE_{\QQ}[ f(h(X)) | A = 0, Y = 1] \big|.
 \]
\end{definition}
The larger $\mathds U_f(\QQ, h)$, the more unfair is the hypothesis $h$, and if $\mathds U_f(\QQ, h)=0$, then the hypothesis is maximally fair. Different choices of~$f$ induce different notions of fairness.
 If $f(z) = \mathbbm{1}_{\{z \ge \tau\}}$, then $\mathds U_f(\QQ, h)=0$ means that $h$ is fair in view of the {\em equalized opportunities} criterion \cite{ref:hardt2016equality}. Here, $\tau \in [0, 1]$ is the classification threshold. 
 If $f(z) = z$, then $\mathds U_f(\QQ, h)=0$ means that the hypothesis $h$ is fair in view of the {\em probabilistic equalized opportunities} criterion for probabilistic classifiers \cite{ref:pleiss2017fairness}.
 
It is well known that increasing the fairness of an algorithm typically reduces its accuracy \cite{ref:friedler2019comparative, ref:lipton2018does, ref:menon2018cost}. This prompts us to introduce an ideal {\em fair} logistic regression model
\be \label{eq:LR-true-all-f}
\Min{\beta}~\EE_{ \PP}[- Y \log (h_\beta(X)) - (1-Y) \log (1 - h_\beta(X))] + \eta \mathds U_f( \PP, h_\beta),
\ee
where $\eta \in \R_+$ is a tuning parameter that balances the trade-off between accuracy and fairness. Unfortunately, problem~\eqref{eq:LR-true-all-f} is difficult to solve for several reasons. If $f(z)=\mathbbm{1}_{\{z \ge \tau\}}$, then the unfairness measure $\mathds U_f( \PP, h_\beta)$ is discontinuous in $\beta$, and if $f(z)=z$, then $\mathds U_f( \PP, h_\beta)$---though smooth---is still non-convex in $\beta$. In both cases, it seems difficult to solve~\eqref{eq:LR-true-all-f} to global optimality. In addition, the distribution $\PP$ is unknown and only indirectly observable through the $N$ independent training samples. Thus, an important input for problem~\eqref{eq:LR-true-all-f} is unavailable in practice. The latter shortcoming could be addressed by simply replacing the unknown true distribution $\PP$ in \eqref{eq:LR-true-all-f} with the empirical distribution $\Pnom_N$, which is defined as the discrete uniform distribution on the $N$ training samples. However, this na\"ive approach could result in over-fitting and yield classifiers with a poor out-of-sample performance (both in terms of accuracy and fairness) if $N$ is small relative to~$p$.

The concerns over poor out-of-sample performance prompt us to pursue a {\em distributionally robust} approach, whereby the objective function in~\eqref{eq:LR-true-all-f} is minimized in view of the most adverse distribution $\QQ$ within some {\em ambiguity set} that reflects all available distributional information. The ambiguity set could be characterized through moment and support information \cite{ref:delage2010distributionally, ref:goh2010distributionally, ref:wiesemann2014distributionally}, or it could be defined as a ball around $\Pnom_N$ with respect to a distance measure for distributions such as the Prohorov metric \cite{ref:erdougan2006ambiguous} or the Kullback-Leibler divergence \cite{ref:hu2013kullback}. Due to its attractive measure concentration properties, we use here the Wasserstein metric to construct ambiguity sets \cite{ref:kuhn2019wasserstein, ref:esfahani2018data, ref:pflug2007ambiguity}. Moreover, Wasserstein distributional robustness offers probabilistic interpretations for popular regularization techniques \cite{ref:blanchet2019robust, ref:gao2017wasserstein, ref:shafieezadeh2019regularization,  ref:abadeh2015distributionally}.

The main contributions of this paper can be summarized as follows.
\begin{enumerate}[leftmargin=5mm]
    \item \textbf{Log-probabilistic equalized opportunities:} We propose a new unfairness measure and the corresponding fairness criterion, termed log-probabilistic equalized opportunities, which approximates the probabilistic equalized opportunities criterion. 
    We then prove that the empirical ({\em i.e.}, $\PP=\Pnom_N$) fair logistic regression model~\eqref{eq:LR-true-all-f}  with the new unfairness measure is equivalent to a tractable convex program.
    \item \textbf{Distributionally robust fair logistic regression:} We robustify the fair logistic regression model against all distributions in a Wasserstein  ball centered at~$\Pnom_N$, and we prove that this model is still equivalent to a tractable convex program if unfairness is quantified under the log-probabilistic equalized opportunities criterion. Experiments suggest that the resulting classifiers improve fairness at a marginal loss of accuracy.
    \item \textbf{Unfairness quantification:} Using similar  techniques from Wasserstein distributionally robust optimization, we develop two highly tractable linear programs whose optimal values provide confidence bounds on the unfairness of {\em any}  fixed classifier with respect to the (classical) probabilistic equalized opportunities criterion. We also devise a hypothesis test that checks whether a given classifier is fair in view of equalized opportunities.
\end{enumerate}

The existing literature on algorithmic fairness can be subdivided into three categories. Papers in the first category propose to pre-process the training data before solving a plain-vanilla classification problem \cite{ref:calmon2017optimized, ref:del2018obtaining, ref:feldman2015certifying, ref:kamiran2012data, ref:luong2011k, ref:samadi2018price, ref:zemel2013learning}. Papers in the second category enforce fairness during the training step by appending fairness constraints to the classification problem \cite{ref:donini2018empirical, ref:menon2018cost, ref:woodworth2017learning, ref:zafar2017fairness, ref:zafar2015fairness}, by including regularization terms that penalize discrimination \cite{ref:baharlouei2019r,
ref:huang2019stable,
ref:kamishima2012fairness, ref:kamishima2011fairness} or by (approximately) penalizing any mismatches between the true positive rates and the false negative rates across different groups \cite{ref:bechavod2017penalizing}. Several other papers in this category propose adversarial approaches to algorithmic fairness  \cite{ref:edwards2015censoring, ref:garg2019counterfactual, ref:hashimoto2018fairness, ref:kannan2018adversarial, ref:madras2018learning,  ref:rezaei2020fairness, ref:yurochkin2020training, ref:zhang2018mitigating}. Papers in the third category modify a pre-trained classifier in order to increase its fairness properties while preserving its classification performance as much as possible \cite{ref:corbett2017algorithmic, ref:dwork2018decoupled, ref:hardt2016equality, ref:menon2018cost}. 
    
The method proposed here can be viewed as an adversarial approach pertaining to the second category. There are only few other papers that study fairness from a distributionally robust perspective. A classification model with fairness constraints embedded in the ambiguity set is proposed in \cite{ref:rezaei2020fairness}, a repeated loss minimization model with a $\chi^2$-divergence ambiguity set is considered in~\cite{ref:hashimoto2018fairness} and robust fairness constraints based on a total variation ambiguity set that captures noisy protected group information is described in~\cite{ref:wang2020robust}. In addition, a fair distributionally robust classification model with a Wasserstein ambiguity set is studied in~\cite{ref:yurochkin2020training}, but this model deals with individual fairness and does not admit a tractable convex reformulation. In contrast, we consider marginally constrained Wasserstein ambiguity sets to enforce a notion of group fairness and provide a tractable convex reformulation.
    
\section{Fair Logistic Regression}
\label{sect:fair}
Recall that the fair logistic regression model~\eqref{eq:LR-true-all-f} is non-convex if $f(z)=\mathbbm{1}_{\{z \geq \tau\}}$, which induces equalized opportunities, or if $f(z) = z$, which induces probabilistic equalized opportunities. In order to convexify~\eqref{eq:LR-true-all-f}, we thus propose a new unfairness measure corresponding to $f(z)=1+\log(z)$, and we refer to the fairness criterion induced by the condition $\mathds U_f(\QQ, h)=0$ as {\em log-probabilistic equalized opportunities}. A classifier is fair in view of this criterion if the expected log-probability of a person in the positive class being correctly classified is the same for persons in the privileged and unprivileged groups. We also note that the log-probability function $f(h_\beta(x))=1-\log(1 + \exp(-\beta^\top x))$ can be viewed as a concave approximation of the sigmoid function $h_\beta(x)$. Concave (or convex) approximations of non-convex functions are routinely used in machine learning and arise, for example, when one replaces a non-convex loss function (such as the zero-one loss) with a convex surrogate loss function (such as the hinge loss or the log-loss) or when one replaces a non-convex risk measure (such as the value-at-risk) with a convex one (such as the conditional value-at-risk).

We now denote by $\wh p_{a y} = \Pnom_N(A = a, Y = y)$ the empirical proportion of people with attribute $a\in\mc A$ in class $y\in\mc Y$, and we define $r_a = 1/ \hat p_{a1}$ for all $a\in  \mc A$. Using this notation, we can prove that the logistic regression model~\eqref{eq:LR-true-all-f} with the log-probabilistic equalized opportunities unfairness measure is tractable under the empirical distribution for all sufficiently small $\eta$.

\begin{theorem}[Fair logistic regression] \label{thm:FLR}
If $f(z) = \log(z)$, $\eta \leq \min{\{ \hat p_{11}, \hat p_{01}\}}$ and $\PP=\Pnom_N$, then problem~\eqref{eq:LR-true-all-f} is equivalent to the tractable convex program
\[
\begin{array}{cl}
    \Min{\beta \in \R^p, t \in \R} & t \\[-0.5ex]
    \st & \EE_{\Pnom_N} [\ell_\beta(X,Y) + \eta r_1 \log(h_\beta(X))\mathbbm 1_{(1,1)}(A,Y) - \eta r_{0} \log(h_\beta(X))\mathbbm{1}_{(0, 1)}(A,Y)] \le t \\
    & \EE_{\Pnom_N} [\ell_\beta(X,Y) + \eta r_0 \log(h_\beta(X))\mathbbm 1_{(0,1)}(A,Y) - \eta r_{1} \log(h_\beta(X))\mathbbm{1}_{(1, 1)}(A,Y)] \le t,
\end{array}
\]
where the expectation under $\Pnom_N$ is a finite sum.
\end{theorem}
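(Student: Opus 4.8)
The plan is to prove the equivalence in two stages: first rewrite the penalized objective in \eqref{eq:LR-true-all-f} as an epigraph problem with two constraints by unfolding the absolute value in the unfairness measure, and second show that each of these constraints is convex in $\beta$ exactly under the stated bound on $\eta$.

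For the reformulation, I would specialize Definition~\ref{def:unfair} to $f(z) = \log(z)$ and $\QQ = \Pnom_N$. Because $\Pnom_N$ places mass only on the training samples, the conditional expectation $\EE_{\Pnom_N}[\log(h_\beta(X)) \mid A = a, Y = 1]$ equals $\EE_{\Pnom_N}[\log(h_\beta(X))\,\mathbbm{1}_{(a,1)}(A,Y)]$ divided by $\Pnom_N(A = a, Y = 1) = \hat p_{a1}$; recalling $r_a = 1/\hat p_{a1}$, this turns each conditional expectation into the weighted unconditional expectation appearing in the theorem. Setting $D(\beta) \defeq r_1 \EE_{\Pnom_N}[\log(h_\beta(X))\,\mathbbm{1}_{(1,1)}(A,Y)] - r_0 \EE_{\Pnom_N}[\log(h_\beta(X))\,\mathbbm{1}_{(0,1)}(A,Y)]$, we have $\mathds U_f(\Pnom_N, h_\beta) = |D(\beta)|$, so the objective becomes $\EE_{\Pnom_N}[\ell_\beta(X,Y)] + \eta |D(\beta)| = \max\{\EE_{\Pnom_N}[\ell_\beta] + \eta D(\beta),\, \EE_{\Pnom_N}[\ell_\beta] - \eta D(\beta)\}$. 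Minimizing a pointwise maximum is equivalent to minimizing an epigraph variable $t$ subject to both branches being bounded above by $t$, which yields exactly the two constraints in the statement.

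The substantive part is convexity. I would use the standard facts that $\log h_\beta(x) = -\log(1 + \exp(-\beta^\top x))$ is concave in $\beta$ and that $\ell_\beta(x,y)$ is convex in $\beta$ for $y \in \{0,1\}$. In the first constraint, the term $-\eta r_0 \log(h_\beta(X))\,\mathbbm{1}_{(0,1)}$ is convex (a positive multiple of the convex function $-\log h_\beta$), so the only obstruction is the concave term $+\eta r_1 \log(h_\beta(X))\,\mathbbm{1}_{(1,1)}$. The key observation is to write the expectations as finite sums and collect, for each training point $\hat x_i$ of type $(A,Y) = (1,1)$, its total coefficient multiplying $\log h_\beta(\hat x_i)$: the log-loss contributes $-\tfrac1N$ (since $\ell_\beta(\hat x_i,1) = -\log h_\beta(\hat x_i)$) while the penalty contributes $\tfrac{\eta r_1}{N} = \tfrac{\eta}{N_{11}}$, where $N_{11}$ is the number of such samples and $\hat p_{11} = N_{11}/N$. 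The net coefficient $\tfrac{\eta}{N_{11}} - \tfrac1N$ multiplies a concave function, so the aggregate is convex precisely when this coefficient is nonpositive, that is, when $\eta \le N_{11}/N = \hat p_{11}$. An identical argument applied to the $(0,1)$ samples in the second constraint gives $\eta \le \hat p_{01}$, and the two conditions together are equivalent to $\eta \le \min\{\hat p_{11}, \hat p_{01}\}$.

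The main obstacle I anticipate is the convexity step: one must recognize that the concave penalty term does not destroy convexity because it is exactly absorbed by the convex log-loss contributions of the \emph{same} group of samples, and making this precise requires the per-sample grouping together with the identity $r_1/N = 1/N_{11}$ that converts the bound on $\eta$ into the empirical proportion $\hat p_{11}$.
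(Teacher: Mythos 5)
Your proposal is correct and follows essentially the same route as the paper's proof: an epigraph reformulation after unfolding the absolute value in $\mathds U_f(\Pnom_N,h_\beta)$, conversion of conditional to unconditional expectations via $r_a = 1/\hat p_{a1}$, and a finite-sum expansion that groups the per-sample coefficients of $\log h_\beta(\hat x_i)$ to show the net coefficient $\eta r_a - 1 \le 0$ exactly when $\eta \le \hat p_{a1}$. No gaps.
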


\section{Distributionally Robust Fair Logistic Regression}
    \label{sect:training} 
    Approximating the unknown data-generating distribution~$\PP$ with the empirical distribution~$\Pnom_N$ may result in overfitting. Following~\cite{ref:blanchet2019robust, ref:gao2017wasserstein, ref:shafieezadeh2019regularization,  ref:abadeh2015distributionally}, we thus regularize the nominal classification problem under $\Pnom_N$ by robustifying it against all distributions in a Wasserstein ball around $\Pnom_N$ that contains the unknown true distribution $\PP$ with high confidence.
  
    \begin{definition}[Wasserstein distance]
    The type-$1$ Wasserstein distance between two probability distributions $\QQ_1$ and $\QQ_2$ of a random vector $\xi\in\R^n$ is defined as 
    \be \label{eq:kantorovich}
    \Wass(\QQ_1,\QQ_2 ) = \inf_{\pi  \in \Pi(\QQ_1, \QQ_2)} \EE_\pi[ c(\xi_1,\xi_2)],
    \ee 
    where $\Pi (\QQ_1, \QQ_2)$ denotes the set of all joint distributions of the random vectors $\xi_1\in\R^n$ and $\xi_2\in \R^n$ under which $\xi_1$ and $\xi_2$ have marginal distributions $\QQ_1$ and $\QQ_2$, respectively, and where $c:\R^n\times\R^n\rightarrow [0,\infty]$ constitutes a lower semi-continuous ground metric.
   \end{definition}
   
    When computing Wasserstein distances between distributions on $\mc X \times \mc A \times \mc Y$, we will use
    \be
    \label{eq:cost}
        c\big( (x, a, y), (x', a', y') \big) = \| x - x'\| + \kappa_{\mc A} | a - a'| + \kappa_{\mc Y} | y - y'|
    \ee
    as the ground metric, where $\|\cdot\|$ is a norm on $\R^p$ and $\kappa_{\mc A}, \kappa_{\mc Y} \in (0,\infty]$. Using the Wasserstein distance with the ground metric~\eqref{eq:cost}, we define the ambiguity set $\mbb B_\rho(\Pnom_N)$ as the Wasserstein ball of radius $\rho\ge 0$ around the empirical distribution $\Pnom_N$, intersected with the set of all distributions under which the marginal of $(A,Y)$ matches the empirical marginal. Thus,
\[
    \mbb B_\rho(\Pnom_N) = \left\{ \QQ \in \mc M:         \Wass(\QQ , \Pnom_N) \leq \rho, ~
        \QQ(A = a, Y = y) = \hat p_{ay} 
        \quad\forall a \in \mc A,~ y\in \mc Y 
        \right\},
\]
    where $\mc M$ stands for the set of all possible distributions on $\mc X\times\mc A\times\mc Y$. Note that $\mbb B_\rho(\Pnom_N)$ is non-empty as it contains at least $\Pnom_N$. Note also that all distributions in $\mbb B_\rho(\Pnom_N)$ can be obtained by reshaping $\Pnom_N$ at a transportation cost of at most~$\rho$. The parameter~$\kappa_\mc A$ represents the transportation cost of changing the sensitive attribute from $A$ to $1-A$, and thus it can be viewed as our trust in~$A$. A similar interpretation applies to $\kappa_\mc Y$. We can now formally introduce the {\em distributionally robust fair} logistic regression model
\be
    \label{eq:dro_fair_training}
        \min_{\beta}~\Sup{\QQ \in \mbb B_\rho (\hat  \PP_N)}~ \EE_{\QQ}[- Y \log (h_\beta(X)) - (1-Y) \log (1 - h_\beta(X))] + \eta \mathds U_f(\QQ, h_\beta),
    \ee
    which minimizes a combination of the expected log-loss and some unfairness measure under the most adverse distribution in $\mbb B_\rho(\Pnom_N)$. Wasserstein ambiguity sets with marginal constraints were first studied in \cite{ref:frogner2019incorporating}, where it was found that restricting the marginals of the outputs and/or the features eliminates unrealistic data distributions from the ambiguity set and often improves the performance of the resulting classifiers while maintaining strong robustness guarantees. We are now ready to prove that~\eqref{eq:dro_fair_training} is tractable if the log-probabilistic equalized opportunities unfairness measure is used and if $\eta$ is sufficiently small.
  
    \begin{theorem}[Distributionally robust fair logistic regression]
    \label{thm:training_refor}
    If $f(z) = \log(z)$ and $\eta \leq \min\{\hat p_{11},\hat p_{01}\}$, then problem~\eqref{eq:dro_fair_training}
    is equivalent to the tractable convex program
    \be
    \nonumber
    \hspace{-1.75mm}
    \begin{array}{c@{~}ll}
         \min  &t  \\
         \st & \beta \in \R^p, \;t\in \R,\; \lambda_0,
         \lambda_1 \in \R_+,\; \mu_0, \mu_1 \in \R^{|\mc A| \times |\mc Y|},\; \nu_0, \nu_1 \in \R^{N}\\[2ex]
         & \|\beta\|_*(1 + \eta r_{0})\leq \lambda_1 ,\quad\|\beta\|_*(1 + \eta r_{1})\leq \lambda_0 \\
         &\rho \lambda_{a'} + \sum\limits_{a \in \mc A,\,y\in \mc Y} \hat p_{a y} \, 
         \mu_{a' a y} + \frac{1}{N}\sum\limits_{i=1}^N \nu_{a' i} \le t\quad \forall a' \in \{0, 1\}\\
         &\hspace{-2mm} \left.
        \begin{array}{l}
          \log(h_\beta(-\hat x_i)) +\kappa_\mc{A} |a - \hat a_i| \lambda_a +\kappa_{\mc Y}|
         \hat y_i|\lambda_{a} +\mu_{a a 0} +\nu_{a i} \ge 0   \\
          \log(h_\beta(-\hat x_i)) + \kappa_\mc{A} |a'-\hat a_i| \lambda_a +\kappa_{\mc Y}|\hat y_i|\lambda_a +\mu_{a a' 0} +\nu_{ai} \ge 0\\
         (1 - \eta r_a) \log(h_\beta(\hat x_i)) + \kappa_\mc{A} |a-\hat a_i| \lambda_a +\kappa_{\mc Y}|1 -\hat y_i|\lambda_a +\mu_{a a 1}+\nu_{ai}\ge 0 \\
         (1+\eta r_{a'}) \log(h_\beta(\hat x_i)) + \kappa_\mc{A} |a'-\hat a_i| \lambda_a +\kappa_{\mc Y}|1 -\hat y_i|\lambda_a +\mu_{a a' 1}+\nu_{ai} \ge 0 
         \end{array}\hspace{-1.5mm}
         \right\}\!\!\! \begin{array}{l}
              \forall i  \in [N],  \\
              \forall a,a' \in\mc A:\\
              a' = 1-a,
         \end{array}
        \end{array}
    \ee
    where $\|\cdot\|_*$ represents the norm dual to $\|\cdot\|$ on $\R^p$.
    \end{theorem}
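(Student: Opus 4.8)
The plan is to follow the same two-step strategy that underlies Theorem~\ref{thm:FLR} --- first linearise the unfairness term, then dualise --- but now the inner maximisation runs over the Wasserstein ball $\mbb B_\rho(\Pnom_N)$ rather than over a single distribution.

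\textbf{Step 1 (linearising the objective).} I would first exploit the marginal constraint built into $\mbb B_\rho(\Pnom_N)$. Since every $\QQ\in\mbb B_\rho(\Pnom_N)$ satisfies $\QQ(A=a,Y=1)=\hat p_{a1}$, the conditional expectations in $\mathds U_f$ can be replaced by unconditional ones: $\EE_\QQ[\log h_\beta(X)\mid A=a,Y=1]=r_a\,\EE_\QQ[\log h_\beta(X)\,\mathbbm 1_{(a,1)}(A,Y)]$, with $r_a=1/\hat p_{a1}$ \emph{constant} over the whole ball. Writing the absolute value in $\mathds U_f(\QQ,h_\beta)$ as a maximum over a sign $s\in\{+1,-1\}$ then turns the bracketed objective of~\eqref{eq:dro_fair_training} into $\max_{s}\EE_\QQ[L_{\beta,s}(X,A,Y)]$, where $L_{\beta,s}$ is the fixed integrand $\ell_\beta(X,Y)+s\eta r_1\log h_\beta(X)\mathbbm 1_{(1,1)}(A,Y)-s\eta r_0\log h_\beta(X)\mathbbm 1_{(0,1)}(A,Y)$. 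Interchanging $\sup_\QQ$ with the finite $\max_s$, problem~\eqref{eq:dro_fair_training} becomes $\min_\beta\max_{s}\,\sup_{\QQ\in\mbb B_\rho(\Pnom_N)}\EE_\QQ[L_{\beta,s}]$, i.e.\ two worst-case expectations coupled by an epigraph variable $t$; the two sign choices will become the two copies indexed by $a\in\{0,1\}$ in the claim.

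\textbf{Step 2 (dualising the worst-case expectation).} For each fixed $s$, the quantity $\sup_{\QQ\in\mbb B_\rho(\Pnom_N)}\EE_\QQ[L_{\beta,s}]$ is an infinite-dimensional linear program over couplings $\pi$ of $\QQ$ with $\Pnom_N$, subject to the Wasserstein budget $\int c\,\diff\pi\le\rho$ and the four marginal equalities $\QQ(A=a,Y=y)=\hat p_{ay}$. I would apply Wasserstein/Kantorovich strong duality, attaching a nonnegative multiplier $\lambda$ to the budget and \emph{free} multipliers $\mu_{ay}$ to the marginal equalities. Decomposing the coupling by conditioning on the $N$ atoms of $\Pnom_N$, the dual collapses to $\inf_{\lambda\ge 0,\mu}\lambda\rho+\sum_{a,y}\hat p_{ay}\mu_{ay}+\tfrac1N\sum_{i}\sup_{(x,\tilde a,\tilde y)}\big[L_{\beta,s}(x,\tilde a,\tilde y)-\lambda c((x,\tilde a,\tilde y),(\hat x_i,\hat a_i,\hat y_i))-\mu_{\tilde a\tilde y}\big]$. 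Introducing $\nu_i$ as the value of the $i$-th inner supremum reproduces exactly the dual variables $\lambda_a,\mu_a,\nu_a$ of the claim.

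\textbf{Step 3 (resolving the inner supremum).} The inner $\sup$ over $(x,\tilde a,\tilde y)$ splits into a maximum over the four discrete pairs $(\tilde a,\tilde y)\in\{0,1\}^2$ --- these give the four constraint families --- and, for each, a continuous $\sup$ over $x\in\R^p$. On each branch $L_{\beta,s}(\cdot,\tilde a,\tilde y)$ is, using $\log h_\beta(x)=-\log(1+\exp(-\beta^\top x))$, a nonnegative multiple of $-\log h_\beta(\pm x)$ and hence convex in $x$ with Lipschitz modulus a constant times $\|\beta\|_*$; the binding modulus is $(1+\eta r_{1-a})\|\beta\|_*$. Whenever $\lambda$ dominates this modulus the concave penalty $-\lambda\|x-\hat x_i\|$ forces the supremum to be attained at $x=\hat x_i$, so it equals $L_{\beta,s}(\hat x_i,\tilde a,\tilde y)$ and the finiteness requirement becomes the norm constraints $\|\beta\|_*(1+\eta r_0)\le\lambda_1$ and $\|\beta\|_*(1+\eta r_1)\le\lambda_0$. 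Substituting $\log h_\beta(-\hat x_i)$ for the $\tilde y=0$ branches and $(1\mp\eta r)\log h_\beta(\hat x_i)$ for the $\tilde y=1$ branches yields precisely the four inequality families, and combining the two sign copies through $t$ gives the stated program.

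\textbf{Main obstacle.} I expect the delicate point to be the strong-duality step: one must certify that the infinite-dimensional primal and its dual have no gap \emph{in the presence of the marginal equality constraints}, which removes the usual clean Slater argument and forces a careful constraint qualification (e.g.\ that $\Pnom_N$ lies in the relative interior determined by $\rho$ and the fixed marginals) or a direct appeal to Kantorovich duality for the conditional sub-problems. The hypothesis $\eta\le\min\{\hat p_{11},\hat p_{01}\}$ is the other place to be careful: it is exactly what keeps every coefficient $1-\eta r_a$ nonnegative, so that each $L_{\beta,s}(\cdot,\tilde a,\tilde y)$ stays convex in $\beta$ and the whole reformulation remains a convex program; I would verify this nonnegativity before invoking convex duality. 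The continuous inner supremum, by contrast, is routine once the convexity and Lipschitz bounds are recorded.
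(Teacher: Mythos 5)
Your proposal follows essentially the same route as the paper's proof: an epigraph reformulation combined with the marginal constraints to turn the conditional expectations into unconditional ones with fixed weights $r_a$, a split of the absolute value into the two worst-case expectations $\mathds T_\beta^{10}$ and $\mathds T_\beta^{01}$, strong semi-infinite duality for the marginally constrained Wasserstein ball (the paper's Proposition~\ref{prop:conic-duality}), and resolution of the inner supremum branch by branch via the conjugacy lemma, including the observation that only the constraint $\|\beta\|_*(1+\eta r_{a'})\le\lambda_a$ binds. The delicate points you flag (strong duality under the marginal equalities, and the role of $\eta\le\min\{\hat p_{11},\hat p_{01}\}$ in keeping $1-\eta r_a\ge 0$ so the conjugacy lemma applies) are exactly the ones the paper handles via its stated duality proposition and Lemma~\ref{lemma:conjugacy_reform}.
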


    Note that the assumption on $\eta$ implies that $\eta r_a=\eta/\hat p_{a1}\leq 1$ for all $a\in\mc A$, and thus it is easy to verify that the reformulation of Theorem~\ref{thm:training_refor} is indeed convex. For many commonly used norms, this reformulation can be addressed with an exponential cone solver such as MOSEK. Alternatively, one may develop customized first-order methods by adapting the algoritghm proposed in~\cite{ref:li2019first} to account for an unfairness measure in the objective.
   
\section{Unfairness Quantification}
\label{sect:quantify}
A regulator may find it difficult to decide whether or not a given classifier is susceptible to discrimination because this decision may critically dependent on the test data at hand. As a remedy, we develop here a method for quantifying the unfairness of a pre-trained probabilistic classifier $h$ under perturbations of the test distribution, and we propose a systematic approach to decide whether this classifier is fair or not. To this end, we first define the worst (highest) and best (lowest) possible unfairness levels of the classifier $h$ across all distributions in a Wasserstein ambiguity set of the form $\mbb B_\rho(\Pnom_N)$ as 
    \be \nonumber
        \textstyle \overline{\mathds U}_f = \sup_{\QQ \in \mbb B_\rho(\Pnom_N)} \mathds U_f(\QQ, h)\quad \text{ and } \quad \underline{\mathds U}_f = \inf_{\QQ \in \mbb B_\rho(\Pnom_N)} \mathds U_f(\QQ, h),
    \ee
respectively. Here, by slight abuse of notation, $\Pnom_N$ should be interpreted as the discrete uniform distribution on $N$ {\em test samples} $\{(\hat x_i, \hat a_i, \hat y_i)\}_{i=1}^N$ drawn independently from~$\PP$.

The first main result of this section is to show that both $\overline{\mathds U}_f$ and $\underline{\mathds U}_f$ can be re-expressed in terms of the optimal values of two highly scalable linear programs when 
$f(z) = \mathbbm 1_{\{z\ge \tau\}}$, that is, when unfairness is measured with respect to the standard equalized opportunities criterion. Thus, there is no need to resort to approximations involving log-probabilities.

To see this, we define $\mc X_0=\{x\in\mc X:h(x)<\tau\}$ and $\mc X_1=\{x\in\mc X:h(x)\ge\tau\}$, and we set
   \be \label{eq:V_f-def}
    \VV(a, a') = \sup\limits_{\QQ \in \mbb B_\rho(\hat \PP_N)} \QQ [X\in\mc X_1 | A=a, Y=1] -\QQ [X\in\mc X_1| A=a', Y=1]\quad \forall a,a'\in\mc A. 
    \ee
    In addition, we define $d_{yi} = \inf_{x \in \mc X_y} \| x - \wh x_i\|$ for all $y\in\mc Y$ and $i\in[N]$
as the distances of the testing features $\hat x_i$ to the sets $\mc X_y$. Our ability to quantify the fairness of $h$ will critically depend on whether $d_{yi}$ can be computed efficiently. For linear classifiers the sets $\mc X_1$ and $\mc X_0$ constitute half-spaces, and therefore $d_{yi}$ can be computed in closed form. For more complicated classifiers such as neural networks, however, one may have to resort to heuristics to estimate $d_{yi}$. Using this notation, we can state the following main result.

\begin{theorem}[Unfairness quantification] \label{thm:quantify-EO}
If $f(z) = \mathbbm 1_{\{z\ge \tau\}}$, then we obtain $\overline{\mathds U}_f= \max\{\VV(1, 0), \VV(0,1)\}$  and $\underline{\mathds U}_f = - \min\{0, \VV(1, 0), \VV(0, 1)\}$, where $\VV(a,a')$ can be computed for all $a,a'\in\mc A$ with $a\neq a'$ as the optimal value of a tractable linear program, that is,
        \be
        \nonumber 
        \VV(a,a')=\left\{ \begin{array}{cll}
        \min & \rho \lambda + \wh p^\top \mu + N^{-1} \mathbf{1}^\top \nu \\
        \st & \lambda \in \R_+,~\mu \in \R^{2\times 2},~\nu \in \R^N \\
        & \hspace{-2mm} \left.
        \begin{array}{l}
        \nu_i + \kappa_{\mc A} |a - \wh a_i | \lambda + \kappa_{\mc Y} | \wh y_i | \lambda + \mu_{a0} \ge 0 \\
        \nu_i + \kappa_{\mc A} |a' - \wh a_i | \lambda + \kappa_{\mc Y} | \wh y_i | \lambda + \mu_{a'0} \ge 0 \\
        \nu_i + \kappa_{\mc A} |a - \wh a_i | \lambda + \kappa_{\mc Y} |1- \wh y_i | \lambda + \mu_{a1} \ge 0 \\
        \nu_i + d_{1i} \lambda + \kappa_{\mc A} |a - \wh a_i | \lambda + \kappa_{\mc Y} |1- \wh y_i | \lambda + \mu_{a1} \ge r_a \\
        \nu_i +  d_{0i} \lambda + \kappa_{\mc A} |a' - \wh a_i | \lambda + \kappa_{\mc Y} |1- \wh y_i | \lambda + \mu_{a'1} \ge 0\\ 
         \nu_i  + \kappa_{\mc A} |a' - \wh a_i | \lambda + \kappa_{\mc Y} |1- \wh y_i | \lambda + \mu_{a'1} \ge - r_{a'}
         \end{array} \right\} ~\forall i \in [N].
        \end{array}\right.
    \ee
\end{theorem}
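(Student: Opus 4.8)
The plan is to prove the result in two stages: first reduce the extremal unfairness levels $\overline{\mathds U}_f$ and $\underline{\mathds U}_f$ to the two quantities $\VV(1,0)$ and $\VV(0,1)$, and then reformulate each $\VV(a,a')$ as the displayed linear program via Wasserstein duality. For the first stage, note that $f(z)=\mathbbm 1_{\{z\ge\tau\}}$ gives $f(h(X))=\mathbbm 1_{\{X\in\mc X_1\}}$, so, writing $g_a(\QQ)=\QQ[X\in\mc X_1\mid A=a,Y=1]$, we have $\mathds U_f(\QQ,h)=|g_1(\QQ)-g_0(\QQ)|$. Since every $\QQ\in\mbb B_\rho(\Pnom_N)$ satisfies the fixed marginal constraint $\QQ(A=a,Y=1)=\hat p_{a1}$, each conditional probability $g_a(\QQ)=r_a\,\QQ[X\in\mc X_1,A=a,Y=1]$ is an affine functional of $\QQ$, and $\mbb B_\rho(\Pnom_N)$ is convex. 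Using $\sup\max\{\cdot,\cdot\}=\max\{\sup,\sup\}$ I would immediately obtain $\overline{\mathds U}_f=\max\{\VV(1,0),\VV(0,1)\}$.

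For $\underline{\mathds U}_f$ I would observe that the affine map $\QQ\mapsto g_1(\QQ)-g_0(\QQ)$ sends the convex set $\mbb B_\rho(\Pnom_N)$ onto an interval, which (by definition of $\VV$) is exactly $[-\VV(0,1),\VV(1,0)]$, using $\inf_\QQ(g_1-g_0)=-\sup_\QQ(g_0-g_1)=-\VV(0,1)$. Minimising $|v|$ over this interval yields $0$ when the interval straddles the origin and the nearest endpoint otherwise; a short case analysis on the signs of $\VV(1,0)$ and $\VV(0,1)$ confirms that this equals $-\min\{0,\VV(1,0),\VV(0,1)\}$.

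For the second stage, I would fix $a\neq a'$ and write $\VV(a,a')=\sup_{\QQ\in\mbb B_\rho(\Pnom_N)}\EE_\QQ[\psi]$, where $\psi(x,\alpha,y)=r_a\mathbbm 1_{\{x\in\mc X_1,\,\alpha=a,\,y=1\}}-r_{a'}\mathbbm 1_{\{x\in\mc X_1,\,\alpha=a',\,y=1\}}$ is bounded, taking values in $\{-r_{a'},0,r_a\}$. Because $\Pnom_N$ is uniform on the $N$ atoms $\hat\xi_i=(\hat x_i,\hat a_i,\hat y_i)$, any coupling factorises as $\pi=\tfrac1N\sum_i\delta_{\hat\xi_i}\otimes\QQ_i$ for conditional probability measures $\QQ_i$, so the Wasserstein and marginal constraints become $\tfrac1N\sum_i\EE_{\QQ_i}[c(\hat\xi_i,\cdot)]\le\rho$ and $\tfrac1N\sum_i\QQ_i(A=a,Y=y)=\hat p_{ay}$. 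This exhibits the primal as a linear program over the $\QQ_i$, which I would dualise by attaching a multiplier $\lambda\in\R_+$ to the Wasserstein budget, multipliers $\mu\in\R^{2\times2}$ to the four marginal constraints, and epigraph variables $\nu_i$ to the per-sample inner maximisations, giving
\[
\VV(a,a')=\min\Big\{\rho\lambda+\hat p^\top\mu+\tfrac1N\mathbf 1^\top\nu \;:\; \nu_i\ge\sup_{\xi}\big[\psi(\xi)-\lambda\, c(\hat\xi_i,\xi)-\mu_{A(\xi)Y(\xi)}\big]\ \ \forall i\in[N]\Big\},
\]
where $A(\xi),Y(\xi)$ denote the sensitive- and label-components of $\xi$.

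It then remains to evaluate the inner supremum. For each $(\alpha,y)\in\mc A\times\mc Y$ I would split the supremum over $x$ into $x\in\mc X_1$ and $x\in\mc X_0$; since $\lambda\ge0$, $\sup_{x\in\mc X_y}[\text{const}-\lambda\|x-\hat x_i\|]=\text{const}-\lambda d_{yi}$ with $d_{yi}=\inf_{x\in\mc X_y}\|x-\hat x_i\|$. Enumerating the four pairs $(\alpha,y)\in\{a,a'\}\times\{0,1\}$ (note $\{a,a'\}=\mc A$) over the two regions produces exactly the six inequalities in the statement: the $y=0$ cases collapse to the first two constraints because $\psi\equiv0$ there and $\min\{d_{0i},d_{1i}\}=0$, while $(\alpha,y)=(a,1)$ and $(a',1)$ yield, region by region, the right-hand sides $r_a,0$ and $0,-r_{a'}$. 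I expect the main obstacle to be twofold: justifying strong duality for the marginally constrained Wasserstein worst-case expectation (which I would obtain from the discreteness of $\Pnom_N$ together with the boundedness of $\psi$, ensuring a finite dual and no duality gap), and verifying that the distance terms may be suppressed in precisely the branches shown. The latter rests on the elementary but essential fact that $\hat x_i\in\mc X_0\cup\mc X_1$ forces $\min\{d_{0i},d_{1i}\}=0$, so each suppressed branch is dominated at the optimum; confirming that this strengthening leaves the optimal value unchanged is the delicate bookkeeping step.
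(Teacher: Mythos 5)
Your proposal is correct and follows essentially the same route as the paper: reduce $\overline{\mathds U}_f$ and $\underline{\mathds U}_f$ to $\VV(1,0)$ and $\VV(0,1)$ via the linearity of $\QQ\mapsto r_a\,\QQ[X\in\mc X_1,A=a,Y=1]$ under the fixed marginals, dualize the marginally constrained Wasserstein supremum, and evaluate the inner supremum case by case, using $\min\{d_{0i},d_{1i}\}=0$ to drop the redundant distance terms. The only difference is that you sketch the derivation of the strong duality from the coupling decomposition, whereas the paper imports it as Proposition~\ref{prop:conic-duality} (stated under $\rho>0$ and $\hat p_{ay}\in(0,1)$, which you should note as the implicit regularity conditions).
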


The bounds on the unfairness measure related to equalized opportunities can be computed even faster if we have absolute trust in $A$ and $Y$, that is, if $\kappa_\mc A=\kappa_\mc Y=\infty$.
To see this, we select 
$\wh x_i\opt \in \arg\min_{x_i \in \partial\mc X_1} \| x_i - \wh x_i \|$ and we assume for the simplicity of exposition that $\|\hat x_i - \hat x_i^\star\| > 0$ for all $i \in [N]$. 
    We define non-negative rewards and weights through
\[
        (c_{aa'i}, w_{aa'i}) = \begin{cases}
        (r_a, d_{1i}) & \text{if } \wh x_i\in \text{int}(\mc X_{ 0}),\;\wh a_i = a,\; \wh y_i = 1, \\
        (r_{a'}, d_{0i}) &\text{if } \wh x_i\in \text{int}(\mc X_{1}),\;\wh a_i = a',\; \wh y_i = 1, \\ (0, +\infty) & \text{otherwise}
        \end{cases}
\]
for all $a,a'\in\mc A$ and $i\in[N]$. In addition, we introduce the notational shorthand 
\[
\hat \VV(a, a') =  \Pnom_N [X\in\mc X_1 | A=a, Y=1] -\Pnom_N [X\in\mc X_1| A=a', Y=1]\quad \forall a,a'\in\mc A,
\]
which can be evaluated by computing a finite sum. We can then prove the following theorem.

    \begin{theorem}[Absolute trust in $A$ and $Y$] \label{thm:quantify-EO-infinite}
         If $f(z) = \mathbbm 1_{\{z\ge \tau\}}$ and $\kappa_{\mc A} = \kappa_{\mc Y} = \infty$, then
        \begin{equation}
        \label{eq:quantify-EO-infinite-fin}
            \mathds V(a,a') = 
                \hat{\mathds V}(a, a') + \max_{z\in [0,1]^N}\left\{\ds \frac{1}{N}\sum_{i \in [N]} c_{aa'i} z_i \;:\;
            \frac{1}{N}\sum\limits_{i \in [N] } w_{aa'i} z_i\leq \rho \right\}\quad \forall a,a'\in\mc A.
     \end{equation}
    \end{theorem}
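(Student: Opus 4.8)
The plan is to establish the result directly from the definition of $\VV(a,a')$ in \eqref{eq:V_f-def}, exploiting the special structure of the Wasserstein ball when $\kappa_\mc A=\kappa_\mc Y=\infty$, rather than specializing the linear program of Theorem~\ref{thm:quantify-EO} and passing to the limit $\kappa\to\infty$ in its dual. First I would observe that, because the ground metric~\eqref{eq:cost} assigns infinite cost to any displacement in the $A$- or $Y$-coordinate, every $\QQ\in\mbb B_\rho(\Pnom_N)$ must be reachable from $\Pnom_N$ by a transport plan that leaves the label pair $(A,Y)$ of each atom unchanged. Together with the marginal constraints $\QQ(A=a,Y=y)=\hat p_{ay}$, this means a feasible $\QQ$ is obtained by keeping the $1/N$ mass of every atom inside its own $(\hat a_i,\hat y_i)$-cell and relocating only its feature component, at total cost $\tfrac1N\sum_i\|x_i'-\hat x_i\|\le\rho$ over the chosen target locations.

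Next I would rewrite the objective. Since $\QQ(A=a,Y=1)=\hat p_{a1}$ is fixed, the conditional probabilities in \eqref{eq:V_f-def} become $r_a\,\QQ[X\in\mc X_1,A=a,Y=1]$ and $r_{a'}\,\QQ[X\in\mc X_1,A=a',Y=1]$, so $\VV(a,a')$ maximizes $r_a$ times the mass that cell $(a,1)$ places in $\mc X_1$ minus $r_{a'}$ times the mass that cell $(a',1)$ places in $\mc X_1$. Because the objective depends on a relocated atom only through the indicator of whether its target lies in $\mc X_1$, relocating mass within $\mc X_0$ or within $\mc X_1$ merely wastes budget, and relocating mass in any cell other than $(a,1)$ or $(a',1)$ has no effect at all. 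Hence an optimal $\QQ$ moves mass only to (i) push atoms of cell $(a,1)$ that start in $\mathrm{int}(\mc X_0)$ across the boundary into $\mc X_1$, at minimal per-unit cost $d_{1i}$, and (ii) pull atoms of cell $(a',1)$ that start in $\mathrm{int}(\mc X_1)$ into $\mc X_0$, at minimal per-unit cost $d_{0i}$. Introducing $z_i\in[0,1]$ for the fraction of atom $i$'s mass pushed across the boundary, the gain over the empirical value $\hat\VV(a,a')$ equals $\tfrac1N\sum_i c_{aa'i}z_i$ and the consumed budget equals $\tfrac1N\sum_i w_{aa'i}z_i$, the $(0,+\infty)$ assignment in the definition of $(c_{aa'i},w_{aa'i})$ encoding exactly the atoms that must not be moved. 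This yields \eqref{eq:quantify-EO-infinite-fin}.

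The main obstacle is the rigorous reduction from the infinite-dimensional optimization over $\QQ$ to the finite continuous-knapsack problem, and two points require care. First, I must justify that splitting an atom's mass and placing the moved portion arbitrarily close to its nearest boundary point $\hat x_i^\star$ is both feasible and cost-optimal, so that the infimal distance $d_{yi}$ is the correct per-unit cost and fractional $z_i$ are genuinely attainable; here the assumption $\|\hat x_i-\hat x_i^\star\|>0$ ensures the boundary points are separated from the atoms and rules out degenerate zero-cost crossings. Second, I must verify the exchange argument that no optimal solution ever relocates mass outside the two reward categories—formally, that any feasible $\QQ$ can be modified without increasing transport cost or decreasing the objective until it attains this canonical form—which is where the closedness of $\mc X_1$ and the disjointness of the cells $(a,1)$ and $(a',1)$, guaranteed by $a\neq a'$, enter. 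Once the canonical form is established, the remaining step is the elementary identification of the objective and budget with the stated linear functionals of $z$.
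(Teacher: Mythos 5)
Your route is genuinely different from the paper's. The paper never argues on the primal side: it specializes the linear program of Theorem~\ref{thm:quantify-EO} (itself obtained from the semi-infinite strong duality of Proposition~\ref{prop:conic-duality}) to $\kappa_{\mc A}=\kappa_{\mc Y}=\infty$, eliminates the multipliers $\mu_{a0}$ that become determined, and then takes the LP dual a second time; the knapsack in~\eqref{eq:quantify-EO-infinite-fin} emerges from that second dual after the change of variables $\gamma\leftarrow z/N$. You instead characterize near-optimal transport plans directly: infinite $\kappa$ freezes the $(A,Y)$-cells, only the cells $(a,1)$ and $(a',1)$ carry reward, and an exchange argument reduces everything to how much mass each atom pushes across the decision boundary. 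This is more transparent and explains \emph{why} a knapsack appears, where the paper's derivation is mechanical; the price is that you must control the infinite-dimensional optimization over $\QQ$ by hand, which the duality route sidesteps entirely.

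The one point where your write-up is not yet a proof is the treatment of the $(a',1)$-atoms. The target set for those moves is $\mc X_0=\{x:h(x)<\tau\}$, which is open, so $d_{0i}=\inf_{x\in\mc X_0}\|x-\hat x_i\|$ is not attained and there is no ``optimal $\QQ$'' realizing a ``minimal per-unit cost $d_{0i}$''; the paper states explicitly, just before Proposition~\ref{prop:extreme-infty}, that the supremum in~\eqref{eq:V_f-def} is not attained. Your canonical-form argument, phrased as a property of an optimizer, must therefore be split into two one-sided claims: (i) an upper bound valid for \emph{every} feasible $\QQ$ --- any optimal coupling with the discrete $\Pnom_N$ disintegrates into per-atom conditional distributions $K_i$ supported on the fixed $(\hat a_i,\hat y_i)$-fiber, and with $z_i$ the mass $K_i$ places in the opposite class the gain over $\hat\VV(a,a')$ is at most $\frac1N\sum_i c_{aa'i}z_i$ while the cost is at least $\frac1N\sum_i w_{aa'i}z_i$ --- and (ii) a matching lower bound obtained by a limiting construction that moves mass to points of $\mc X_0$ at distance $d_{0i}+\delta$ and lets $\delta\downarrow 0$ (the $(a,1)$-direction is exact because $\mc X_1$ is closed). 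You gesture at ``arbitrarily close'' placements but then assert that the fractional $z_i$ are ``genuinely attainable,'' which is false on the $\mc X_0$ side. Once the two bounds are separated the argument goes through; note also that the standing assumption $\|\hat x_i-\hat x_i^\star\|>0$ is genuinely needed here, since for a boundary atom in cell $(a',1)$ the $(0,+\infty)$ convention would otherwise miss a reward obtainable at asymptotically zero cost.
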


    Theorem~\ref{thm:quantify-EO-infinite} asserts that evaluating $\mathds V(a, a')$ is tantamount to solving a continuous knapsack problem in $N$ variables, which can be solved by a greedy heuristics in time $\mc O(N \log N)$.

    It is instructive to study the worst- and best-case distributions that determine~$\overline{\mathds U}_f$ and~$\underline{\mathds U}_f$. By Theorem~\ref{thm:quantify-EO}, these extremal distributions can be constructed from the extremal distributions that determine~$\mathds V(1, 0)$ and~$\mathds V(0,1)$. As the objective function of~\eqref{eq:V_f-def} represents a conditional expectation of a discontinuous integrand that fails to be upper semi-continuous, however, the supremum in~\eqref{eq:V_f-def} is not attained. We thus construct suboptimal distributions that attain the supremum of~\eqref{eq:V_f-def} asymptotically. For linear classifiers, the projections $\hat x^\star_{i}$ of the test samples to the decision boundary may be constructed analytically. For more sophisticated classifiers, however, they may have to be approximated using heuristic methods.
    \begin{proposition}[Extremal distributions]  \label{prop:extreme-infty}
    If $f(z) = \mathbbm 1_{\{z\ge \tau\}}$, $\kappa_{\mc A} = \kappa_{\mc Y} = \infty$ and $z^\star$ is a maximizer of the linear program in~\eqref{eq:quantify-EO-infinite-fin} for some fixed $a, a'\in \mc A$, then 
    \be \notag
    \QQ\opt=\frac{1}{N} \left( 
            \textstyle \sum_{i=1}^N z_i\opt \delta_{(\wh x_i\opt, \wh a_i, \wh y_i)} + \sum_{i=1}^N (1-z_i\opt) \delta_{(\wh x_i, \wh a_i, \wh y_i)}
        \right),
    \ee
    is feasible in~\eqref{eq:V_f-def}, and for any $\eps >0$ there exists $\QQ\opt_\eps\in \mbb B_\eps(\QQ\opt)$ that is $\eps$-suboptimal in~\eqref{eq:V_f-def}.
    \end{proposition}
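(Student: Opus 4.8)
\emph{Proof idea.} The plan is to exhibit explicit label-preserving transport plans that realize the knapsack value of Theorem~\ref{thm:quantify-EO-infinite}, and to resolve the non-attainment of the supremum in~\eqref{eq:V_f-def} by a vanishing perturbation across the decision boundary. The distribution $\QQ\opt$ is the push-forward of $\Pnom_N$ under the plan that, for every $i\in[N]$, transports the fraction $z_i\opt$ of the unit atom $\tfrac1N\delta_{(\wh x_i,\wh a_i,\wh y_i)}$ to $(\wh x_i\opt,\wh a_i,\wh y_i)$ and leaves the fraction $1-z_i\opt$ in place, keeping the label pair $(\wh a_i,\wh y_i)$ untouched throughout.

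First I would verify $\QQ\opt\in\mbb B_\rho(\Pnom_N)$. Since the plan never alters $(A,Y)$, the $(A,Y)$-marginal of $\QQ\opt$ coincides with that of $\Pnom_N$, so the constraints $\QQ\opt(A=a,Y=y)=\wh p_{ay}$ hold automatically. For the Wasserstein constraint, whenever $c_{aa'i}>0$ the target $\wh x_i\opt$ is the projection of $\wh x_i$ onto $\partial\mc X_1$, so the per-unit cost $\|\wh x_i-\wh x_i\opt\|$ equals $w_{aa'i}\in\{d_{0i},d_{1i}\}$; and whenever $c_{aa'i}=0$ the weight $w_{aa'i}=+\infty$ forces $z_i\opt=0$ in the knapsack, so no mass is moved. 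Because $\kappa_\mc A=\kappa_\mc Y=\infty$, the ground metric~\eqref{eq:cost} restricted to label-preserving moves reduces to $\|\cdot\|$, and this plan certifies $\Wass(\QQ\opt,\Pnom_N)\le\tfrac1N\sum_i w_{aa'i}z_i\opt\le\rho$, the last inequality being the knapsack budget constraint. Hence $\QQ\opt$ is feasible in~\eqref{eq:V_f-def}.

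Next I would evaluate the objective of~\eqref{eq:V_f-def} at $\QQ\opt$. The atoms with $\wh x_i\in\text{int}(\mc X_0)$, $\wh a_i=a$, $\wh y_i=1$ are moved onto $\partial\mc X_1\subseteq\mc X_1$, each raising $\QQ[X\in\mc X_1\mid A=a,Y=1]$ by $r_a z_i\opt/N$, so the first conditional increases by exactly the $a$-sum of the knapsack objective. The atoms with $\wh x_i\in\text{int}(\mc X_1)$, $\wh a_i=a'$, $\wh y_i=1$, however, are moved only as far as the boundary $\partial\mc X_1$, which still lies in the \emph{closed} set $\mc X_1$; consequently the second conditional is unchanged, and $\QQ\opt$ falls short of $\VV(a,a')$ by precisely $\tfrac{r_{a'}}N\sum z_i\opt$ over this index set. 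This gap manifests the openness of $\mc X_0$, i.e.\ the failure of $\mathbbm 1_{\mc X_1}$ to be upper semi-continuous from the $\mc X_0$ side, which is why the supremum is not attained.

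Finally I would construct $\QQ\opt_\eps$ by nudging exactly these boundary atoms a distance $\theta>0$ into $\text{int}(\mc X_0)$: replace the target $\wh x_i\opt$ by $\wh x_i\opt+\theta u_i$ with $u_i$ a unit vector pointing into $\mc X_0$, and transport the reduced fraction $\tilde z_i=z_i\opt d_{0i}/(d_{0i}+\theta)\le z_i\opt$, which is well defined since $d_{0i}=\|\wh x_i-\wh x_i\opt\|>0$ by the standing non-degeneracy assumption. This keeps the budget for these atoms at $\tfrac1N\sum\tilde z_i(d_{0i}+\theta)=\tfrac1N\sum z_i\opt d_{0i}$, so $\QQ\opt_\eps$ stays in $\mbb B_\rho(\Pnom_N)$, while the moved mass now lands strictly inside $\mc X_0$ and recovers the reward $\tfrac{r_{a'}}N\sum\tilde z_i$ in the objective. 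Letting $\theta\to0$ yields $\tilde z_i\to z_i\opt$, whence the objective tends to $\VV(a,a')$ and $\Wass(\QQ\opt_\eps,\QQ\opt)\to0$; choosing $\theta$ small enough delivers $\QQ\opt_\eps\in\mbb B_\eps(\QQ\opt)$ that is $\eps$-suboptimal. I expect the main obstacle to be exactly this boundary budget bookkeeping: one must push the $(a',1)$-atoms strictly across $\partial\mc X_1$ while respecting the radius $\rho$, which is feasible only after shrinking the transported fractions, and the argument rests on the projection distances $w_{aa'i}$ coinciding with the knapsack weights $d_{yi}$.
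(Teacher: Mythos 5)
Your proof is correct and follows the same overall strategy as the paper's: feasibility of $\QQ\opt$ via the label-preserving transport plan whose cost is exactly the knapsack budget $\frac{1}{N}\sum_i w_{aa'i}z_i\opt\le\rho$, identification of the residual gap $\frac{r_{a'}}{N}\sum z_i\opt$ over the $(a',1)$-atoms caused by $\partial\mc X_1\subseteq\mc X_1$, and a vanishing perturbation that pushes those atoms strictly into $\mc X_0$. The one place where you genuinely diverge is the construction of $\QQ\opt_\eps$. You pay for the extra travel distance $\theta$ past the boundary by shrinking the transported fraction to $\tilde z_i=z_i\opt d_{0i}/(d_{0i}+\theta)$, leaving the remainder at $\wh x_i$, so the budget identity $\tilde z_i(d_{0i}+\theta)=z_i\opt d_{0i}$ holds exactly and both the objective gap and $\Wass(\QQ\opt_\eps,\QQ\opt)$ are $O(\theta)$. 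The paper instead keeps the full mass $z_i\opt$ in transit but splits it: a fraction $\theta$ is placed at a point of $\mc X_0$ within $\epsilon_0$ of $\wh x_i\opt$, while the fraction $1-\theta$ is pulled back to a point $\wh x_{1i}\in\mc X_1$ at distance at most $\|\wh x_i\opt-\wh x_i\|-\epsilon_2$ from $\wh x_i$, the saved cost $(1-\theta)\epsilon_2$ financing the overshoot $\theta\epsilon_0$. Your version is arguably cleaner bookkeeping (one parameter instead of the paper's $\theta,\epsilon_0,\epsilon_1,\epsilon_2$), at the price of $\QQ\opt_\eps$ having some mass back at the original $\wh x_i$ rather than all of it near the boundary; both yield the same conclusion. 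One small imprecision: for a general (non-linear) classifier there need not be a ``unit vector $u_i$ pointing into $\mc X_0$'' such that $\wh x_i\opt+\theta u_i\in\mathrm{int}(\mc X_0)$; you should instead, as the paper does, pick an arbitrary point of $\mc X_0\cap\{x:\|x-\wh x_i\opt\|\le\theta\}$, which is non-empty because $\wh x_i\opt\in\partial\mc X_1$ — the triangle-inequality cost bound $d_{0i}+\theta$ is unaffected.
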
 
    Note that $\QQ\opt$ is in general strictly suboptimal in~\eqref{eq:V_f-def}, but every neighborhood of $\QQ\opt$ contains $\eps$-suboptimal distributions $\QQ\opt_\eps$ for any $\eps>0$. In principle, $\QQ\opt_\eps$ can be constructed explicitly from $\QQ\opt$. However, the construction is cumbersome and therefore omitted. 
    
    The unfairness quantification procedure of this section can be used to devise a hypothesis test that checks whether a given classifier $h$ is fair with respect to the equalized opportunities criterion. By definition, $h$ is fair if $\mathds U_f(\PP, h) = 0$, where $f(z) = \mathbbm 1_{\{z\ge \tau\}}$ and~$\PP$ is the unknown true distribution of $(X, A, Y)$. If $\mc F = \{ \QQ \in \mc M: \mathds U_f(\QQ, h) = 0\}$ represents the family of all distributions under which~$h$ is fair, then testing for fairness is equivalent to testing whether the true distribution~$\PP$ belongs to~$\mc F$. This can be expressed formally as a hypothesis testing problem with the null hypothesis $\mathrm{H}_0:\PP \not\in \mc F$ and the alternative hypothesis $\mathrm{H}_1:\PP \in \mc F$. Given the empirical distribution $\Pnom_N$ on the test data, the proposed hypothesis test rejects~$\text{H}_0$ whenever $\wh \rho = \inf_{\QQ \in \mc F} \Wass(\Pnom_N, \QQ) > s$, 
    where $s$ represents a test statistic. The distance between $\Pnom_N$ and $\mc F$ can be expressed as $\wh \rho = \inf\{ \rho: \inf_{\QQ \in \mbb B_\rho(\Pnom_N)} \mathds U_f(\QQ, h) = 0\}$, 
    where the outer minimization problem can be solved efficiently by bisection over $\rho \ge 0$, while the inner unfairness quantification problem can be solved by the linear programming techniques developed in this section. It remains to compute the test statistic $s$, which could be obtained by a subsampling procedure~\cite{ref:politis1999subsampling}. We leave this for future research.
    \section{Numerical Experiments}
    \label{sec:numerical}
     Below we refer $\mathds U_f(\PP, h)$ as the deterministic unfairness (Det-UNF) if $f(z) = \mathbbm{1}_{\{z \geq \tau\}}$, the probabilistic unfairness (Prob-UNF) if $f(z) = z$ and the log-probabilistic unfairness (LogProb-UNF) if $f(z) = 1+\log(z)$. Details regarding the setup of the experiments such as the data generation procedure and parameter selection etc.\ are relegated to Appendix~\ref{sect:further_disc_numerical}.
 
  \begin{wrapfigure}{r}{.38\textwidth}
  \vspace{-.8cm}
    \begin{minipage}{\linewidth}
    \centering\captionsetup[subfigure]{justification=centering}
    \includegraphics[width=.95\linewidth]{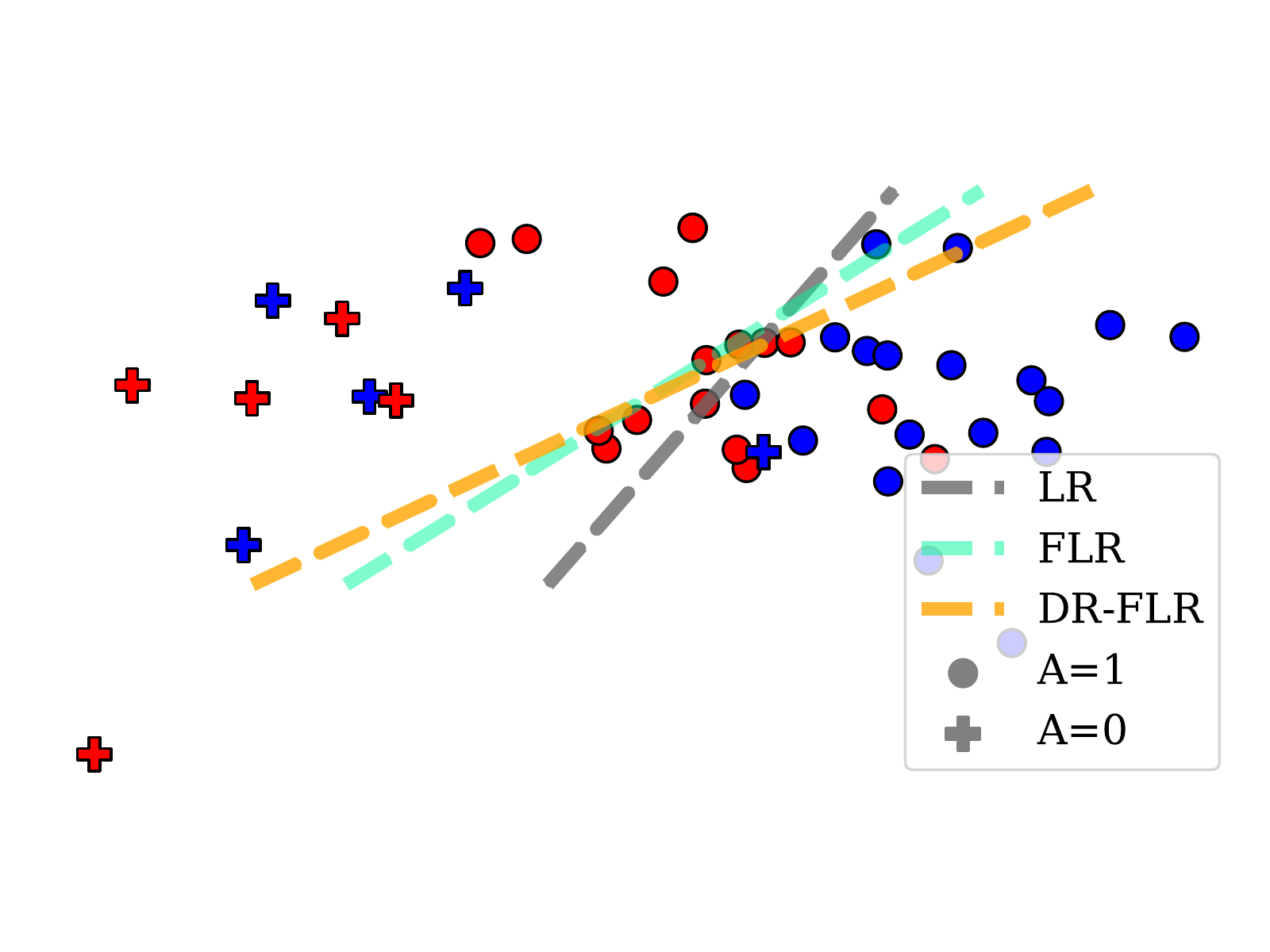}
    \vspace{-0.7cm}\par\vfill
  \includegraphics[width=.95\linewidth]{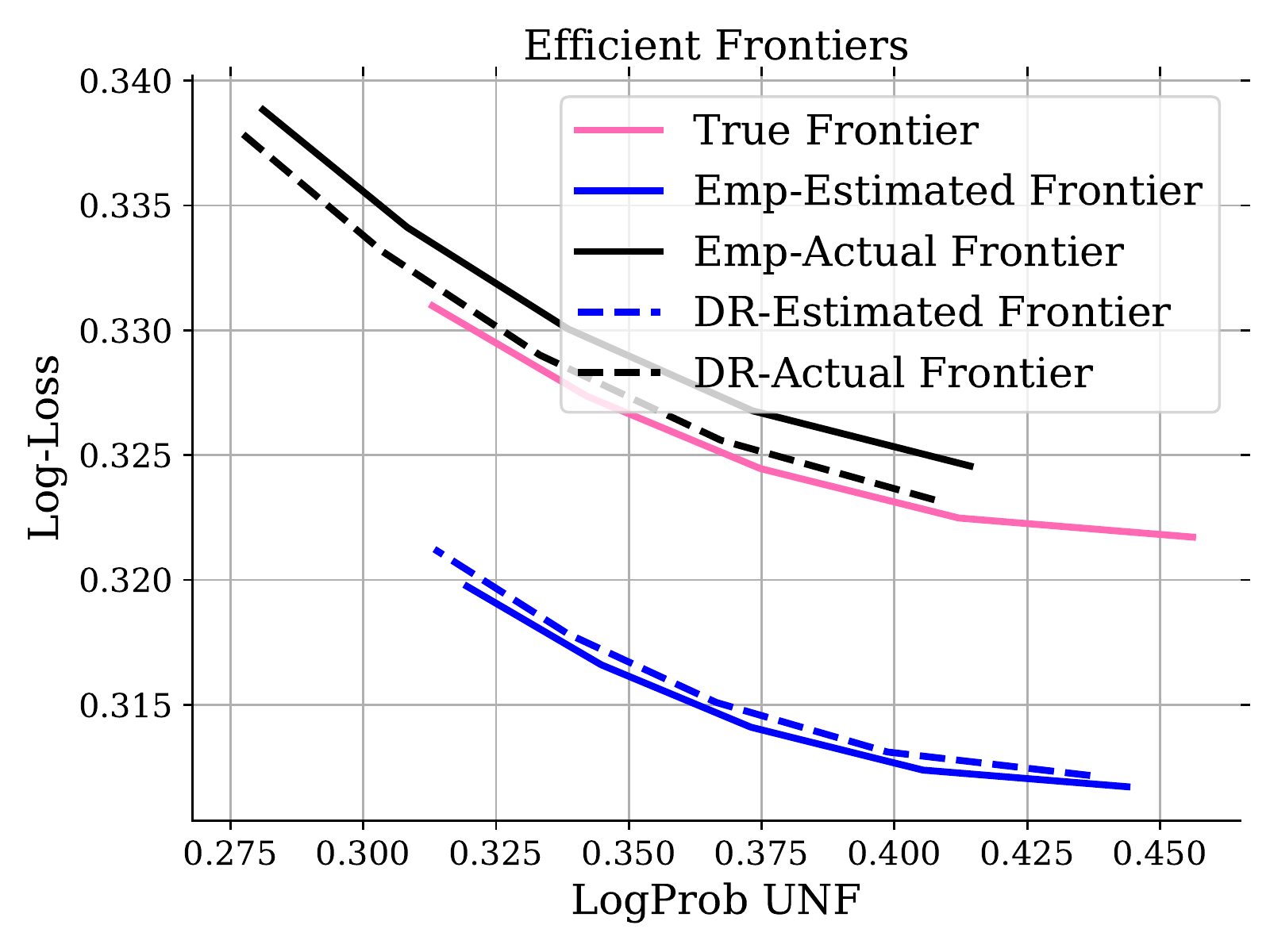}
  \vspace{-.1cm}
  \end{minipage}
  \vspace{-.1cm}
\caption{Classification boundaries (top), Pareto Frontiers (bottom)}\label{fig:decision_boundaries_frontiers}
\vspace{-.6cm}
\end{wrapfigure}
 \noindent\textbf{Synthetic Experiments.} To show the effects of the unfairness penalty and the robustification, we compare the classical, fair and distributionally robust fair logistic regression models (LR, FLR and DR-FLR, respectively) on a dataset with $N=25$ training samples and $p=2$ features. As the sensitive attribute $A$ strongly correlates with $X_1$, fair classifiers assign low weight to $X_1$, which leads to horizontal decision boundaries. Penalizing unfairness with $\eta=0.1$ and robustifying the model with a Wasserstein radius of $\rho = 0.05$ ostensibly increases the fairness of the classifier, see Figure~\ref{fig:decision_boundaries_frontiers} (top). Compared to the LR classifier, the DR-FLR classifier lowers Det-UNF from 0.86 to 0.58 at the expense of reducing the accuracy from $69\%$ to $62\%$.

    The fair logistic regression model~\eqref{eq:dro_fair_training} constitutes a bi-criteria optimization problem that simulataneously minimizes the log-loss and the log-probabilistic unfairness. 
    It is thus reminiscent of the Markowitz mean-variance model that seeks an optimal trade-off between the risk and return of an investment portfolio. The optimal classifers for different values of~$\eta$ trace out a Pareto frontier in the unfairness/loss plane. 
    Following~\cite{ref:broadie1993computing}, we can now distinguish true, estimated and actual Pareto frontiers. The true frontier is obtained by training and evaluating the classifier under the (unknown) true distribution, while the estimated and actual frontiers are obtained by training the classifier on the training dataset and evaluating it on the training and testing datasets, respectively. It is known that the estimated frontier optimistically underestimates and the actual frontier pessimistically overestimates the true frontier on average \cite{ref:broadie1993computing}. It has also been argued that robustifying a bi-criteria model tends to move the actual and estimated frontiers closer to each other as well as closer to the true frontier \cite{ref:martin2010robust}, thus improving out-of-sample performance.
    Figure~\ref{fig:decision_boundaries_frontiers} (bottom) visualizes this effect for a synthetic dataset, where the sensitive attributes correlate with the labels.
\begin{center}
\vspace{-0.7cm}
\begin{table*}
\setlength{\tabcolsep}{0.05cm}
\tiny
\centering
\begin{tabular}{|l|l|c|c|c|c|c|c|}
\hline
Dataset & Metric
& {LR} 
&{FLR} 
&{DOB${}^+$\cite{ref:donini2018empirical}}&{ZVRG \cite{ref:zafar2017fairness}}
&{DR-FLR}\\
\hline
\hline
\multirow{4}{*}{Drug}
    & Accuracy& 
    $0.78 {\pm} 0.01$&
    $0.78 \pm 0.01$ &
    $0.78 \pm 0.01$ &
    $\mathbf{0.79} \pm \mathbf{0.01}$&
    $ 0.78 \pm 0.00$\\
    & Det-UNF &
    $0.08 \pm 0.06$ &
    $0.08 \pm 0.05$ &
    $0.10 \pm 0.09$ &
    $0.48 \pm 0.09$ &
    $\mathbf{0.03} \pm \mathbf{ 0.05}$\\
    & Prob-UNF &
    $0.08 \pm 0.04$ &
    $0.08 \pm 0.04$ &
    - & 
    - &
    $\mathbf{0.05} \pm \mathbf{0.02}$\\
    & LogProb-UNF &
    $0.23 \pm 0.19$ &
    $0.24 \pm 0.19$ & 
    - & 
    - &
    $\mathbf{0.15} \pm \mathbf{0.10}$\\ 
    \hline
    \hline
\multirow{4}{*}{Adult}
    &Accuracy& 
    $\mathbf{0.80} {\pm} \mathbf{0.01}$ &
    $\mathbf{0.80} \pm \mathbf{0.01}$ & 
    $0.78 \pm 0.02$ & 
    $0.77 \pm 0.01$ &
    $0.79 \pm 0.01$
    \\
    &Det-UNF &
    $0.08{\pm} 0.05$ &
    $\mathbf{0.06} \pm \mathbf{0.05}$ & 
    $0.08 \pm 0.08$ & 
    $0.10 \pm 0.06$ &
    $\mathbf{0.06} \pm\mathbf{ 0.04}$\\
    & Prob-UNF & 
    $0.17{\pm} 0.07$ &
    $0.12 \pm 0.07$ & 
    $-$ & 
    $-$ &
    $\mathbf{0.12} \pm \mathbf{0.07}$ \\
    & LogProb-UNF &
    $0.98{\pm} 0.55$ &
    $0.64 \pm 0.51$ & 
    $-$ & 
    $-$ &
    $\mathbf{0.56} \pm \mathbf{0.42}$\\ 
\hline
\hline
\multirow{4}{*}{Compas}&Accuracy& 
    $\textbf{0.65} {\pm} \textbf{0.01}$ &
    $\textbf{0.65} \pm \textbf{0.02}$ & 
    $0.58 \pm 0.04$ & 
    $\textbf{0.65} \pm \textbf{0.01}$ &
    $0.58 \pm 0.04$ \\
    &Det-UNF  & 
    $0.25{\pm} 0.03$ &
    $0.24 \pm 0.03$ & 
    $0.12 \pm 0.07$ & 
    $0.22 \pm 0.01$ &
    $\mathbf{0.11} \pm \mathbf{0.07}$ \\
    & Prob-UNF & 
    $0.12{\pm} 0.02$ &
    $0.11 \pm 0.02$ & 
    $-$ & 
    $-$ &
    $\textbf{0.02} \pm \textbf{0.02}$ \\
    & LogProb-UNF & 
    $0.28{\pm} 0.07$ &
    $0.24 \pm 0.07$ & 
    $-$ & 
    $-$ &
    $\textbf{0.06} \pm \textbf{0.04}$ \\
\hline 
\hline
\multirow{4}{*}{Arrhythmia}&Accuracy& 
    $\textbf{0.63} {\pm} \textbf{0.03}$ &
    $0.62 \pm 0.03$ & 
    $0.61 \pm 0.03$ & 
    $0.62 \pm 0.03$ &
    $0.61 \pm 0.03$ \\
    &Det-UNF  & 
    $0.17 {\pm} 0.08$ &
    $0.12 \pm 0.07$ & 
    $0.08 \pm 0.06$ & 
    $0.23 \pm 0.13$ &
    $\textbf{0.07} \pm \textbf{0.06}$ \\
    & Prob-UNF & 
    $0.10{\pm} 0.05$ &
    $0.06 \pm 0.04$ & 
    $-$ & 
    $-$ &
    $\textbf{0.03} \pm \textbf{0.03}$ \\
    & LogProb-UNF & 
    $0.21{\pm} 0.10$ &
    $0.14 \pm 0.08$ & 
    $-$ & 
    $-$ &
    $\textbf{0.07} \pm \textbf{0.05}$ \\ 
\hline 
\end{tabular}
\caption{Testing accuracy and unfairness (average $\pm$ standard deviation) for $N= 150$.}
\label{tab:results}
\end{table*}
\end{center}

        \noindent\textbf{Experiments with Real Data.} We now benchmark the LR, FLR and DR-FLR classifiers against fair classifiers proposed in~\cite{ref:donini2018empirical} (DOB${}^+$) and~\cite{ref:zafar2017fairness} (ZVRG) on four publicly available datasets (Adult, Drug, COMPAS, Arrhythmia\footnote{ We only use the first 12 out of 278 non-sensitive features of the Arrhythmia dataset so that we can use the same search grid for $\rho$ across all datasets (in the other datasets $p$ ranges from 5 to 12).}).
        While the Adult dataset comes with designated training and testing samples, in all other datasets we randomly select $2/3$ of the samples for training. Ultimately, the ratio of training samples to features is of the order of 10 in all datasets.
        
        To train the DR-FLR classifier, we draw 150 training samples and keep the others as validation samples. We then set $\eta = \min\{\hat p_{11}, \hat p_{01}\} / 2$, $\kappa_\mc{A}=\kappa_{\mc Y} = 0.5$ and tune $\rho \in [10^{-5}, 10^{-1}]$\footnote{After we obtain the logarithmic scale, we multiply the values by 5, and thus $\rho \in [5. 10^{-5}, 5.10^{-1}]$ at the end.} on a logarithmic search grid with 50 discretization points using the validation procedure from \cite{ref:donini2018empirical}. Using these hyperparameters, we then re-train the DR-FLR classifier on another set of 150 randomly drawn training samples. The DOB${}^+$ and ZVRG classifiers are computed using the authors' code. The accuracy and unfairness measures of all classifiers is then evaluated on the testing data.
       
        Table~\ref{tab:results} suggests that the DR-FLR classifier performs favorably relative to its competitors in that it always decreases LogProb-UNF substantially and often yields the lowest Det-UNF with only a moderate loss in accuracy.

        \noindent\textbf{Worst-Case Distribution.} Next, we visualize the extremal distribution $\QQ\opt$ from Proposition~\ref{prop:extreme-infty} for 4 pre-trained classifiers (classical logistic regression, support vector machine with RBF kernel, Gaussian processes with RBF kernel, AdaBoost). Figure~\ref{fig:quantification} illustrates which test samples are projected to the decision boundary under the adversarial distribution $\QQ\opt$ until the transportation budget corresponding to the Wasserstein radius $\rho$ is exhausted.

   \begin{figure}[h]
       \begin{subfigure}{.245\textwidth}
       \includegraphics[width=\linewidth]{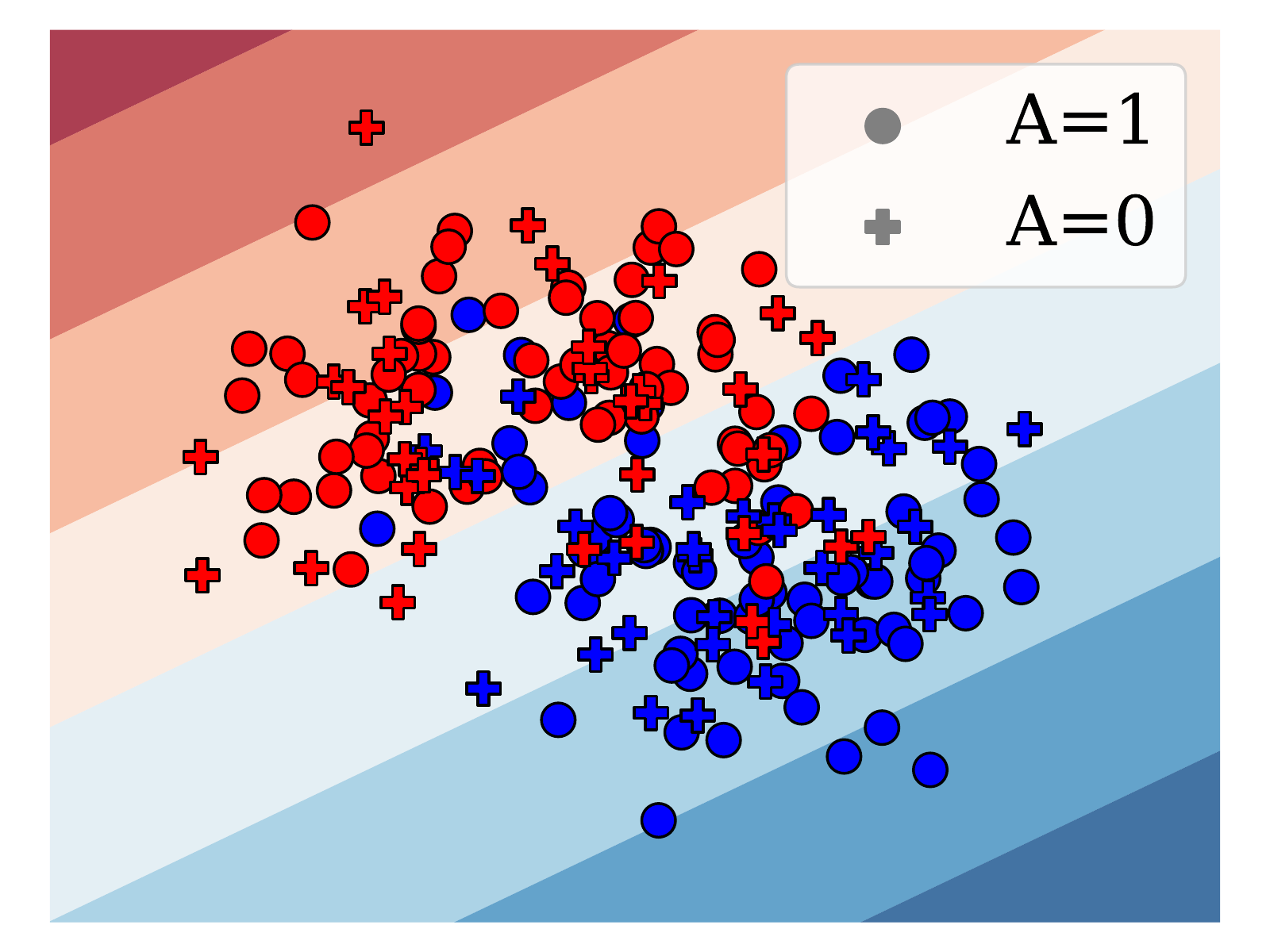}
       \end{subfigure}
       \begin{subfigure}{.245\textwidth}
       \includegraphics[width=\linewidth]{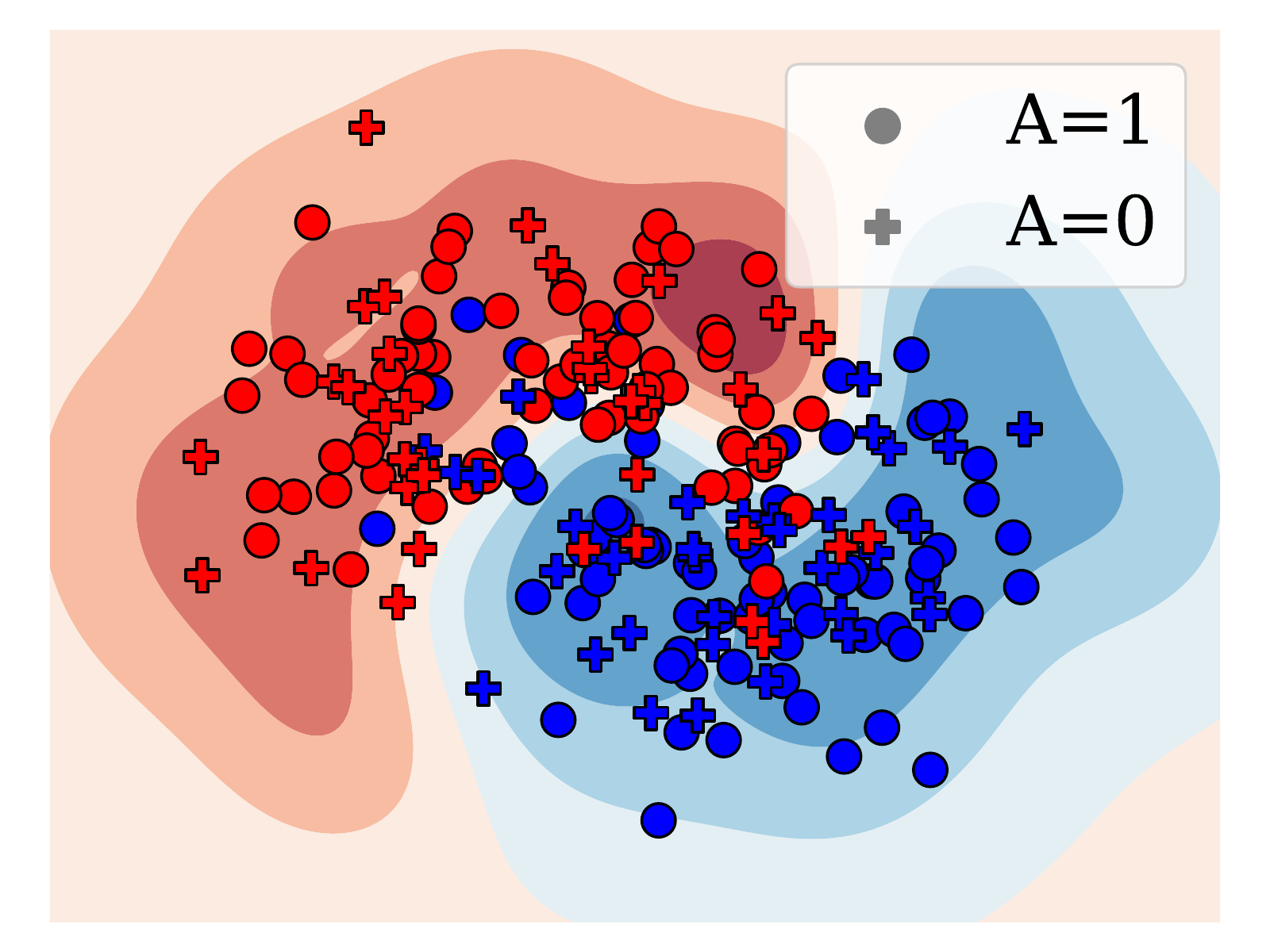}
       \end{subfigure}
       \begin{subfigure}{.245\textwidth}
       \includegraphics[width=\linewidth]{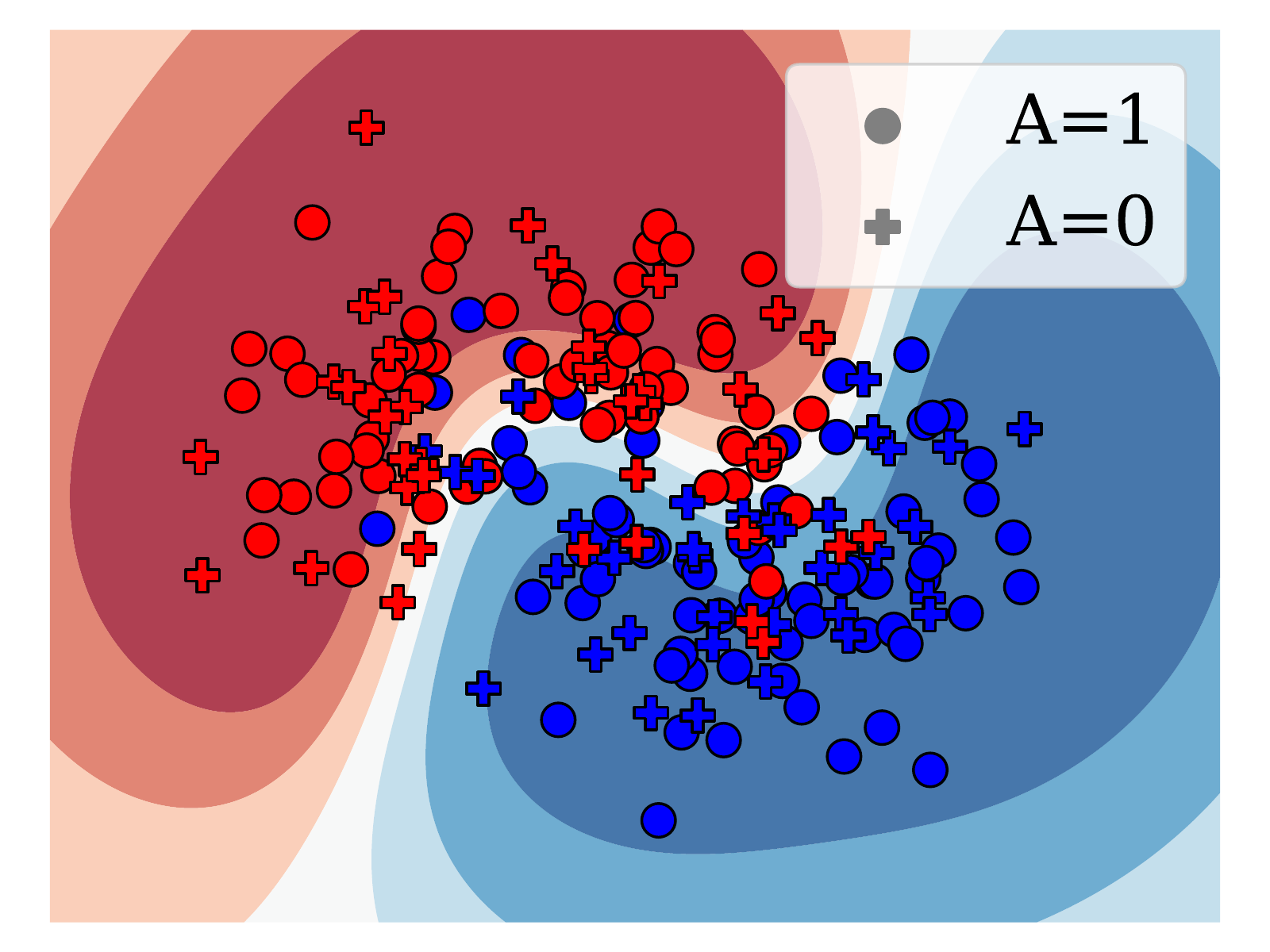}
       \end{subfigure}
       \begin{subfigure}{.245\textwidth}
       \includegraphics[width=\linewidth]{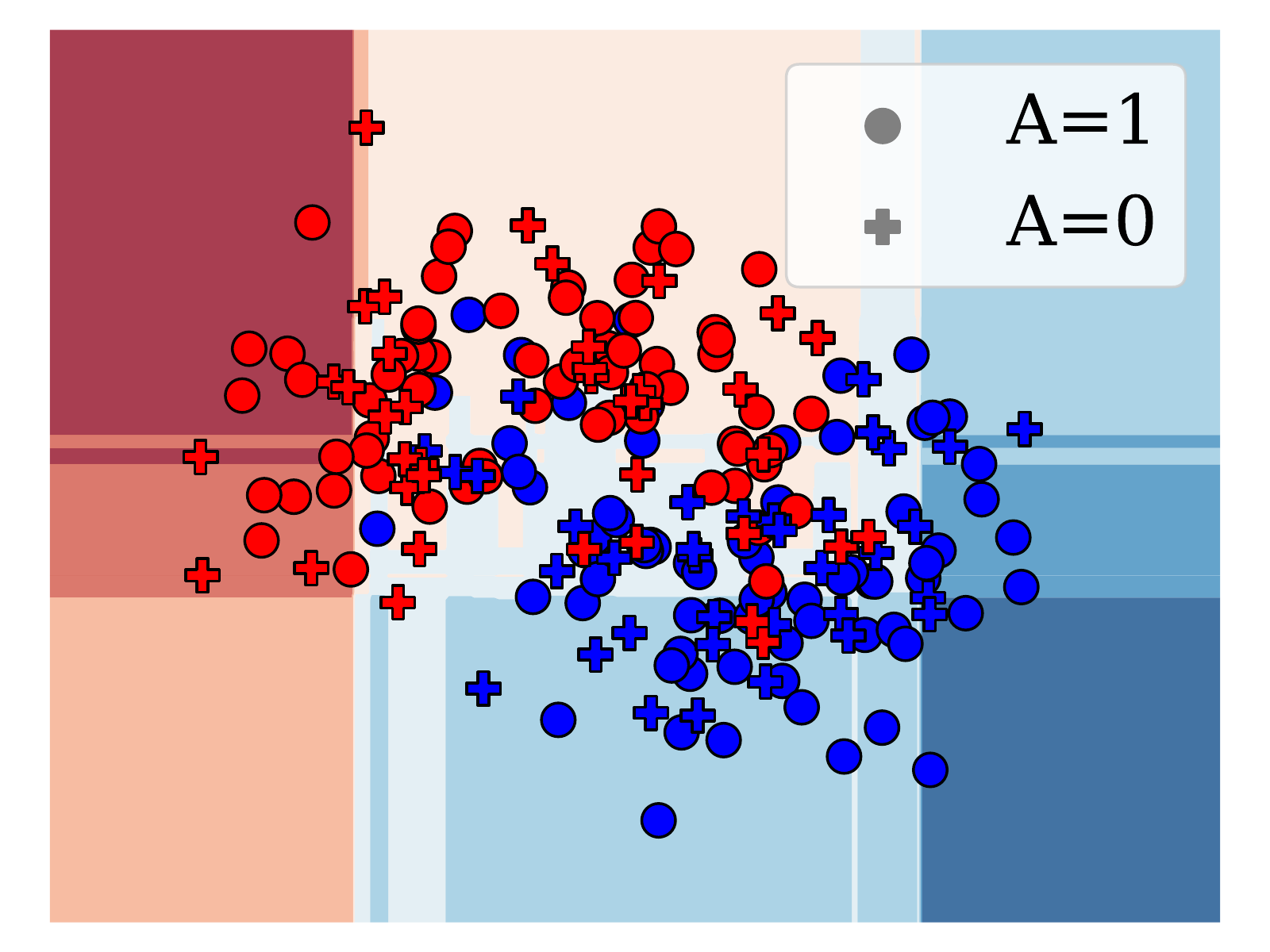}
       \end{subfigure}
       \begin{subfigure}{.245\textwidth}
       \includegraphics[width=\linewidth]{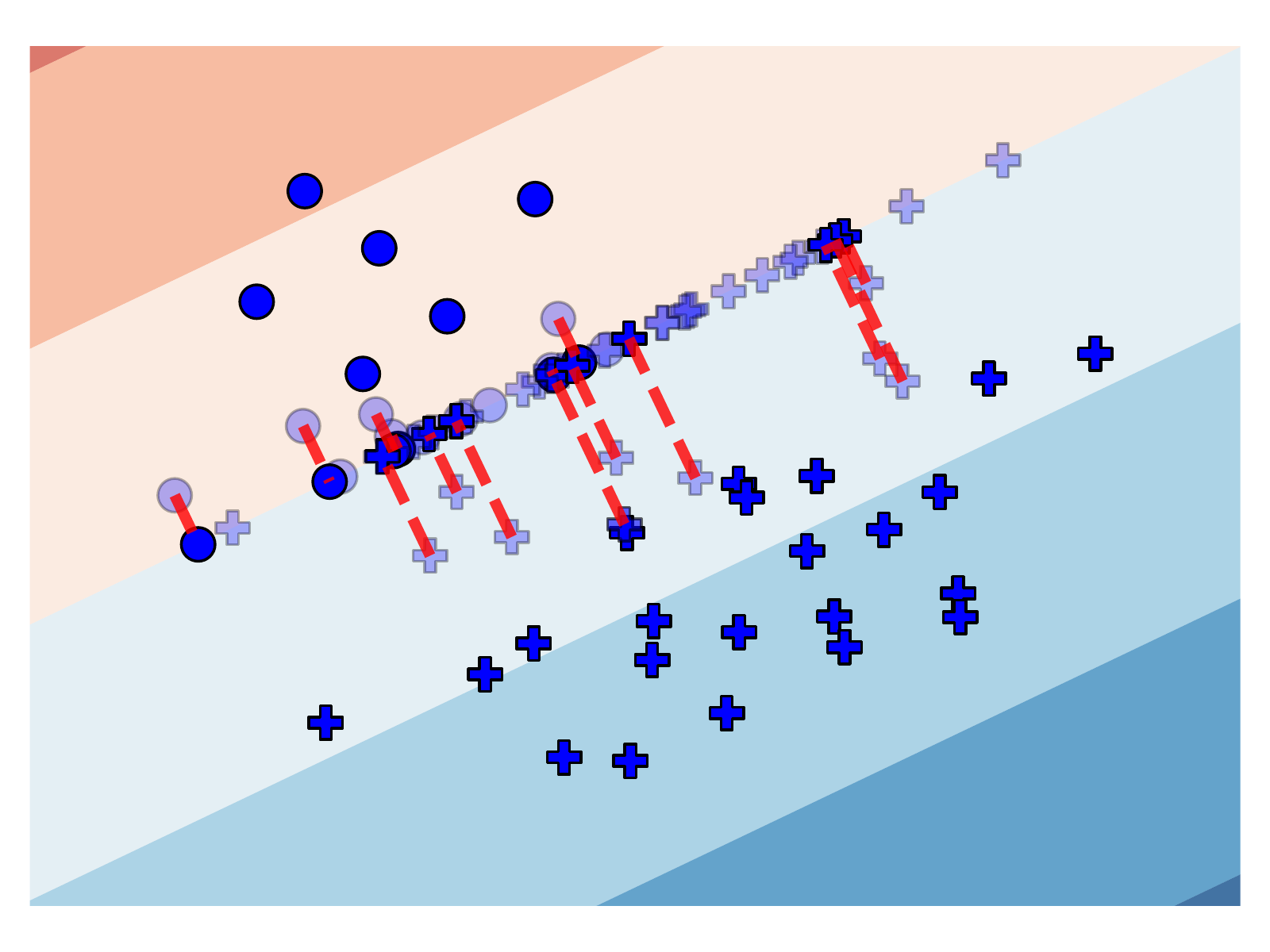}
       \caption{Logistic Regression}
       \end{subfigure}      \begin{subfigure}{.245\textwidth}
       \includegraphics[width=\linewidth]{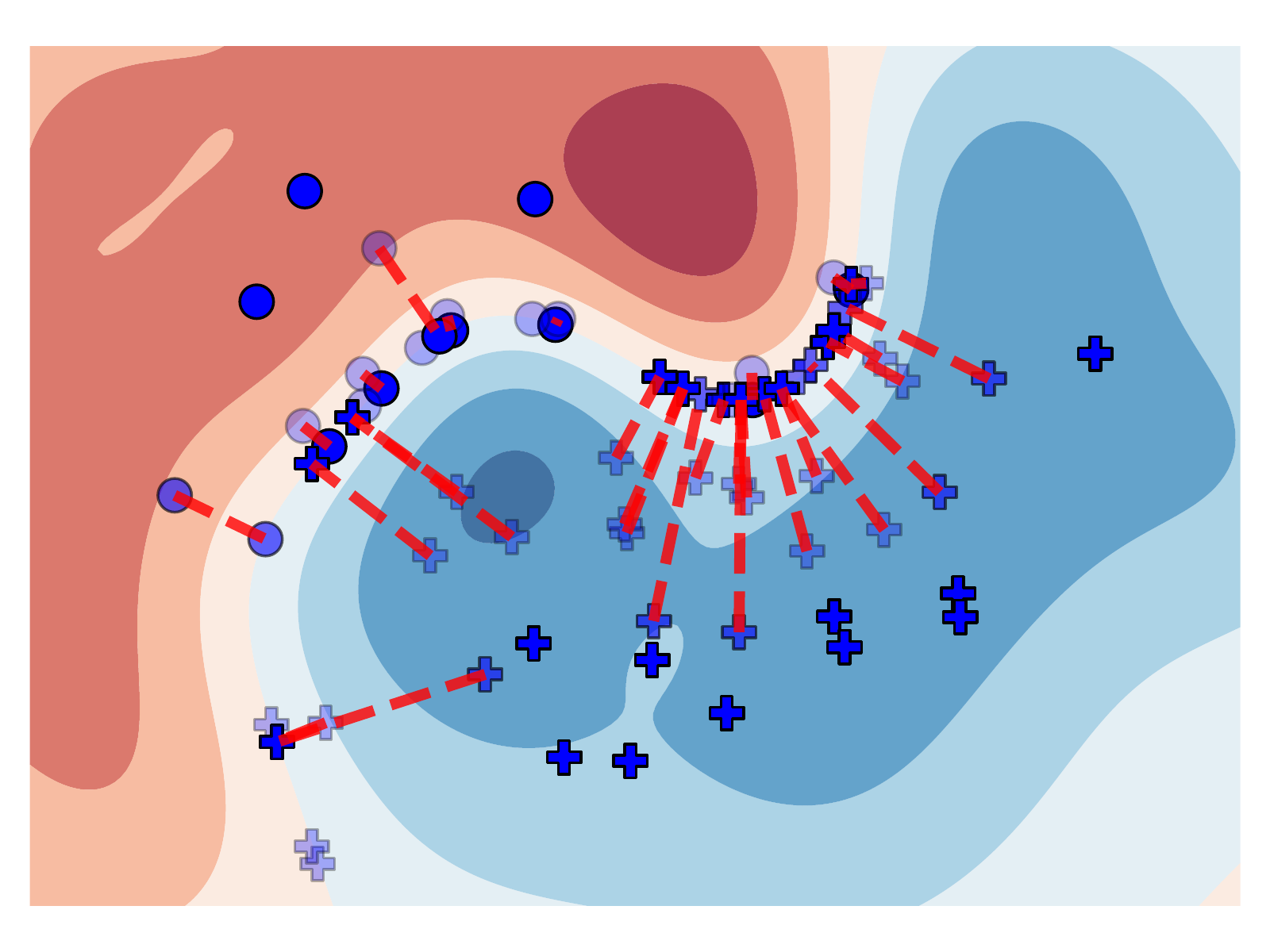}
       \caption{SVM (RBF)}
       \end{subfigure}
       \begin{subfigure}{.245\textwidth}
       \includegraphics[width=\linewidth]{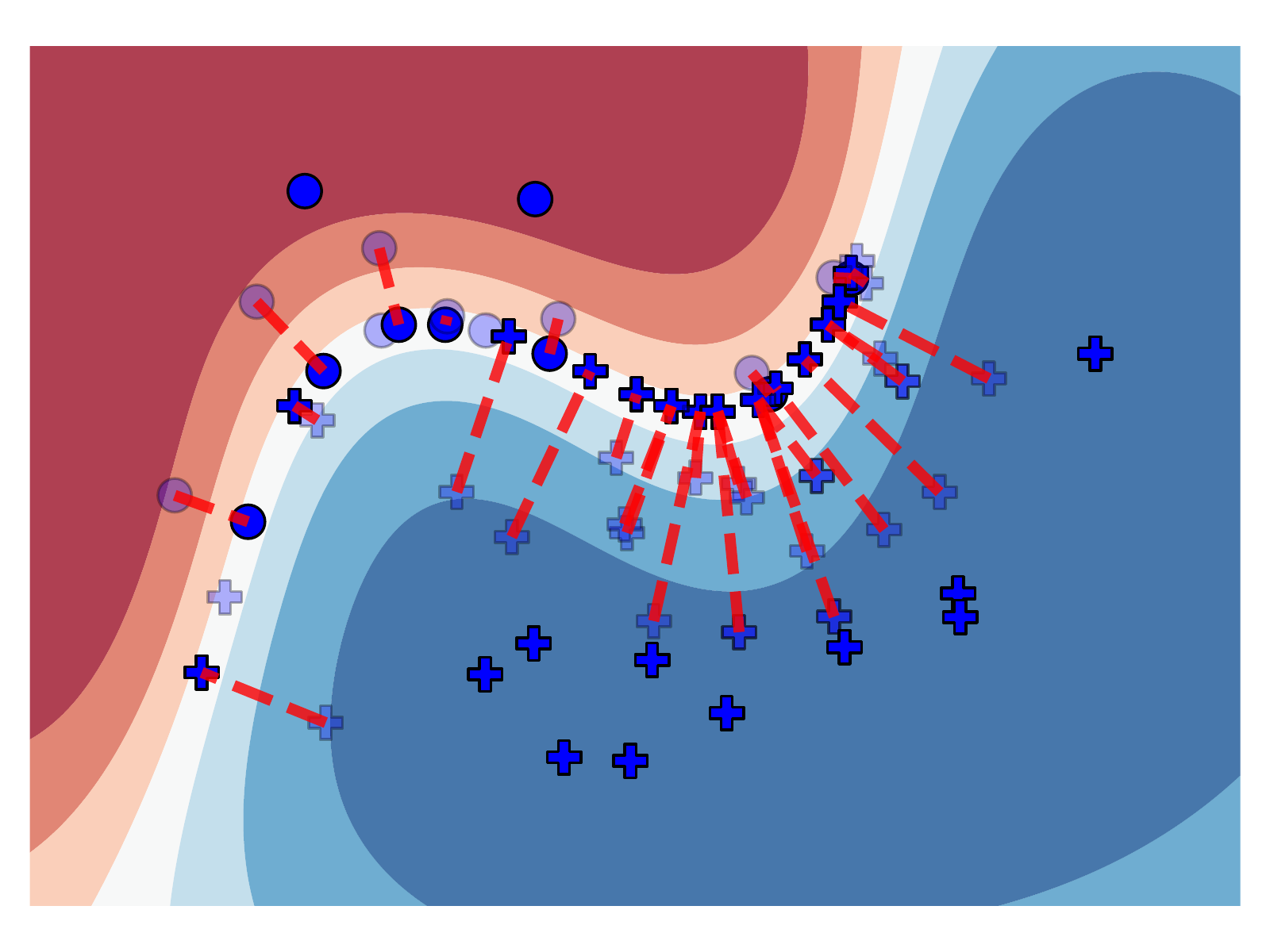}
       \caption{GP (RBF)}
       \end{subfigure}
       \begin{subfigure}{.245\textwidth}
       \includegraphics[width=\linewidth]{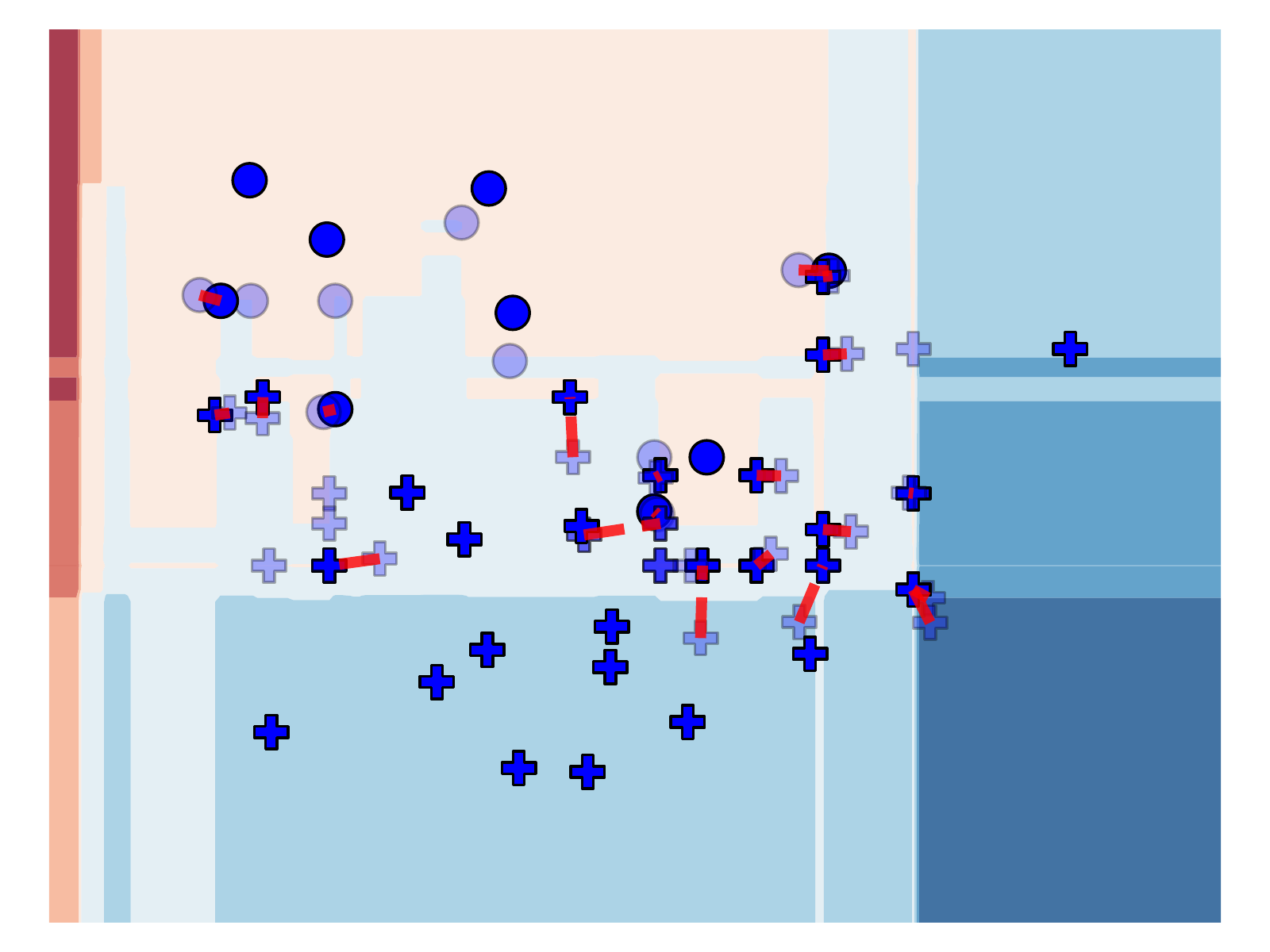}
       \caption{AdaBoost}
       \end{subfigure}
       \caption{Visualization of the extremal distribution $\QQ\opt$ for different classifiers. The red/blue background color represents the class partitions. The top row shows the test data, and the bottom row (zoomed) shows how samples with $z_i\opt > 0$ are moved to the decision boundary.}
       \label{fig:quantification}
       \vspace{-.5cm}
   \end{figure}
   \newpage{}
\section*{Appendix}
	\renewcommand\thesection{\Alph{section}}
	\renewcommand{\theequation}{A.\arabic{equation}}
	\renewcommand{\thefigure}{A.\arabic{figure}}
	\renewcommand{\thetable}{A.\arabic{table}}

	\appendix
	This appendix is organized as follows. Section~\ref{app:technical_proofs} contains all proofs omitted from the main text, while Section~\ref{sect:further_disc_numerical} provides detailed information on the numerical experiments and reports on additional numerical experiments.
    \section{Proofs}
    \label{app:technical_proofs}
	We first describe a strong semi-infinite duality result that forms the basis for several proofs. To this end, assume that $\phi: \mc X \times \mc A \times \mc Y \to \R$ is a Borel measurable loss function, and recall that $\wh p_{ a  y} = \Pnom_N(A =  a, Y =  y)$ for all $a \in \mc A$ and $y \in \mc Y$. The semi-infinite program 
	\be \label{eq:2refor-1}
        \begin{array}{cll}
        \displaystyle \sup_{\QQ\in\mathcal M} &\EE_{\QQ}[\phi(X, A, Y)]  \\ [1ex]
        \st & \Wass(\QQ, \Pnom_N) \leq \rho \\ [1ex]
        & \QQ(A =  a, Y =  y) = \wh p_{ a  y} \quad \forall a \in \mc A,\;\forall y\in \mc Y
        \end{array}
    \ee
	thus evaluates the worst-case expected loss over all distributions in a Wasserstein ball of radius $\rho\ge 0$ around the discrete nominal distribution $\Pnom_N$ under which the marginal distributions of $A$ and $Y$ coincide with their {\em nominal} marginal distributions. The following proposition generalizes existing strong duality results without marginal distribution information~\cite{ref:blanchet2019quantifying,ref:gao2016distributionally,ref:esfahani2018data,ref:zhao2018data} and can be seen as a variant of~\cite[Theorem~2]{ref:frogner2019incorporating}, which includes information on the marginal distribution of features and outputs. The proposition can also be derived from a general theory of moment problems \cite[Section~3]{ref:shapiro2001on}. We omit the proof for brevity.
	
    \begin{proposition}[Strong duality] \label{prop:conic-duality}
    If $\wh p_{ a  y}\in (0, 1)$ for all $a \in \mc A$ and $y \in \mc Y$ and if $\rho >0$, then~\eqref{eq:2refor-1} admits the strong semi-infinite dual
    \be
        \label{eq:dual}
        \begin{array}{cll}
        \inf & \rho \lambda + \ds \sum_{a \in \mc A}\sum_{y \in \mc Y} \wh p_{ a  y} \mu_{ a  y} + \frac{1}{N} \sum_{i =1}^N \nu_i \\[3ex]
        \st & \lambda \in \R_+,~\mu \in \R^{2\times 2},~\nu \in \R^N \\[1ex]
        & \lambda \, c\big((x_i, a_i, y_i), (\wh x_i, \wh a_i, \wh y_i)\big)+ \displaystyle \mu_{ a_i y_i} + \nu_i \ge  \phi(x_i, a_i, y_i) \\[1ex]
        &\hspace{4cm} \forall (x_i, a_i, y_i) \in \mc X \times \mc A \times \mc Y,\;\forall i \in [N].
        \end{array}
    \ee
    Also, if the supremum of~\eqref{eq:2refor-1} is finite, then the infimum of~\eqref{eq:dual} is attained.
    \end{proposition}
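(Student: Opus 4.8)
The plan is to read~\eqref{eq:2refor-1} as an infinite-dimensional linear program over measures and to obtain~\eqref{eq:dual} as its Lagrangian dual, closing the gap with a general conic-duality theorem. First I would disintegrate the transport plan. Writing $\wh\xi_i=(\wh x_i,\wh a_i,\wh y_i)$ for the atoms of $\Pnom_N$, every coupling $\pi\in\Pi(\QQ,\Pnom_N)$ factors as $\pi=\frac1N\sum_{i=1}^N\QQ_i\otimes\delta_{\wh\xi_i}$ for probability measures $\QQ_1,\dots,\QQ_N$ on $\mc X\times\mc A\times\mc Y$, so that $\QQ=\frac1N\sum_i\QQ_i$ and the transport cost equals $\frac1N\sum_i\EE_{\QQ_i}[c(\cdot,\wh\xi_i)]$. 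Since $\Wass(\QQ,\Pnom_N)$ is itself an infimum over couplings, the ball constraint is equivalent to the existence of such $\QQ_i$ with $\frac1N\sum_i\EE_{\QQ_i}[c(\cdot,\wh\xi_i)]\le\rho$. Substituting this representation recasts~\eqref{eq:2refor-1} as the moment problem
\[
\begin{aligned}
\sup_{\QQ_1,\dots,\QQ_N\ge 0}\quad & \frac1N\sum_{i=1}^N\EE_{\QQ_i}[\phi]\\
\st\quad & \QQ_i(\mc X\times\mc A\times\mc Y)=1\quad\forall i\in[N],\\
& \tfrac1N\sum_{i=1}^N\EE_{\QQ_i}[c(\cdot,\wh\xi_i)]\le\rho,\\
& \tfrac1N\sum_{i=1}^N\QQ_i(A=a,Y=y)=\wh p_{ay}\quad\forall a\in\mc A,\,y\in\mc Y,
\end{aligned}
\]
which is of the form treated in~\cite[Section~3]{ref:shapiro2001on}.

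Next I would dualize, assigning $\lambda\ge 0$ to the Wasserstein constraint, $\mu_{ay}\in\R$ to the marginal equalities, and $\nu_i\in\R$ to the normalizations. Collecting terms by $\QQ_i$ in the Lagrangian produces the objective $\rho\lambda+\sum_{a,y}\wh p_{ay}\mu_{ay}+\frac1N\sum_i\nu_i$ plus $\frac1N\sum_i\EE_{\QQ_i}[\phi(\xi)-\lambda c(\xi,\wh\xi_i)-\mu_{ay}-\nu_i]$, where $(a,y)$ denotes the $(A,Y)$-coordinates of $\xi$. Taking $\sup_{\QQ_i\ge 0}$ forces the bracketed integrand to be nonpositive pointwise, i.e.\ $\lambda c(\xi,\wh\xi_i)+\mu_{ay}+\nu_i\ge\phi(\xi)$ for every $\xi=(x,a,y)$ and every $i$, which is exactly the semi-infinite constraint of~\eqref{eq:dual}; when it holds the inner supremum is zero and the outer objective collapses to the stated dual objective. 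This construction already delivers weak duality: integrating the dual inequality against any primal-feasible $\QQ_i$, summing over $i$, and invoking $\lambda\ge 0$ together with the Wasserstein and marginal constraints bounds $\EE_\QQ[\phi]$ above by the dual value.

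The substantive step is to upgrade weak duality to equality with dual attainment, which I would do by verifying a Slater-type constraint qualification for the primal and appealing to the conic-duality theorem of~\cite{ref:shapiro2001on}. The hypotheses $\rho>0$ and $\wh p_{ay}\in(0,1)$ are precisely what furnish strict feasibility: the choice $\QQ_i=\delta_{\wh\xi_i}$ recovers $\Pnom_N$, which satisfies the Wasserstein constraint strictly because $\Wass(\Pnom_N,\Pnom_N)=0<\rho$, while strict positivity of every $\wh p_{ay}$ permits shifting a small amount of mass between the $(a,y)$-cells in either direction, so the marginal equalities lie in the relative interior of the attainable set. A primal Slater point yields both zero duality gap and attainment of the dual infimum in~\eqref{eq:dual} whenever the primal value is finite, which is the assertion of the proposition.

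The hard part will be discharging the constraint qualification rigorously in infinite dimensions: one must fix a pairing of topological vector spaces (finite signed measures against a space of test functions, in a weighted variant that accommodates a possibly unbounded $\phi$ and the extended-real ground metric arising when $\kappa_{\mc A}$ or $\kappa_{\mc Y}$ equals $+\infty$), verify that the point supplied above is interior to the image of the constraint map in that topology, and confirm Borel measurability of the optimal conditional kernels $\QQ_i$. These are exactly the technical issues that the cited Wasserstein-duality and moment-problem results~\cite{ref:blanchet2019quantifying,ref:gao2016distributionally,ref:esfahani2018data,ref:frogner2019incorporating,ref:shapiro2001on} are built to resolve, so the remaining work is to adapt their framework to the marginal-constrained ball $\mbb B_\rho(\Pnom_N)$.
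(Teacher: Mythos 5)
Your proposal is correct and follows exactly the route the paper itself indicates: the paper omits the proof and simply points to the moment-problem/conic-duality machinery of \cite[Section~3]{ref:shapiro2001on} and the marginal-constrained variant in \cite{ref:frogner2019incorporating}, and your disintegration of the coupling into conditionals $\QQ_i$, the Lagrangian dualization yielding the semi-infinite constraint, and the Slater-type argument (using $\rho>0$ for the ball constraint and $\wh p_{ay}\in(0,1)$ for interiority of the marginal equalities) are precisely the standard steps behind those cited results. The technical caveats you flag at the end (choice of pairing, unbounded $\phi$, $\kappa=\infty$, measurable kernels) are real but are discharged by the references, so nothing essential is missing.
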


     \begin{corollary}[Absolute trust in $A$ and $Y$]
\label{corol:conic-duality-inf-kappa}
    If $\wh p_{ a  y}\in (0, 1)$ for all $a \in \mc A$ and $y \in \mc Y$ and if $\rho >0$ and $\kappa_{\mc A} = \kappa_{\mc Y}= \infty$, then~\eqref{eq:2refor-1} admits the strong semi-infinite dual
    \be
        \label{eq:dual-inf}
        \begin{array}{cll}
        \inf & \rho \lambda + \frac{1}{N} \sum_{i =1}^N \nu_i \\[1ex]
        \st & \lambda \in \R_+,~\nu \in \R^N \\[1ex]
        & \lambda \|x_i - \hat{x}_i\|+  \nu_i \ge  \phi(x_i,\hat  a_i, \hat y_i) \quad \forall x_i \in \mc X ,\;\forall i \in [N].
        \end{array}
    \ee
\end{corollary}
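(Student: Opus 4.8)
The plan is to derive Corollary~\ref{corol:conic-duality-inf-kappa} directly from Proposition~\ref{prop:conic-duality} by specializing the dual~\eqref{eq:dual} to the ground metric~\eqref{eq:cost} with $\kappa_{\mc A}=\kappa_{\mc Y}=\infty$ and then eliminating the now-redundant dual variables. First I would write out the per-sample dual constraints of~\eqref{eq:dual} and split them according to whether the adversarial label $(a_i,y_i)$ agrees with the nominal label $(\wh a_i,\wh y_i)$. When $(a_i,y_i)=(\wh a_i,\wh y_i)$ the ground metric reduces to $\|x_i-\wh x_i\|$, so the constraint becomes $\lambda\|x_i-\wh x_i\|+\mu_{\wh a_i\wh y_i}+\nu_i\ge\phi(x_i,\wh a_i,\wh y_i)$; when $(a_i,y_i)\neq(\wh a_i,\wh y_i)$ the ground metric equals $+\infty$, so for every $\lambda>0$ the left-hand side is $+\infty$ and the constraint holds vacuously. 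Hence only the ``diagonal'' constraints matching the nominal labels remain active.

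The second step is a change of variables that absorbs the marginal multipliers. I would set $\tilde\nu_i\defeq\nu_i+\mu_{\wh a_i\wh y_i}$, which turns the surviving constraints into exactly the constraints of~\eqref{eq:dual-inf}. To see that the objective also collapses, I would use the counting identity $\tfrac1N\sum_{i=1}^N\mu_{\wh a_i\wh y_i}=\sum_{a\in\mc A,\,y\in\mc Y}\wh p_{ay}\mu_{ay}$, which holds because exactly $N\wh p_{ay}$ of the samples carry the label $(a,y)$. Substituting $\nu_i=\tilde\nu_i-\mu_{\wh a_i\wh y_i}$ into $\rho\lambda+\sum_{a,y}\wh p_{ay}\mu_{ay}+\tfrac1N\sum_i\nu_i$ then cancels the $\mu$-terms and leaves precisely $\rho\lambda+\tfrac1N\sum_i\tilde\nu_i$, the objective of~\eqref{eq:dual-inf}. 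This gives a value-preserving correspondence in both directions: from a feasible point of~\eqref{eq:dual} the triple $(\lambda,\tilde\nu)$ is feasible in~\eqref{eq:dual-inf} with the same objective, while conversely any feasible $(\lambda,\tilde\nu)$ of~\eqref{eq:dual-inf} lifts to $(\lambda,\mu,\nu)=(\lambda,0,\tilde\nu)$, which is feasible in~\eqref{eq:dual} precisely because the off-diagonal constraints are vacuous once $\lambda>0$. The two optimal values must therefore coincide.

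The main obstacle I anticipate is the careful treatment of the infinite ground metric, in particular the boundary case $\lambda=0$, where the off-diagonal constraints are no longer automatically vacuous. I would address this either by restricting the infimum in~\eqref{eq:dual} to $\lambda>0$ and passing to the limit $\lambda\downarrow0$, using that $\sup_{x}[\phi(x,\wh a_i,\wh y_i)-\lambda\|x-\wh x_i\|]$ converges to the $\lambda=0$ constraint value, or, more transparently, by arguing at the primal level: since any coupling of finite transportation cost must leave $(A,Y)$ unchanged when $\kappa_{\mc A}=\kappa_{\mc Y}=\infty$, the marginal constraint $\QQ(A=a,Y=y)=\wh p_{ay}$ is automatically satisfied and hence redundant, so that~\eqref{eq:2refor-1} reduces to a plain Wasserstein worst-case expectation over the feature vector alone whose dual is obtained from Proposition~\ref{prop:conic-duality} by dropping the multipliers $\mu$, giving exactly~\eqref{eq:dual-inf}. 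I would also record the convention $0\cdot\infty=0$ for the product $\lambda\,c$ so that the degenerate case is unambiguous.
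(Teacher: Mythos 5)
Your proposal is correct and follows essentially the same route as the paper's proof: the off-diagonal constraints with $(a_i,y_i)\neq(\wh a_i,\wh y_i)$ are discarded because the infinite ground metric makes them vacuous, the counting identity $\sum_{a,y}\wh p_{ay}\mu_{ay}=\tfrac1N\sum_i\mu_{\wh a_i\wh y_i}$ is invoked, and the substitution $\nu_i\leftarrow\nu_i+\mu_{\wh a_i\wh y_i}$ absorbs the marginal multipliers. Your additional attention to the degenerate case $\lambda=0$ and the $0\cdot\infty$ convention is a careful refinement the paper glosses over, but it does not change the argument.
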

\begin{proof}[Proof of Corollary~\ref{corol:conic-duality-inf-kappa}]
When $\kappa_\mc{A} = \kappa_{\mc Y} = \infty$, the left hand side of the $i$-th semi-infinite constraint in~\eqref{eq:dual} evaluates to $\infty$ unless $a_i = \hat a_i$ and $y_i = \hat y_i$. In this case, the constraint is trivially satisfied and can be omitted. 
Furthermore, by definition of $\hat p_{a y}$ we have
\[\sum\limits_{a \in \mc A} \sum\limits_{y \in \mc Y} \hat p_{a y} \mu_{ a  y} = \frac{1}{N}\sum\limits_{i=1}^N \mu_{ \hat a_i  \hat y_i}.\]
Consequently, problem in~\eqref{eq:dual} reduces to
\be
        \label{eq:dual_inf}
        \begin{array}{cll}
        \inf & \rho \lambda + \ds\frac{1}{N}\ds \sum\limits_{i=1}^N \mu_{ \hat a_i  \hat y_i} + \frac{1}{N} \sum_{i =1}^N \nu_i \\[3ex]
        \st & \lambda \in \R_+,~\mu \in \R^{2\times 2},~\nu \in \R^N \\[1ex]
        & \lambda \,\|x_i -\hat x_i\|+ \displaystyle \mu_{\hat  a_i \hat y_i} + \nu_i \ge  \phi(x_i,\hat  a_i, \hat y_i) \quad \forall x_i \in \mc X,\;\forall i \in [N].
        \end{array}
    \ee
We can further simplify problem~\eqref{eq:dual_inf} by applying the change of variables $\nu_i \xleftarrow{} \mu_{\hat a_i \hat y_i} +\nu_i$, $i\in[N]$, which yields the  reformulation~\eqref{eq:dual-inf}.
This observation completes the proof.
\end{proof}
\subsection{Proofs of Section~\ref{sect:fair}}
\label{app:proof_fair_sect}
\begin{proof}[Proof of Theorem~\ref{thm:FLR}]
We define the log-loss function through
\[
    \ell_\beta(x, y) = - y \log (h_\beta(x)) - (1-y) \log (1 - h_\beta(x))\quad\forall x\in\mc X,\;\forall y\in\mc Y.
\]
By introducing an auxiliary epigraphical variable, problem~\eqref{eq:LR-true-all-f} can then be reformulated as
\begin{align}
\label{eq:epig_refor_fair}
    \begin{array}{cl}
     \Min{\beta, t} & t\\
     \st &\EE_{\PP}[\ell_\beta(X, Y)] + \eta \mathds U_f(\PP, h_\beta) \leq t.
    \end{array}
\end{align}
As $f(z) = \log(z)$ and $\PP = \Pnom_N$ by assumption, the unfairness measure simplifies to
\[
\mathds U_{f}(\Pnom_N, h_\beta) = |\EE_{\Pnom_N} [\log(h_\beta(X))| A=1, Y=1] - \EE_{\Pnom_N}[\log(h_\beta(X)) | A=0, Y=1]| . \]
By the definition of conditional expectations, we further have
\begin{align*}
    \EE_{\Pnom_N}[ \log h_\beta(X) | A = {a}, Y = 1] &= \frac{\EE_{\Pnom_N}[ \log h_\beta(X) \mathbbm{1}_{\{( a, 1)\}} (A, Y)]}{\Pnom_N(A=  a, Y = 1)} \\ & =  r_{ a}\, \EE_{\Pnom_N}[ \log h_\beta(X) \mathbbm{1}_{\{( a, 1)\}} (A, Y)]
\end{align*}
for all $a \in \mc A$, where the second equality follows from the definition of~$r_{ a}$. For any fixed $a,a'\in\mc A$ with $a\neq a'$ and $\beta\in\R^p$ we then introduce the function
\[
\hat{\mathds T}_\beta^{aa'} = \EE_{\Pnom_N}[ \ell_\beta(X, Y) + \eta r_a \log(h_\beta(X)) \mathbbm{1}_{\{(a, 1)\}}(A, Y) - \eta r_{a'} \log(h_\beta(X)) \mathbbm{1}_{\{(a', 1)\}}(A, Y)].
\]
By expanding the absolute value in the definition of $\mathds U_{f}(\Pnom_N, h_\beta)$, problem~\eqref{eq:epig_refor_fair} simplifies~to
\begin{align}
    \begin{array}{cl}
     \ds \min_{\beta,t} & t\\
     \st & \hat{\mathds T}_\beta^{10} \leq t, \; \hat{\mathds T}_\beta^{01} \leq t,
    \end{array}
    \label{eq:epigraph-for-emp}
\end{align}
which is manifestly equivalent to the optimization problem in the theorem statement. Note that by the definition of the log-loss function, we obtain \begin{align*}
    &\hat{\mathds T}_\beta^{aa'} = \EE_{\Pnom_N}[ -Y\log(h_\beta(X))-(1-Y) \log(1-h_\beta(X))+
    \eta r_a \log(h_\beta(X)) \mathbbm{1}_{\{(a, 1)\}}(A, Y) \\
    &\hspace{1cm}- \eta r_{a'} \log(h_\beta(X)) \mathbbm{1}_{\{(a', 1)\}}(A, Y)] \\
    &\; = -\frac{1}{N}\!\Big(\!\sum\limits_{\substack{i \in [N]:\\ \hat y_{i}=1 \\ \hat a_i = a}} \!(1 \!-\!\eta r_a)\log(h_\beta(\hat x_i)) \!+\!\sum\limits_{\substack{i \in [N]:\\\hat y_{i}=1\\ \hat a_i = a'\!}}\! (\eta r_{a'}\! +\!  1)\log(h_\beta(\hat x_i)) \!+\!\sum\limits_{\substack{i \in[N]:\\ \hat y_i = 0}} \!\log(1- h_\beta(\hat x_i))\Big),
\end{align*}
where the second equality holds because the expectation under the empirical distribution~$\Pnom_N$ can be expressed as a finite sum, and terms can be grouped by the labels and the sensitive attributes of the training samples. 
Thus, $\hat{ \mathds T}_\beta^{aa'}$ is convex in $\beta$ for $\eta \leq \min\{\hat p_{11}, \hat p_{01}\}$, in which case problem~\eqref{eq:epigraph-for-emp} becomes a tractable convex program.
This concludes the proof.
\end{proof}

\subsection{Proofs of Section~\ref{sect:training}}
\label{app:proofs_training}
The proof of Theorem~\ref{thm:training_refor} relies on the following simple corollary of~\cite[Lemma~1]{ref:abadeh2015distributionally}.
\begin{lemma} 
\label{lemma:conjugacy_reform} 
If $\beta \in \R^p$ and $\gamma \in \R_+$, while $g_\beta(x) = \gamma\log(1+ \exp(-\langle \beta , x \rangle))$ is a convex function of $x\in \R^p$, then we have 
\[\Sup{x \in \R^p}~\gamma g_\beta(x)-\lambda \|x - \hat{x}\| = \begin{cases}
\gamma g_\beta(\hat x)   \quad &\text{if}~\gamma\|\beta\|_* \leq {\lambda}\\
+\infty &\text{otherwise}
\end{cases}\]
for all $\lambda \in \R_{++}$, where $\|\cdot \|_{*}$ represents the dual norm of $\|\cdot \|$.
\end{lemma}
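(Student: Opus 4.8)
The plan is to recognize this lemma as the special case of \cite[Lemma~1]{ref:abadeh2015distributionally} in which the convex integrand is the (scaled) softplus function, so that the only nontrivial thing to check is the Lipschitz modulus of $g_\beta$ with respect to $\|\cdot\|$. Concretely, I would first compute the gradient
\[
\nabla g_\beta(x) = -\gamma\,\beta\,\big[1 + \exp(\inner{\beta}{x})\big]^{-1},
\]
and observe that the scalar factor $[1+\exp(\inner{\beta}{x})]^{-1}$ lies in $(0,1)$ for every $x$. Consequently the dual norm obeys $\|\nabla g_\beta(x)\|_* \le \gamma\|\beta\|_*$ uniformly, and the mean-value inequality yields $|g_\beta(x) - g_\beta(x')| \le \gamma\|\beta\|_*\,\|x-x'\|$ for all $x,x'\in\R^p$. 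Thus $g_\beta$ is globally Lipschitz with modulus $\gamma\|\beta\|_*$, which is exactly the quantity compared against $\lambda$ in the statement.

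I would then settle the two regimes separately. For $\gamma\|\beta\|_* \le \lambda$, the Lipschitz bound gives $g_\beta(x) - g_\beta(\hat x) \le \gamma\|\beta\|_*\,\|x-\hat x\| \le \lambda\|x-\hat x\|$ for all $x$, so the objective never exceeds $g_\beta(\hat x)$, and equality holds at $x=\hat x$; this proves the finite branch. For $\gamma\|\beta\|_* > \lambda$, I would exhibit a diverging ray: by definition of the dual norm, pick a unit vector $z\opt$ with $\inner{-\beta}{z\opt} = \|\beta\|_*$ and set $x = \hat x + t\,z\opt$. As $t\to+\infty$ one has $\inner{\beta}{x}\to-\infty$, so the softplus becomes asymptotically linear, $g_\beta(x) = \gamma t\|\beta\|_* + O(1)$, and hence
\[
g_\beta(x) - \lambda\|x-\hat x\| = t\big(\gamma\|\beta\|_* - \lambda\big) + O(1) \longrightarrow +\infty,
\]
which proves the infinite branch.

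The only delicate point, and the step I expect to carry the weight of the argument, is the sharpness of the threshold at $\gamma\|\beta\|_* = \lambda$. The finite side is automatic from the uniform gradient bound, so the crux is verifying that the modulus $\gamma\|\beta\|_*$ is \emph{asymptotically attained} rather than being a mere upper bound; otherwise the supremum could stay finite even when $\gamma\|\beta\|_*$ slightly exceeds $\lambda$. This is precisely what the choice of a dual-norm-achieving direction $z\opt$ secures: driving $\inner{\beta}{x}\to-\infty$ forces the gradient's scalar factor toward $1$ and recovers the extremal slope $\gamma\|\beta\|_*$ in the limit. With this ray in hand, no further machinery is required, and the claim follows directly from the cited conjugacy identity.
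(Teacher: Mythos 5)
Your proof is correct. The paper itself offers no argument for this lemma: it is stated as an immediate consequence of Lemma~1 in the cited reference on distributionally robust logistic regression, so your self-contained derivation is the more explicit route, and it is essentially the argument that underlies the cited result. Your two-regime structure is exactly right: the uniform dual-norm bound $\|\nabla g_\beta(x)\|_* \le \gamma\|\beta\|_*$ (since the logistic factor lies in $(0,1)$) gives global Lipschitz continuity and hence the finite branch with the supremum attained at $\hat x$, while the ray $\hat x + t z\opt$ along a dual-norm-achieving unit direction shows the modulus is asymptotically attained, which is precisely the sharpness issue you correctly flag as the crux. Two small remarks. First, the lemma as printed contains a typo: the objective reads $\gamma g_\beta(x)$ although $g_\beta$ already carries the factor $\gamma$; the way the lemma is invoked in the proof of Theorem~\ref{thm:training_refor} (with $\gamma = 1-\eta r_a$ or $1+\eta r_{a'}$ multiplying $\log(1+\exp(-\beta^\top x))$ once) confirms that the intended threshold is $\gamma\|\beta\|_*\le\lambda$, which is exactly the Lipschitz modulus you compute, so you have implicitly read the statement the right way. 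Second, your divergent-ray argument tacitly assumes $\beta\neq 0$ so that $\langle\beta,x\rangle\to-\infty$ along the ray; this is harmless because $\beta=0$ forces $\gamma\|\beta\|_*=0\le\lambda$ and the infinite branch is then vacuous, but it is worth a clause. With those cosmetic points addressed, nothing is missing.
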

\begin{proof}[Proof of Theorem~\ref{thm:training_refor}]
To simplify notation, we define the log-loss function as usual as
\[
    \ell_\beta(x, y) = - y \log h_\beta(x) - (1-y) \log (1 - h_\beta(x))\quad \forall x\in\mc X,~\forall y\in\mc Y.
\]
By introducing an auxiliary epigraphical variable, problem~\eqref{eq:dro_fair_training} can then be reformulated as
\begin{align}
\label{eq:epig_refor_fair_dro}
    \begin{array}{cl}
     \Min{\beta, t} & t\\
     \st &\sup\limits_{\QQ \in \mathbb{B}_\rho(\Pnom_N)}\EE_{\QQ}[\ell_\beta(X, Y)] + \eta \mathds U_f(\QQ, h_\beta) \leq t.
    \end{array}
\end{align}
As $f(z) = \log(z)$ by assumption, the unfairness measure simplifies to
\[
\mathds U_{f}(\QQ, h_\beta) = |\EE_{\QQ} [\log(h_\beta(X))| A=1, Y=1] - \EE_{\QQ}[\log(h_\beta(X)) | A=0, Y=1]| . \]

By the definition of conditional expectations, we have for all $\QQ \in \mbb B_\rho(\Pnom_N)$ and $a \in \mc A$ that
\begin{align*}
    \EE_\QQ[ \log h_\beta(X) | A = {a}, Y = 1] &= \frac{\EE_\QQ[ \log h_\beta(X) \mathbbm{1}_{\{( a, 1)\}} (A, Y)]}{\QQ(A=  a, Y = 1)}\\
    &= r_{ a} \EE_\QQ[ \log h_\beta(X) \mathbbm{1}_{\{( a, 1)\}} (A, Y)],
\end{align*}
where the second equality holds because $\QQ(A=  a, Y = 1) = \wh p_{ay} = 1/r_{ a}$ for any $\QQ \in \mbb B_\rho(\Pnom_N)$. 
For any fixed $a,a'\in\mc A$ with $a\neq a'$ and $\beta\in\R^p$ we then introduce the function
\[
        \phi_\beta^{aa'}(\tilde x,\tilde a,\tilde y)= \ell_\beta (\tilde x,\tilde y)  +\eta ~ r_a \log(h_\beta(\tilde x))\mathbbm{1}_{\{(a,1)\}} (\tilde a, \tilde y) - \eta ~r_{a'}\log(h_\beta(\tilde  x)) \mathbbm{1}_{\{(a',1)\}} (\tilde a, \tilde y)
\]
    of $\tilde x\in\mc X$, $\tilde a\in\mc A$ and $\tilde y\in\mc Y$, and we define 
\[
\mathds T_\beta^{aa'} = \Sup{\QQ \in \mbb B_\rho(\hat \PP_N)} \EE_\QQ[\phi_\beta^{aa'}(X,A,Y) ].
\]
The integrand $\phi_\beta^{aa'}$ satisfies the linear growth condition of \cite[Theorem~2.2]{ref:yue2020linear}, which guarantees that $\mathds T_\beta^{aa'}$ is finite. By using the above notational conventions and introducing an auxiliary epigraphical variable as in the proof of Theorem~\ref{thm:FLR}, problem~\eqref{eq:epig_refor_fair_dro} is simplified to
\begin{align}
    \begin{array}{cl}
     \ds \min_{\beta,t} & t\\
     \st & \mathds T_\beta^{10} \leq t, \; \mathds T_\beta^{01} \leq t.
    \end{array}
    \label{eq:epigraph-for}
\end{align}
To convert problem~\eqref{eq:dro_fair_training} to a convex program, we need to simplify the constraints that involve~$\mathds T_\beta^{aa'}$. To this end, we may use Proposition~\ref{prop:conic-duality} to obtain
    \be
        \label{eq:prob_dual-1}
        \mathds T_\beta^{aa'}=\left\{
        \begin{array}{cll}
        \min & \rho \lambda + \ds \sum_{a \in \mc A }\sum_{y \in \mc Y} \hat p_{a y} \mu_{ a y} + \frac{1}{N} \sum_{i =1}^N \nu_i \\[3ex]
        \st & \lambda \in \R_+,~\mu \in \R^{2\times 2},~\nu \in \R^N \\[1ex]
        & \lambda \, c\big((x_i, a_i, y_i), (\wh x_i, \wh a_i, \wh y_i)\big)+ \mu_{a_i y_i} + \nu_i \ge  \phi_\beta^{aa'}(x_i, a_i, y_i) \\[1ex]
        &\hspace{3cm} \forall (x_i, a_i, y_i) \in \mc X \times \mc A \times \mc Y,\;\forall i \in [N].
        \end{array}\right.
    \ee
    As $\mathds T_\beta^{aa'}$ is finite, Proposition~\ref{prop:conic-duality} also ensures that the minimum of problem~\eqref{eq:prob_dual-1} is attained.
    
    We now investigate the $i$-th semi-infinite constraint in~\eqref{eq:prob_dual-1} for a fixed $a_i$ and $y_i$. Thanks to the additive separability of the transportation cost, this constraint can be reformulated~as
    \be \label{eq:dual-constraint}
        \nu_i \ge \Sup{x_i \in \mc X} \left\{ \phi_\beta^{aa'}(x_i, a_i, y_i) - \lambda \| x_i - \wh x_i \| \right\} - \kappa_{\mc A} | a_i - \wh a_i| \lambda - \kappa_{\mc Y} | y_i - \wh y_i| \lambda - \mu_{a_i y_i} 
    \ee
    If $y_i = 0$ and $a_i\in\mc A$, then $\phi_\beta^{aa'}(x_i, a_i, 0) = -\log(1-h_\beta(x_i))$, and by Lemma~\ref{lemma:conjugacy_reform}, we have
    \begin{align*}
        \Sup{x_i \in \mc X} \left\{ \phi_\beta^{aa'}(x_i, a_i, 0) - \lambda \| x_i - \wh x_i \| \right\} &= \Sup{x_i \in \x} -\log(1-h_\beta(x_i))-\lambda\| x_i - \hat x_i\| \\
        &= 
        \begin{cases}
        -\log(1-h_\beta( \hat x_i)) \quad &\text{if}~ \|\beta\|_* \leq \lambda, \\
        +\infty &\text{otherwise},
        \end{cases}
    \end{align*}
    which implies that the constraint~\eqref{eq:dual-constraint} is equivalent to the inequalities
    \[
        \| \beta \|_* \le \lambda \quad\text{and}\quad \nu_i \ge -\log(1-h_\beta(\hat x_i)) - \kappa_{\mc A} | a_i - \wh a_i| \lambda - \kappa_{\mc Y} | \wh y_i| \lambda - \mu_{a_i 0}.
    \]
    If $a_i = a$ and $y_i = 1$, then $\phi_\beta^{aa'}(x_i, a, 1) = (\eta r_a - 1) \log(h_\beta(x_i))$, and by Lemma~\ref{lemma:conjugacy_reform}, we have
    \[
        \Sup{x_i \in \mc X} \left\{ \phi_\beta^{aa'}(x_i, a, 1) - \lambda \| x_i - \wh x_i \| \right\} =
        \begin{cases}
        (\eta r_a - 1) \log(h_\beta(\wh x_i)) \quad &\text{if}~  (1 -\eta r_a)\|\beta\|_* \leq \lambda, \\
        +\infty &\text{otherwise},
        \end{cases}
    \]
    which implies that the constraint~\eqref{eq:dual-constraint} is equivalent to
    \[
        (1 -\eta r_a)\|\beta\|_* \le \lambda\quad\text{and} \quad \nu_i \ge (\eta r_a -1) \log(h_\beta(\wh x_i)) - \kappa_{\mc A} | a - \wh a_i| \lambda - \kappa_{\mc Y} |1 -
                \wh y_i| \lambda - \mu_{a 1}.
    \]
    If $a_i = a'$ and $y_i = 1$, finally, then $\phi_\beta^{aa'}(x_i, a', 1) = -(1+ \eta r_{a'}) \log(h_\beta(x_i))$, and we can use an analogous argument involving Lemma~\ref{lemma:conjugacy_reform} to show that
    \[
        \Sup{x_i \in \mc X} \left\{ \phi_\beta^{aa'}(x_i, a', 1) - \lambda \| x_i - \wh x_i \| \right\} =
        \begin{cases}
        -(1 +\eta~ r_{a'}) \log(h_\beta(\wh x_i)) \quad &\text{if}~  (1 +\eta~ r_{a'})\|\beta\|_* \leq \lambda, \\
        +\infty &\text{otherwise},
        \end{cases}
    \]
    which implies that the constraint~\eqref{eq:dual-constraint} is equivalent to
    \[
        (1 + \eta~ r_{a'})\|\beta\|_* \le \lambda\quad \text{and} \quad \nu_i \ge - (1 + \eta~ r_{a'}) \log(h_\beta(\wh x_i)) - \kappa_{\mc A} | a' - \wh a_i| \lambda - \kappa_{\mc Y} |1 -
        \wh y_i| \lambda - \mu_{{a'} 1}.
    \]
      Substituting the above reformulations of constraint~\eqref{eq:dual-constraint} corresponding to all possible combinations of $a_i$ and $y_i$ into~\eqref{eq:prob_dual-1} yields
     \[
        \mathds T_\beta^{aa'}=\left\{ \!\! \begin{array}{c@{\;}ll}
        \min & \rho \lambda + \ds \sum_{a \in \mc A}\sum_{y \in \mc Y} \hat p_{a y} \mu_{a y} + \frac{1}{N} \sum_{i =1}^N \nu_i \\[3ex]
        \st & \lambda \in \R_+,~\mu \in \R^{2 \times 2},~\nu \in \R^N \\
        & \|\beta\|_* \leq \lambda , ~ \| \beta \|_*(1 - \eta r_a) \leq \lambda,~
        \| \beta \|_*(1 + \eta r_{a'}) \leq \lambda\\
        & \!\!\! \left. \begin{array}{ll}
        \nu_i  \geq -\log(1 - h_\beta(\hat x_i)) - \kappa_\mc{A} |a-\hat a_i| \lambda -\kappa_{\mc Y}|\hat y_i|\lambda -\mu_{a 0} \\
        \nu_i  \geq -\log(1 - h_\beta(\hat x_i)) - \kappa_\mc{A} |a'-\hat a_i| \lambda -\kappa_{\mc Y}|\hat y_i|\lambda -\mu_{a' 0} \\
        \nu_i \geq (\eta r_a-1) \log(h_\beta(\hat x_i)) - \kappa_\mc{A} |a-\hat a_i| \lambda -\kappa_{\mc Y}|1 -\hat y_i|\lambda -\mu_{a 1} \\
        \nu_i \ge -(1+\eta r_{a'})\log(h_\beta(\hat x_i)) - \kappa_\mc{A} |a'-\hat a_i| \lambda -\kappa_{\mc Y}|1 -\hat y_i|\lambda -\mu_{a' 1} 
        \end{array} \!\!\! \right\} \forall i \in [N].
        \end{array}\right.
    \]
    Note that the constraints $\|\beta\|_* \leq \lambda$ and $\|\beta \|_* (1- \eta r_a) \leq \lambda$ are redundant in view of the constraint $\|\beta\|_* (1+\eta r_{a'}) \leq \lambda$. The claim then follows by substituting the dual reformulations for $\mathds T_\beta^{aa'}$ into~\eqref{eq:epigraph-for} and eliminating the embedded minimization operators.
\end{proof}
\subsection{Proofs of Section~\ref{sect:quantify}}
\label{app:proofs_quantifying}
\begin{proof}[Proof of Theorem~\ref{thm:quantify-EO}]
By the definition of $\VV(a,a')$ for $a,a'\in\mc A$, one readily verifies that the bounds on the unfairness measure can be expressed as
\[
    \overline{\mathds U}_f = \max\{ \VV(1, 0), \VV(0, 1)\}\quad \text{and} \quad \underline{\mathds U}_{f} = \max\{0, - \VV(1, 0), - \VV(0, 1) \}.
\]
    For any fixed $a,a'\in \mc A$ with $a \neq a'$ we then introduce the function
    \[
        \phi^{aa'}(\tilde x, \tilde a, \tilde y) =  r_a  \mathbbm{1}_{\mc X_1  \times \{( a,1)\}} (\tilde x, \tilde a, \tilde y) - r_{a'}  \mathbbm{1}_{\mc X_1\times\{( a',  1)\}} (\tilde x, \tilde a, \tilde y),
    \]
    which depens on $\tilde x\in\mc X$, $\tilde a\in\mc A$ and $\tilde y\in\mc Y$, and which allows us to re-express $\VV(a, a')$ as
    \[
        \VV(a, a') = \left\{
        \begin{array}{cl}
        \ds \sup_{\QQ\in\mc M} & ~\EE_\QQ[\phi^{aa'}(X, A, Y)]\\ [1ex]
        \st & \Wass(\QQ, \Pnom_N) \leq \rho \\ [1ex]
        & \QQ(A =  a, Y =  y) = \wh p_{ a  y} \quad \forall a \in \mc A, ~\forall y \in \mc Y.
        \end{array}
        \right.
    \]
    Note that the function $\phi^{aa'}$ is piecewise constant and thus bounded, which implies that $\VV(a, a')$ is finite. The strong duality result from Proposition~\ref{prop:conic-duality} further implies that
    \be \label{eq:dual-problem-2}
        \VV(a, a') = \left\{\begin{array}{cll}
        \min & \rho \lambda + \wh p^\top \mu + \frac{1}{N} \mathbf{1}^\top \nu \\[1ex]
        \st & \lambda \in \R_+,~\mu \in \R^4,~\nu \in \R^N \\[0.5ex]
        &  \lambda c\big((x_i, a_i, y_i), (\wh x_i, \wh a_i, \wh y_i)\big)+ \mu_{a_i y_i} + \nu_i \ge  \phi^{aa'}(x_i, a_i, y_i) \\[0.5ex]
        &\hspace{3.5cm} \forall (x_i, a_i, y_i) \in \mc X \times \mc A \times \mc Y,\;\forall i \in [N].
        \end{array}\right.
    \ee
    Note that the minimum of problem~\eqref{eq:dual-problem-2} is attained because $\VV(a, a')$ is finite. By the definition of the transportation cost, the $i$-th semi-infinite constraint in~\eqref{eq:dual-problem-2} can be expressed more explicitly as
    \be \label{eq:dual-constraint-2}
        \begin{array}{r}
        \ds \nu_i \ge \Sup{x_i \in \mc X} \left\{ \phi^{aa'}(x_i, a_i, y_i) - \lambda \| x_i - \wh x_i \| \right\} - \kappa_{\mc A} | a_i - \wh a_i| \lambda - \kappa_{\mc Y} | y_i - \wh y_i| \lambda - \mu_{a_i y_i} \phantom{a}\\[2ex] \forall a_i \in \mc A,~\forall y_i \in \mc Y.
        \end{array}
    \ee
    If $y_i = 0$ and $a_i \in \mc A$, then $\phi^{aa'}(x_i, a_i, 0) = 0$, and thus~\eqref{eq:dual-constraint-2} simplifies to
    \[
        \nu_i \ge - \kappa_{\mc A} | a_i - \wh a_i| \lambda - \kappa_{\mc Y} | \wh y_i| \lambda - \mu_{a_i 0} \quad \forall a_i \in \mc A.
    \]
    If $a_i = a$ and $y_i = 1$, then $\phi^{aa'}(x_i, a, 1) = r_a \mathbbm{1}_{\mc X_1} (x_i)$, and we have
    \begin{align*}
        \Sup{x_i \in \mc X}~r_a \mathbbm{1}_{\mc X_1}(x_i) - \lambda \| x_i - \wh x_i \| 
        &= \begin{cases}
            r_a & \text{if } \wh x_i \in \mc X_1 \\
            \max\{0, r_a - \lambda d_{1i} \} &\text{if } \wh x_i \not\in \mc X_1
        \end{cases} \\
        &= \max\{0, r_a - \lambda d_{1i}\},
    \end{align*}
    where the last equality holds because $d_{1i} = 0$ if $\wh x_i \in \mc X_1$. Thus, constraint~\eqref{eq:dual-constraint-2} reduces to
    \[
        \nu_i \ge \max\{0, r_a - \lambda d_{1i}\} - \kappa_{\mc A} | a - \wh a_i| \lambda - \kappa_{\mc Y} | 1 - \wh y_i| \lambda - \mu_{a 1}.
    \]
    If $a_i = a'$ and $y_i = 1$, finally, then we have $\phi^{aa'}(x_i, a', 1) = -r_{a'} \mathbbm{1}_{\mc X_1} (x_i)$, and thus
    \begin{align*}
        \Sup{x_i \in \mc X}~-r_{a'} \mathbbm{1}_{\mc X_1}(x_i) - \lambda \| x_i - \wh x_i \| 
        &= \begin{cases}
            \max\{-r_{a'}, - \lambda d_{0i} \} & \text{if } \wh x_i \in \mc X_1 \\
             0 &\text{if } \wh x_i \not\in \mc X_1
        \end{cases} \\
        &=  \max\{- r_{a'},  - \lambda d_{0i} \},
    \end{align*}
    where the last equality holds because $d_{0i} = 0$ whenever $\wh x_i \not\in\mc X_1$. Because
    the set $\mc X_1$ is closed, the supremum in the above expression is not attained. Constraint~\eqref{eq:dual-constraint-2} now becomes
    \[
        \nu_i \ge \max\{- r_{a'},  - \lambda d_{0i} \} - \kappa_{\mc A} | a' - \wh a_i| \lambda - \kappa_{\mc Y} | 1 - \wh y_i| \lambda - \mu_{a' 1}.
    \]
    In summary, the semi-infinite constraint~\eqref{eq:dual-constraint-2} is equivalent to the six linear constraints 
    \[
        \begin{array}{l}
            \nu_i \ge - \kappa_{\mc A} |a - \wh a_i | \lambda - \kappa_{\mc Y} | \wh y_i | \lambda - \mu_{a0} \\
            \nu_i \ge - \kappa_{\mc A} |a' - \wh a_i | \lambda - \kappa_{\mc Y} | \wh y_i | \lambda - \mu_{a'0} \\
            \nu_i \ge r_a - \lambda d_{1i} - \kappa_{\mc A} |a - \wh a_i | \lambda - \kappa_{\mc Y} |1- \wh y_i | \lambda - \mu_{a1} \\
            \nu_i \ge - \kappa_{\mc A} |a - \wh a_i | \lambda - \kappa_{\mc Y} |1- \wh y_i | \lambda - \mu_{a1} \\
            \nu_i \ge -r_{a'}  - \kappa_{\mc A} |a' - \wh a_i | \lambda - \kappa_{\mc Y} |1- \wh y_i | \lambda - \mu_{a'1}\\
            \nu_i \ge  - \lambda d_{0i} - \kappa_{\mc A} |a' - \wh a_i | \lambda - \kappa_{\mc Y} |1- \wh y_i | \lambda - \mu_{a'1}.
            \end{array}
    \]
    The claim now follows by substituting this reformulation into~\eqref{eq:dual-problem-2} for every $i\in[N]$.
\end{proof}
\begin{proof}[Proof of Theorem~\ref{thm:quantify-EO-infinite}]
        If $\kappa_{\mc A} = \kappa_{\mc Y} = \infty$, then the linear programming reformulation derived in Theorem~\ref{thm:quantify-EO} simplifies to
        \be
        \label{eq:worst_case_inf-1}
            \begin{array}{cll}
                \min & \rho \lambda +\ds \sum\limits_{a \in \mc A}\sum\limits_{y \in\mc Y} \wh p_{ay} \mu_{ay} + \ds \frac{1}{N} \sum\limits_{i=1}^N \nu_i \\
                \st & \lambda \in \R_+,~\mu \in \R^{2 \times 2},~\nu \in \R^N \\
                & \hspace{-2mm}
                \left. \begin{array}{lll}
                \nu_i + \mu_{\wh a_i 0} \ge 0 & \text{ if } \wh y_i = 0 \\
                \nu_i + \mu_{a1} + d_{1i} \lambda \ge r_a & \text{ if } \wh a_i = a, \wh y_i = 1  \\
                 \nu_i + \mu_{a1} \ge 0 & \text{ if } \wh a_i = a, \wh y_i = 1 \\
                 \nu_i + \mu_{a'1} \ge -r_{a'} & \text{ if } \wh a_i = a', \wh y_i = 1 \\
             \nu_i + \mu_{a'1} +  d_{0i} \lambda \ge 0 & \text{ if } \wh a_i = a', \wh y_i = 1 \end{array}
             \right\}
             \forall i \in [N].
            \end{array}
        \ee
        Furthermore, the first constraint $\mu_{\hat a_i} \ge -\nu_i$ force $\mu_{\hat a_i 0} = -\nu_i$ for all $\{i \in [N]: \hat y_i = 0\}$.
        Hence, by definition of $\hat p_{a y}$ we have
        \[\sum\limits_{a \in \mc A}\hat p_{a0}\mu_{ a  0} = -\frac{1}{N}\sum\limits_{i \in [N]: \hat y_i = 0} \nu_i.\]
        Consequently, by defining the sets $\bar{\mc I}_a = \{ i \in [N]: \wh a_i = a, \wh y_i = 1\}$ and $\bar{\mc I}_{a'} = \{ i \in [N]: \wh a_i = a', \wh y_i = 1\}$, problem in~\eqref{eq:worst_case_inf-1} is further simplified to
        \[
            \begin{array}{cll}
                \min & \rho \lambda + \wh p_{a1} \mu_{a1} + \wh p_{a'1} \mu_{a'1} + \frac{1}{N} \sum\limits_{i \in \bar{\mc  I}_{a} \cup \bar{\mc  I}_{a'}} \nu_i \\
                \st & \lambda \in \R_+,~\mu \in \R^{2 \times 2},~\nu \in \R^N \\
                & \hspace{-2mm} \left.
                \begin{array}{l}
                \nu_i + \mu_{a1} + d_{1i} \lambda \ge r_a \\
                \nu_i + \mu_{a1} \ge 0  \\
                \end{array} 
                \right\}  \forall i \in \bar{\mc I}_{a}\\[2.5ex]
                & \hspace{-2mm} \left.
                \begin{array}{l}
                \nu_i + \mu_{a'1} \ge -r_{a'} \\
                \nu_i + \mu_{a'1} +  d_{0i} \lambda \ge 0 
                \end{array} \right\}  \forall i \in \bar{\mc I}_{a'}.
            \end{array}
        \]
        By introducing the Lagrangian multipliers $\gamma_{1}, \gamma_2 \in \R_+^{|\bar{\mc I}_a|}$ and $\gamma_3, \gamma_4 \in \R_+^{|\bar{\mc I}_{a'}|}$, we obtain the linear dual problem of the above problem as 
        \be
        \label{eq:quan_kappa_inf_dual}
            \begin{array}{cll}
                \max & r_a \sum\limits_{i \in \bar{\mc I}_a} \gamma_{1i} - r_{a'} \sum\limits_{i \in \bar{\mc I}_{a'}} \gamma_{3i}  \\
                \st & \gamma_{1} \in \R_+^{|\bar{\mc I}_a|},\; \gamma_2 \in \R_+^{|\bar{\mc I}_a|},\; \gamma_3 \in \R_+^{|\bar{\mc I}_{a'}|},\; \gamma_4 \in \R_+^{|\bar{\mc I}_{a'}|}
                \\[1.5ex]
                & \rho - \sum\limits_{i \in  \bar{\mc I}_a} \gamma_{1i} d_{1i} - \sum\limits_{i \in \bar{\mc I}_{a'}} \gamma_{4i} d_{0i} \geq 0 \\
                & \hat p_{a1} - \sum\limits_{i \in \bar{\mc I}_a} \left(\gamma_{2i} + \gamma_{1i} \right)= 0  \\
                & \hat p_{a'1} - \sum\limits_{i \in \bar{\mc I}_{a'}} \left(\gamma_{3i} + \gamma_{4i}\right) = 0  \\
                & {1}/{N} - \gamma_{1i} - \gamma_{2 i} = 0 & \forall i \in \bar{\mc I}_a  \\
                & {1}/{N} - \gamma_{3i} - \gamma_{4i} = 0 & \forall i \in \bar{\mc I}_{a'}.
            \end{array}
        \ee
        We now define the sets
        $\mc I_{a} = \{i\in\bar{\mc I}_a : \hat x_i \in \text{int}(\mc X_0)\}$ and $\mc I_{a'} = \{i\in \bar{\mc I}_{a'} : \hat x_i \in \text{int}(\mc X_1)\}$.
        Due to the last two constraints, $\gamma_{2i} +\gamma_{1i} = 1/N$ and $\gamma_{3i} +\gamma_{4i} = 1/N$, the third and the forth constraints of~\eqref{eq:quan_kappa_inf_dual} become redundant as $\sum_{i \in \bar{\mc I}_a} 1/N = \hat p_{a1}$ and $\sum_{i \in \bar{\mc I}_{a'}} 1/N= \hat p_{a'1}$ by definition of the sets $\bar{\mc I}_a$ and $\bar{\mc I}_{a'}$.
        Notice that due to last constraint in \eqref{eq:quan_kappa_inf_dual}, we have $\gamma_{3i} = 1/N - \gamma_{4i}$ for all $i\in \bar{\mc I}_{a'}$.
        Then, we can further simplify problem
        \eqref{eq:quan_kappa_inf_dual} to
              \be
        \label{eq:quan_kappa_inf_dual-2}
            \begin{array}{cll}
                \max & r_a \sum\limits_{i \in \bar{\mc I}_a}  \gamma_{1 i} - r_{a'} \sum\limits_{i \in\bar{\mc I}_{a'}} \left(\frac{1}{N} -\gamma_{4 i}\right)  \\
                \st &  \gamma_{1} \in \R_+^{|\bar{\mc I}_a|},\; \gamma_2 \in \R_+^{|\bar{\mc I}_a|},\; \gamma_3 \in \R_+^{|\bar{\mc I}_{a'}|},\; \gamma_4 \in \R_+^{|\bar{\mc I}_{a'}|}
                \\[1.5ex]
                & \rho - \sum\limits_{i \in \mc   I_a} \gamma_{1 i} d_{1i} - \sum\limits_{i \in \mc  I_{a'}}  \gamma_{4 i} d_{0i} \geq 0 \\
                & \gamma_{1 i} +\gamma_{2 i} =  {1}/ {N} & \forall i \in \bar{\mc I}_{a}\\
                & \gamma_{3 i} + \gamma_{4 i} = {1}/{N} &\forall i \in \bar{\mc I}_{a'}.
            \end{array}
        \ee
    Because the variables $\gamma_{2i} \in \R_+^{|\bar{\mc I}_a|}$ and $\gamma_{3 i} \in \R_+^{|\bar{\mc I}_{a'}|} $ do not appear in the objective of problem~\eqref{eq:quan_kappa_inf_dual-2} and $r_a, r_{a'} > 0$, we can further simplify problem~\eqref{eq:quan_kappa_inf_dual-2} to
                 \be
        \label{eq:quan_kappa_inf_dual-3}
            \begin{array}{cll}
                \max & r_a \sum\limits_{i \in \bar{\mc I}_a}  \gamma_{1 i} - r_{a'} \sum\limits_{i \in\bar{\mc I}_{a'}} \left(\frac{1}{N} -\gamma_{4 i}\right)  \\
                \st & \gamma_{1} \in \R_+^{|\bar{\mc I}_a|}, \gamma_4\in \R_+^{|\bar{\mc I}_{a'}|}
                \\
                & \rho - \sum\limits_{i \in \mc   I_a} \gamma_{1 i} d_{1i} - \sum\limits_{i \in \mc  I_{a'}}  \gamma_{4 i} d_{0i} \geq 0 \\
                & \gamma_{1 i} \leq {1}/ {N} & \forall i \in \bar{\mc I}_{a}\\
                & \gamma_{4 i} \leq {1}/{N} &\forall i \in \bar{\mc I}_{a'} .
            \end{array}
        \ee
        Note that for $\gamma_1^\star$ and $\gamma_4^\star$ that optimize problem \eqref{eq:quan_kappa_inf_dual-3} for all $i\notin {\mc I}_a$, $\gamma^\star_{1i}$ takes the value $1/N$, and similarly for all $i\notin {\mc I}_{a'}$, $\gamma^\star_{4i}$ takes the value $1/N$.  
        Hence, it is sufficient to optimize over values of $\gamma_{1i}$ and $\gamma_{4i}$ for all $i \in \mc I_a \cup \mc I_{a'}$.
         By applying the variable transformations  $\gamma_{1i} \xleftarrow[]{} z_i/N $ for all $i \in \mc I_{a}$ and $ \gamma_{4 i} \xleftarrow[]{} z_i/N $ for all $i \in \mc I_{a'}$, where $z \in \R_+^{|\mc I_{a}|+|\mc I_{a'}|}$, the problem \eqref{eq:quan_kappa_inf_dual-3} can be restated as 
    \be
        \label{eq:quan_kappa_inf_dual-4}
            \begin{array}{cll}
                \max & r_a \frac{| \bar{\mc I}_a \backslash \mc I_a|}{N} + \frac{r_a}{N}\sum\limits_{i \in \mc I_a}z_i - r_{a'} \frac{|\mc I_{a'}|}{N} + \frac{r_{a'}}{N}\sum\limits_{i \in \mc I_{a'}}z_i \\
                \st & z \in \R_+^{|\mc I_{a}|+|\mc I_{a'}|}
                \\[1.5ex]
                & \sum\limits_{i \in \mc  I_a} z_i d_{1i} + \sum\limits_{i \in \mc  I_{a'}} z_i d_{0i} \leq N\rho\\
                & z_i \leq 1 & \forall~i \in \mc I_{a} \cup \mc I_{a'}.
            \end{array}
        \ee
        Observe that $r_a |\bar{\mc I}_a\backslash \mc I_a |/N -r_{a'} |\mc I_{a'}|/N$ is equivalent to empirical value function $\hat \VV(a, a')$, which is defined as in the theorem statement.
    By introducing the non-negative rewards and weights through
    \[
    (c_{aa'i}, w_{aa'i}) =\begin{cases}
    (r_a, d_{1i})\quad &\text{if}~i \in \mc I_a,\\
    (r_{a'}, d_{0i}) \quad &\text{if}~i \in \mc I_{a'},\\
    (0, +\infty) & \text{otherwise},
    \end{cases}
    \]
    we can re-write the optimization problem in \eqref{eq:quan_kappa_inf_dual-4} as
    \[\hat{\mathds V}(a,a') + \max\limits_{z \in [0,1]^N} \left\{\frac{1}{N} \sum\limits_{i\in[N]} c_{aa'i} z_i~:~\frac{1}{N}\sum\limits_{i \in[N]} w_{aa'i}z_i \leq \rho\right\}~\forall a,a' \in \mc A, a\neq a',\]
    where the equivalence of the two problems holds because $z_i\opt=0$ for all $\{i \in [N] : w_{aa'i} = +\infty\}$.
     This observation concludes the proof.
    \end{proof} 
    \begin{proof}[Proof of Proposition~\ref{prop:extreme-infty}]
    For $\rho = 0$, we have $\mathds V(a, a') = \hat {\mathds V}(a, a')$ and $\QQ\opt = \Pnom_N$ is the optimal solution that attains the supremum in~\eqref{eq:V_f-def}. For the rest of the proof, it suffices to consider when $\rho > 0$.
    
    We define the set $\mc I = \{i\in[N]:\hat a_i = a, \hat y_i =1,  \hat x_i \in \text{int}(\mc X_0)\} \cup \{i\in [N] : \hat a_i=a', \hat y_i =1, \hat x_i \in \text{int}(\mc X_1)\}$.
    First, we show that $\QQ\opt$ defined in the statement of the Proposition~\ref{prop:extreme-infty} satisfies $\QQ\opt\in \mbb B_\rho(\Pnom_N)$. Notice that $\QQ\opt$ does not flip any label on $A$ and $Y$ as $\kappa_{\mc A} = \kappa_{\mc Y} = \infty$, thus it preserves the marginals
        \[
            \QQ\opt(A =  a, Y =  y) = \Pnom_N(A =  a, Y =  y) \quad \forall a \in \mc A, y \in \mc Y.
        \]
        Moreover, the distance from $\QQ\opt$ to $\Pnom_N$ satisfies
        \begin{align*}
            \Wass(\QQ\opt, \Pnom_N) &\le \frac{1}{N} \sum_{i \in [N]} z_i\opt \| \wh x_{i}\opt - \wh x_i \| = \frac{1}{N} \sum_{i \in\mc I} w_{aa'i} z_i\opt \le \rho,
        \end{align*}
        where the first inequality follows by definition of the Wasserstein distance, the equality is from the definition of $w_{aa'i}$, and the last inequality is from the feasibility of $z\opt$ in the linear program in~\eqref{eq:quantify-EO-infinite-fin}.
    
        In what follows, we will construct a distribution $\QQ\opt_\eps \in \mathbb{B}_\eps(\QQ\opt)$ that is $\eps$-suboptimal in \eqref{eq:quantify-EO-infinite-fin} for $\rho > 0$.
        For simplicity of exposition, we assume that $\hat x_i \neq \hat x_i\opt$ for all $i \in [N]$ and the norm on $\mc X$ used in the Wasserstein ground metric is a 2-norm.
        For any given $ \eps$, we choose $\theta \in [0,1]$ that satisfies
        $\theta \ge  1 - {N\eps}/{\sum_{\substack{i \in \mc J}} r_{a'} z_i\opt}$, where $\mc J = \{i \in [N]:\wh x_i \in  \mc X_1, \wh a_i = a', \wh y_i = 1\}$.
        We set $\epsilon_0,\epsilon_1, \epsilon_2 \in \R_+$ to satisfy the following criteria
        \[
            \left\{
                \begin{array}{l}
                    \theta \epsilon_0 +(1-\theta) \epsilon_1 \le \eps, \\
                    (1-\theta)(\epsilon_1 + \epsilon_2) \ge \eps
                \end{array}
            \right.
        \]
        so that for all $i \in [N]$, the set
        \[
            \left\{ x \in \mc X_1: \| x - \wh x_i\opt \| \le \epsilon_1, \| x - \wh x_i\| \le \| \wh x_i\opt - \wh x_i \| - \epsilon_2 \right\}
        \]
        is non-empty.  When the norm on $\mc X$ is a 2-norm, the above condition is satisfied by setting $\theta \epsilon_0 = \eps /2$, $(1-\theta) \epsilon_1 = \eps/2$, and $\epsilon_2 = \epsilon_1$. For other norms, this requirement can be satisfied by properly scaling $\epsilon_0$ down and scaling $ \epsilon_1$ and $\epsilon_2$ up to meet the criteria.
        For each $i \in [N]$, consider the tuple $(\wh x_{0i}^\eps, \wh x_{1i}^\eps)$ defined as
        \[
             (\wh x_{0i}^\eps, \wh x_{1i}^\eps)
             = 
             \begin{cases}
                (\wh x_{0i}, \wh x_{1i}) & \text{if } i \in \mc J, \\
                (\wh x_{i}\opt, \wh x_i\opt) & \text{otherwise},
            \end{cases}
        \]
        where $\wh x_{0i} \in \mc X_0$ such that $\| \wh x_{0i} - \wh x_i\opt\|\le \epsilon_0$, and $\wh x_{1i} \in \mc X_1$ such that $\| \wh x_{1i} - \wh x_i\opt \|\le \epsilon_1$, and $\| \hat x_{1i} - \hat x_i\|\leq\|\hat x_i\opt- \hat x_i\|- \epsilon_2$. Notice that the existence of $\wh x_{0i}$ is guaranteed because $\wh x_{i}\opt$ is the projection of $\wh x_i$ onto $\partial \mc X_1$, or equivalently onto $\mathrm{cl}(\mc X_0)$, and hence $\mc X_0 \cap \{ x_i: \| x_i - \wh x_{i}\opt \| \le \epsilon_0\} $ is non-empty for any $\epsilon_0 \in \R_{++}$.
        Consider now distribution $\QQ\opt_\eps$ that is constructed as
       \be \notag
            \QQ\opt_\eps=\frac{1}{N} \left( 
            \textstyle \sum\limits_{i=1}^N \theta z_i\opt \delta_{(\wh x_{0i}^\eps, \wh a_i, \wh y_i)} + \sum\limits_{i=1}^N z_i\opt(1-\theta) \delta_{(\wh x_{1i}^\eps, \wh a_i, \wh y_i)} + \sum\limits_{i=1}^N (1- z_i\opt) \delta_{(\wh x_i, \wh a_i, \wh y_i)}
        \right).
        \ee
        We will show that $\QQ\opt_\eps \in \mathbb{B}_\rho(\QQ\opt)$.
        By definition of $\QQ\opt_\eps$, we have
        \begin{align*}
            \Wass(\QQ\opt_\eps, \QQ\opt) &\le \frac{1}{N} \sum\limits_{i \in \mc J} \big( \theta z_i\opt \| \wh x_{0i} - \wh x_{i}\opt \| + (1-\theta) z_i\opt  \|\hat x_i\opt - \hat x_{1i}\| \big)\\
            &\le \theta \epsilon_0 + (1-\theta)\epsilon_1 \le \eps,
        \end{align*}
        where the first inequality is due to $z_i\opt \leq 1$ for all $i \in [N]$, $\mc J \subset [N]$, $\|\hat x_{0i}- \hat x_i\opt\| \leq \epsilon_0$ and $\| \hat x_{1i} - \hat x_{i}\opt\| \leq \epsilon_1$.
        The last inequality follows by assumption on $\epsilon_0$ and $\epsilon_1$.
        Next, we show that $\QQ_\eps \in \mathbb{B}_\rho(\Pnom_N)$.
        Similarly, by construction of $\QQ_\eps\opt$ we have
        \begin{align*}
            \Wass(\QQ\opt_\eps, \Pnom_N) & \le \frac{1}{N} \left( \sum_{i \in [N]}\theta z_i\opt\| \wh x_{0i}^\eps - \wh x_{i} \|  + \sum_{i \in [N]} (1-\theta)z_i\opt \| \wh x_{1i}^\eps - \wh x_i\| \right)\\
            &=\frac{1}{N} 
            \sum\limits_{i \in [N]\backslash \mc J} \theta z_i\opt \|\hat x_{i}\opt - \hat x_i\| +\frac{1}{N} \sum\limits_{i \in \mc J}\theta z_i\opt \|\hat x_{0i} - \hat x_i\| \\
            &\hspace{1cm}+ \frac{1}{N}\sum\limits_{i\in [N]\backslash \mc J}(1-\theta) z_i\opt \|\hat x_i\opt - \hat x_i\|+\frac{1}{N}\sum\limits_{i \in \mc J}(1-\theta) z_i\opt\|\hat x_{1i} - \hat x_i\|
            \\
            &\le \frac{1}{N} 
            \sum\limits_{i \in [N]\backslash \mc J} \theta z_i\opt \|\hat x_{i}\opt - \hat x_i\| +\frac{1}{N} \sum\limits_{i \in \mc J}\theta z_i\opt \|\hat x_{0i} - \hat x_i\| \\
            &\hspace{1cm}+ \frac{1}{N}\sum\limits_{i\in [N]\backslash \mc J}(1-\theta) z_i\opt \|\hat x_i\opt - \hat x_i\|+\frac{1}{N}\sum\limits_{i \in \mc J}(1-\theta) z_i\opt\|\hat x_i\opt - \hat x_i\| - (1-\theta)\epsilon_2
            \\
            &=\frac{1}{N} 
            \sum\limits_{i \in [N]}  z_i\opt \|\hat x_{i}\opt - \hat x_i\| +\frac{1}{N} \sum\limits_{i \in \mc J}\theta z_i\opt (\|\hat x_{0i} - \hat x_i\| - \|\hat x_i\opt - \hat x_i\|)- (1-\theta)\epsilon_2
            \\
            &\leq\rho +\frac{1}{N} \sum\limits_{i \in \mc J}\theta z_i\opt \|\hat x_{0i} - \hat x_i\opt\| -(1-\theta) \epsilon_2 \leq \rho + \theta \epsilon_0 -(1-\theta)\epsilon_2\le  \rho,
        \end{align*}
        where the first equality is due to the definition of $\hat x_{0i}^\eps$ and $\hat x_{1i}^\eps$.
        The second inequality follows by construction of $\hat x_{1i}$, that is, it satisfies $\|\hat x_{1i} - \hat x_i \|\leq\|\hat x_i\opt - \hat x_i\|- \epsilon_2$.
        The third inequality follows from triangle inequality, that is, $\|\hat x_{0i} - \hat x_i\| \le   \| \hat x_{0i} - \hat x_i\opt\| + \|\hat x_i - \hat x_i\opt \| $ and since $z_i\opt$ is feasible in \eqref{eq:quantify-EO-infinite-fin}.
        The last equality is due to the choice of $\epsilon_0$ and $\epsilon_2$ that satisfies $\theta \epsilon_0 + (1-\theta)\epsilon_2 \le 0$. As a consequence, we have $\QQ\opt_\eps \in \mbb B_{\rho}(\Pnom_N)$.   
        
    In the last step, we verify that  $\QQ_\eps\opt$ is an $\eps$-suboptimal solution of the maximization problem that defines $\mathds V(a, a')$. Notice that because $\Pnom_N$ is an empirical distribution, we have
    \[
        \hat{\mathds{V}}(a, a') = \frac{1}{N} \sum\limits_{\substack{i \in [N] : \,\hat x_i \in \mc X_1\\ \hat a_i = a,\,\hat y_i = 1}}r_a - \frac{1}{N} \sum\limits_{\substack{i \in [N] : \,\hat x_i \in \mc X_1\\ \hat a_i = a',\,\hat y_i = 1}}r_{a'}.
    \]
   By definition of $\QQ\opt_\eps$, we have the following equalities
    \begin{subequations}
      \begin{align}
      \label{eq:extrem_unf1}
        \QQ\opt_\eps(X \in \mc X_1 | A = a, Y = 1) &= \QQ\opt(X \in \mc X_1 | A = a, Y = 1) \\
    \label{eq:extrem_unf2}
        \QQ\opt_\eps(X \in \mc X_1 | A = a', Y = 1) &= 
        \QQ\opt(X \in \mc X_1 | A = a', Y = 1) - \frac{\theta}{N}\sum\limits_{i \in \mc J} r_{a'} z_i\opt
    \end{align}
    \end{subequations}
Similarly by definition of $\QQ\opt$, we have the following equalities
        \begin{align}
        &{\QQ\opt} [X \in \mc X_1| A=a, Y=1] - {\QQ\opt} [X \in \mc X_1| A=a', Y=1]\nonumber\\ 
        &\hspace{0.1cm}=\frac{1}{N}\sum\limits_{\substack{i \in [N]: \hat a_i =a
        \\ \hat y_i =1}}r_{a}z_i\opt + \frac{1}{N}\sum\limits_{\substack{i \in [N]:\hat x_i \in \mc X_1\\ \hat a_i =a, \hat y_i =1}}r_{a}(1-z_i\opt) - \frac{1}{N} \sum\limits_{\substack{i \in [N]: \hat a_i =a'\\ \hat y_i =1}}r_{a'}z_i\opt -\frac{1}{N} \sum\limits_{i \in \mc J}r_{a'}(1-z_i\opt)\nonumber\\
        &\hspace{0.1cm}=\hat{\mathds V}(a,a')+ \frac{1}{N}\sum\limits_{\substack{i \in [N]: \hat x_i \in \text{int}(\mc X_0) \\\hat a_i =a, \hat y_i =1}}r_{a}z_i\opt  -\frac{1}{N} \sum\limits_{\substack{i \in [N]: \hat x_i \in\text{int}(\mc X_0),\\\hat a_i =a', \hat y_i =1}}r_{a'}z_i\opt\nonumber \\
        &\hspace{0.2cm}=\hat{\mathds V}(a,a')+ \frac{1}{N}\sum\limits_{\substack{i \in [N]: \hat x_i \in \text{int}(\mc X_0) \\\hat a_i =a, \hat y_i =1}}r_{a}z_i\opt  + \frac{1}{N} \sum\limits_{\substack{i \in [N]:\hat x_i \in \text{int}(\mc X_1)\\\hat a_i =a', \hat y_i = 1}}r_{a'} z_i^\star- \frac{1}{N} \sum\limits_{\substack{i \in [N]:\hat x_i \in \text{int}(\mc X_1)\\\hat a_i =a', \hat y_i = 1}}r_{a'} z_i^\star \nonumber\\
        &\hspace{0.1cm}=\mathds V(a,a') - \frac{1}{N} \sum\limits_{\substack{i \in [N]: \hat x_i \in  \text{int}(\mc X_1) \\ \hat a_i = a', \hat y_i =1}}r_{a'}z_i\opt,\hspace{-.2cm}\label{eq:opt_dist_unf}
        \end{align}
        where the first equality follows by construction of $\QQ\opt$, and the second equality follows from the definition of $\hat{\mathds V}(a,a')$.
        The third equality follows by realizing that $z_i\opt = 0$ for all indices in the set $\{i\in[N] : \hat x_i \in \text{int}(\mc X_0), \hat a_i =a', \hat y_i =1\}$, and we add and subtract the same term to have a representation in terms of $\mathds V(a,a')$.
        Moreover, the last equality is due to the definition of $\mathds V(a,a')$.
    
    Now, we will show that $\QQ_\eps\opt$ provides $\eps$-suboptimal solution to the maximization problem that defines $\mathds V(a,a')$. By taking the difference of \eqref{eq:extrem_unf1} and \eqref{eq:extrem_unf2},
    \begin{align*}
        &\QQ\opt_\eps(X \in \mc X_1 | A = a, Y = 1) - \QQ\opt_\eps(X \in \mc X_1 | A = a', Y = 1)\\
         &=\QQ\opt(X \in \mc X_1 | A = a, Y = 1) - \QQ\opt(X \in \mc X_1 | A = a', Y = 1)+\frac{\theta}{N}\sum\limits_{i \in \mc J} r_{a'}  z_i\opt \\
         &=\mathds V(a,a') - \frac{1}{N} \sum\limits_{i \in \mc J}r_{a'}z_i\opt + \frac{\theta}{N}\sum\limits_{i \in \mc J} r_{a'}  z_i\opt
        \geq \mathds{V}(a,a') -\eps,
    \end{align*}
     where the second equality is due to \eqref{eq:opt_dist_unf}.
     The last inequality follows as 
     $\theta \ge  1 - {N\eps}/{\sum_{\substack{i \in \mc J}} r_{a'} z_i\opt}$.
    This concludes the proof.
     \end{proof}
\subsection{Additional Theoretical Results}
\label{app:additional_res}

In the main paper, we solve problem in \eqref{eq:dro_fair_training} for general $\kappa_\mc{A}$ and $\kappa_{\mc Y}$.
If $\kappa_{\mc A}$ and $\kappa_{\mc Y}$ ceases to be finite then the problem can be substantially simplified.
\begin{corollary}
[Absolute trust in $A$ and $Y$]
    \label{corol:train_inf_kappa_fin}
    If $f(z) = \log(z)$, $\eta \leq \min\{\hat p_{11},\hat p_{01}\}$ and $\kappa_{\mc A} = \kappa_{\mc Y} = \infty$, then problem~\eqref{eq:dro_fair_training} simplifies to the following tractable convex program
    \begin{equation*}
     \label{eq:train_inf_kappa_fin}    \begin{array}{cll}
        \min & t\\
        \st& \beta \in \R^p ,\; t \in \R,\; \lambda_0,\; \lambda_1 \in \R_+,\;\nu_0, \nu_1 \in \R^N \\
        & \hspace{-2mm} \left.
        \begin{array}{l}
        \| \beta \|_*(1 + \eta r_{a'}) \leq \lambda_a\\
        \rho \lambda_a  + \frac{1}{N} \sum_{i =1}^N \nu_{ai} \leq t \\
        \hspace{-2mm} \left. \begin{array}{ll}
        \nu_{ai} +\log(h_\beta(-\hat x_i)) \geq 0& \text{ if}~\hat y_i = 0\\
        \nu_{ai} + (1 - \eta r_a) \log(h_\beta(\hat x_i)) \ge 0& \text{ if}~\hat a_i = a,\phantom{'}\; \hat y_i = 1\\
        \nu_{ai} + (1+\eta r_{a'})\log(h_\beta(\hat x_i)) \ge 0& \text{ if}~\hat a_i = a',\; \hat y_i = 1\\
        \end{array}
        \right\} \,\forall i \in [N]\end{array}\right\}~ 
        \begin{array}{l}
        \forall a,a' \in \mc A:\\
        a'=1-a.
        \end{array}
        \end{array}
    \end{equation*}
    \end{corollary}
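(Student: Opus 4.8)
The plan is to mirror the proof of Theorem~\ref{thm:training_refor} almost verbatim, the only substantive change being that the general strong duality result of Proposition~\ref{prop:conic-duality} is replaced by its infinite-transportation-cost specialization, Corollary~\ref{corol:conic-duality-inf-kappa}. Concretely, I would first carry out the epigraphic reformulation of~\eqref{eq:dro_fair_training} exactly as in~\eqref{eq:epig_refor_fair_dro}, expand the absolute value in $\mathds U_f(\QQ, h_\beta)$, and use the marginal constraint $\QQ(A = a, Y = 1) = \hat p_{a1} = 1/r_a$ valid for every $\QQ \in \mbb B_\rho(\Pnom_N)$ to rewrite the conditional expectations. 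This reduces the problem to minimizing $t$ subject to $\mathds T_\beta^{10} \le t$ and $\mathds T_\beta^{01} \le t$, exactly as in~\eqref{eq:epigraph-for}, where $\mathds T_\beta^{aa'} = \sup_{\QQ \in \mbb B_\rho(\Pnom_N)} \EE_\QQ[\phi_\beta^{aa'}(X, A, Y)]$ for the same integrand $\phi_\beta^{aa'}$ as before; finiteness of each $\mathds T_\beta^{aa'}$ again follows from the linear growth condition invoked through~\cite[Theorem~2.2]{ref:yue2020linear}.

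Next I would dualize each $\mathds T_\beta^{aa'}$ using Corollary~\ref{corol:conic-duality-inf-kappa} instead of Proposition~\ref{prop:conic-duality}. Because $\kappa_{\mc A} = \kappa_{\mc Y} = \infty$, the adversary cannot alter the sensitive attribute or the label, so the dual~\eqref{eq:dual-inf} carries only a multiplier $\lambda_a \in \R_+$ and an epigraphic vector $\nu_a \in \R^N$ — the marginal multipliers $\mu$ having already been absorbed into $\nu_a$ by the change of variables performed in the proof of that corollary — and its semi-infinite constraints range over $x_i \in \mc X$ alone, with $(a_i, y_i)$ frozen at the observed $(\hat a_i, \hat y_i)$. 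Thus each training point contributes exactly one family of constraints, determined by its own label and attribute.

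I would then evaluate, for every $i \in [N]$, the inner supremum $\sup_{x_i \in \mc X}\{\phi_\beta^{aa'}(x_i, \hat a_i, \hat y_i) - \lambda_a \|x_i - \hat x_i\|\}$ via Lemma~\ref{lemma:conjugacy_reform}, distinguishing the three cases exactly as in Theorem~\ref{thm:training_refor}: (i) $\hat y_i = 0$, where the integrand equals $-\log(1 - h_\beta(x_i))$ and, using the identity $1 - h_\beta(x) = h_\beta(-x)$, the supremum is $-\log(h_\beta(-\hat x_i))$ under the condition $\|\beta\|_* \le \lambda_a$; (ii) $\hat a_i = a$, $\hat y_i = 1$, where it equals $(\eta r_a - 1)\log(h_\beta(x_i))$ with supremum $(\eta r_a - 1)\log(h_\beta(\hat x_i))$ under $(1 - \eta r_a)\|\beta\|_* \le \lambda_a$; and (iii) $\hat a_i = a'$, $\hat y_i = 1$, where it equals $-(1 + \eta r_{a'})\log(h_\beta(x_i))$ with supremum $-(1 + \eta r_{a'})\log(h_\beta(\hat x_i))$ under $(1 + \eta r_{a'})\|\beta\|_* \le \lambda_a$. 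Since $0 \le \eta r_a \le 1$ by the assumption $\eta \le \min\{\hat p_{11}, \hat p_{01}\}$, the third norm condition dominates the other two, so only $\|\beta\|_*(1 + \eta r_{a'}) \le \lambda_a$ is retained. Substituting these reformulated constraints back, eliminating the embedded minimizations over $(\lambda_a, \nu_a)$, and indexing the two value functions by $a \in \{0, 1\}$ with $a' = 1 - a$ then yields precisely the program in the statement.

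The bulk of the argument is mechanical once Corollary~\ref{corol:conic-duality-inf-kappa} is available; the only points that need care are the bookkeeping that ties each sample to the single constraint family matching its $(\hat a_i, \hat y_i)$, the identity $1 - h_\beta(x) = h_\beta(-x)$ that produces the $\log(h_\beta(-\hat x_i))$ term in the $\hat y_i = 0$ case, and the redundancy argument that collapses the three norm conditions into one. I would also treat the boundary case $\rho = 0$ separately — or simply restrict attention to $\rho > 0$, as required by Corollary~\ref{corol:conic-duality-inf-kappa} — in which case the problem degenerates to the empirical reformulation of Theorem~\ref{thm:FLR}.
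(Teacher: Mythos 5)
Your proposal is correct and follows essentially the same route as the paper's own proof: an epigraphic reformulation identical to that of Theorem~\ref{thm:training_refor}, dualization of each $\mathds T_\beta^{aa'}$ via Corollary~\ref{corol:conic-duality-inf-kappa} in place of Proposition~\ref{prop:conic-duality}, the same three-case evaluation of the inner supremum through Lemma~\ref{lemma:conjugacy_reform}, and the same redundancy argument collapsing the norm conditions to $\|\beta\|_*(1+\eta r_{a'})\le\lambda_a$. The additional remarks on the identity $1-h_\beta(x)=h_\beta(-x)$ and the $\rho>0$ restriction are consistent with (indeed slightly more explicit than) the paper's treatment.
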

\begin{proof}[Proof of Corollary~\ref{corol:train_inf_kappa_fin}]
    The proof follows the same steps as the proof of Theorem~\ref{thm:training_refor} until the reformulation of $\mathds T_\beta^{aa'}$.
    Thanks to Corollary~\ref{corol:conic-duality-inf-kappa}, $\mathds T_\beta^{aa'}$ coincides with the optimal value of
    \be
        \label{eq:dual-inf-1}
        \begin{array}{cll}
        \inf & \rho \lambda + \frac{1}{N} \sum_{i =1}^N \nu_i \\
        \st & \lambda \in \R_+,~\nu \in \R^N \\
        & \lambda \|x_i - \hat{x}_i\|+  \nu_i \ge  \phi^{aa'}_\beta(x_i,\hat  a_i, \hat y_i) ~ \forall x_i \in \mc X ,\;\forall i \in [N],
        \end{array}
    \ee
    where the function $\phi_\beta^{aa'}$ is as it is defined in the proof of Theorem~\ref{thm:training_refor}.  
    We now proceed to consider the constraint of problem~\eqref{eq:dual-inf-1}, which can be written in a simplified form as
    \be \label{eq:dual-constraint-inf}
        \nu_i \ge \Sup{x_i \in \mc X} \left\{ \phi^{aa'}_\beta(x_i, \hat a_i, \hat y_i) - \lambda \| x_i - \wh x_i \| \right\}.
    \ee
    Suppose that $\hat y_i = 0$, then $\phi_\beta^{aa'}(x_i, \hat a_i, 0) = -\log(1-h_\beta(x_i))$, and by Lemma~\ref{lemma:conjugacy_reform}, we have
    \begin{align*}
        \Sup{x_i \in \mc X} \left\{ \phi_\beta^{aa'}(x_i, a_i, 0) - \lambda \| x_i - \wh x_i \| \right\} &= \Sup{x_i \in \x} -\log(1-h_\beta(x_i))-\lambda\| x_i - \hat x_i\| \\
        &= 
        \begin{cases}
        -\log(1-h_\beta( \hat x_i)) \quad &\text{if}~ \|\beta\|_* \leq \lambda, \\
        +\infty &\text{otherwise},
        \end{cases}
    \end{align*}
    and so the constraint~\eqref{eq:dual-constraint-inf} when $\hat y_i = 0$ becomes
    \[
    \left\{
            \begin{array}{l}
                \nu_i \ge -\log(1-h_\beta(\hat x_i))\\
                \|\beta\|_* \leq \lambda.            \end{array} \right.
    \]
    If $\hat a_i = a$ and $\hat y_i = 1$, then $\phi^{aa'}_\beta(x_i, a, 1) = (\eta r_a - 1) \log(h_\beta(x_i))$. 
    We thus have by Lemma~\ref{lemma:conjugacy_reform} that
    \[
        \Sup{x_i \in \mc X} \left\{ \phi^{aa'}_\beta(x_i, a, 1) - \lambda \| x_i - \wh x_i \| \right\} =
        \begin{cases}
        (\eta r_a - 1) \log(h_\beta(\wh x_i)) \quad &\text{if}~  (1 -\eta r_a)\|\beta\|_* \leq \lambda, \\
        +\infty &\text{otherwise}.
        \end{cases}
    \]
    If $\hat a_i = a$ and $\hat y_i = 1$, then the constraint~\eqref{eq:dual-constraint-inf} becomes
    \[
        \left\{
            \begin{array}{l}
                \nu_i \ge (\eta r_a -1) \log(h_\beta(\wh x_i)) \\
                (1 -\eta r_a)\|\beta\|_* \le \lambda.
            \end{array}
        \right.
    \]
    Using an analogous argument for the case where $\hat a_i = a'$ and $\hat y_i = 1$, we have $\phi^{aa'}_\beta(x_i, a', 1) = -(1+ \eta r_{a'}) \log(h_\beta(x_i))$. By Lemma~\ref{lemma:conjugacy_reform}, we have
    \[
        \Sup{x_i \in \mc X} \left\{ \phi_\beta^{aa'}(x_i, a', 1) - \lambda \| x_i - \wh x_i \| \right\} =
        \begin{cases}
        -(1 +\eta~ r_{a'}) \log(h_\beta(\wh x_i)) \quad &\text{if}~  (1 +\eta~ r_{a'})\|\beta\|_* \leq \lambda, \\
        +\infty &\text{otherwise}.
        \end{cases}
    \]
    If $\hat a_i = a'$ and $\hat y_i = 1$, then the constraint~\eqref{eq:dual-constraint-inf} is equivalent to
    \[
        \left\{
            \begin{array}{l}
                \nu_i \ge - (1 + \eta~ r_{a'}) \log(h_\beta(\wh x_i))\\
                (1 + \eta~ r_{a'})\|\beta\|_* \le \lambda.
            \end{array}
        \right.
    \]
      Injecting all the specific cases of constraint~\eqref{eq:dual-constraint-inf} into problem~\eqref{eq:dual-inf-1}, the value $\mathds T_\beta^{a a'}$ is equal to the optimal value of the following optimization problem
     \be
        \label{eq:prob_dual-inf-2}
        \begin{array}{cll}
        \min & \rho \lambda  + \frac{1}{N} \sum_{i =1}^N \nu_i \\
        \st & \lambda \in \R_+,~\nu \in \R^N \\
        & \|\beta\|_* \leq \lambda , ~ \| \beta \|_*(1 - \eta r_a) \leq \lambda,~
         \| \beta \|_*(1 + \eta r_{a'}) \leq \lambda\\
        &\hspace{-2mm} \left. \begin{array}{ll}
        \nu_i \geq -\log(1- h_\beta(\hat x_i)) & \text{ if}~\hat y_i = 0\\
        \nu_i \geq (\eta r_a-1) \log(h_\beta(\hat x_i)) & \text{ if}~\hat a_i = a, \hat y_i = 1\\
        \nu_i \ge -(1+\eta r_{a'})\log(h_\beta(\hat x_i))& \text{ if}~\hat a_i = a', \hat y_i = 1\\
        \end{array}
        \right\} ~\forall i \in [N].
        \end{array}
    \ee

        Note that the constraints $\|\beta\|_* \leq \lambda$ and $\|\beta \|_* (1- \eta r_a) \leq \lambda$ are redundant in view of the constraint $\|\beta\|_* (1+\eta r_{a'}) \leq \lambda$. The claim then follows by substituting the dual reformulations for $\mathds T_\beta^{aa'}$ into~\eqref{eq:epigraph-for} and eliminating the embedded minimization operators.
    \end{proof}

	 \section{Further Discussion and Details of Numerical Results}
    \label{sect:further_disc_numerical}
    In this section, we provide further details about the experiments in the Section~\ref{sec:numerical}, including synthetic experiments, real dataset experiments and illustrations of the extremal distribution.
    All optimization problems are implemented in Python 3.7 and all experiments were run on an Intel i7-700K CPU (4.2 GHz). 
    
    \textbf{Synthetic Experiments.} 
To show the decision boundaries in Figure~\ref{fig:decision_boundaries_frontiers}, we generate binary classification data that has 2 dimensional feature vectors with two subgroups one of them being the minority (i.e., $A=0$).
We generate 5000 and 2000 binary class labels $Y \in \{0, 1\}$ uniformly at random for majority subgroup ($A=1$), and minority subgroup ($A=0$) respectively.
Then, we set the conditional true distributions of 2 dimensional feature vectors as following Gaussian distributions.
\begin{align*}
    X|A = 1, Y = 1 &\sim \mc N([6, 0], [3.5, 0; 0, 3.5]),\\
X|A = 1, Y = 0 &\sim \mc N ([2, 0], [3.5, 0; 0, 3.5]),\\
X|A = 0, Y = 0 &\sim \mc N ([-4, 0], [5, 0; 0, 5]),\\
X|A = 0, Y = 1 &\sim \mc N ([-2, 0], [5, 0; 0, 5]).
\end{align*}
Next, we use stratified sampling\footnote{Stratified sampling is a method of sampling from a population which can be partitioned into subgroups, and requires sampling each subgroup independently. }
to obtain $N=50$ points from the generated data as a training dataset.
We set the rest of the dataset the test dataset that we calculate the accuracy and the unfairness of the trained models.

         To obtain the Pareto frontiers in Figure~\ref{fig:decision_boundaries_frontiers}, we use the synthetic experiment from~\cite{ref:zafar2015fairness}.
        In this setting, we set the true distributions of the class labels $\PP(Y=0) = \PP(Y=1)= 1/2$.
        Next, we set the conditional distributions of the 2 dimensional feature vectors as the following Gaussian distributions
        \begin{align*}
            X|Y=1 \sim \mc N([2; 2], [5, 1; 1, 5]),\,
            X|Y=0 \sim \mc N([-2; -2], [10, 1; 1, 3]).
        \end{align*}
        Then, we draw sensitive attribute of each sample $x$ from a Bernoulli distribution,
        \[\PP(A = 1 | X = x') = pdf(x' |Y = 1)/(pdf(x'|Y = 1) + pdf(x'|Y = 0)),\] where $x'= [\cos(\pi/4),  \sin(\pi/4); \sin(\pi/4), \cos(\pi/4)]x$ is a rotated version of the feature vector $x$ and $pdf(\cdot | Y = y)$ is the Gaussian probability density function of $X | Y= y$.
        
    We sample 400 i.i.d.~samples from $\PP$ as our dataset, and we stratify sample 100 data points from this dataset and set it as training set, while we set the rest as the test dataset. The procedure to obtain the frontiers is explained as in Section~\ref{sec:numerical}. We fix $\rho$ for DR-FLR to 0.01 and and the range of $\eta$ is $[10^{-4}, \min \{\hat p_{11}, \hat p_{01}\}]$ with 5 equi-distant points.
    
    \noindent\textbf{Experiments with Real Data.} We consider four publicly available datasets (Adult, Drug, COMPAS, Arrythmia).
     We obtain Adult dataset from UCI repository\footnote{https://archive.ics.uci.edu/ml/datasets/adult}, it contains 14 features concerning demographic characteristics of 45222 instances (32561 for training and 12661 for test). The prediction task is to determine whether a person makes over 50000$\$$ a year, where we consider \textit{gender} as the sensitive attribute.
The Drug dataset\footnote{https://archive.ics.uci.edu/ml/datasets/Drug+consumption+$\%$28quantified$\%$29} have records for 1885 respondents.
Each respondent is described by 12 features, including level of education, age, gender, country of residence and ethnicity. 
The task is to determine whether the user ever used heroin or not.
We consider \textit{ethnicity} as the sensitive attribute.
COMPAS (Correctional Offender Management Profiling for Alternative Sanctions)\footnote{https://www.propublica.org/datastore/dataset/compas-recidivism-risk-score-data-and-analysis} is a popular
algorithm used by judges and parole officers for scoring criminal
defendant’s likelihood of recidivism.
It has been shown that the algorithm
is biased in favor of white defendants based on a 2 year follow up study. 
This dataset contains variables used by the COMPAS algorithm in scoring defendants, along with their
outcomes within 2 years of the decision for over 10000 criminal defendants.
We concentrate on the one 
that includes only violent recidivism, where \textit{ethnicity} is the sensitive attribute.
We obtain the Arrhythmia dataset from UCI repository\footnote{https://archive.ics.uci.edu/ml/datasets/Arrhythmia} which contains 279 attributes\footnote{ We only use the first 12 out of 278 non-sensitive features of the Arrhythmia dataset so that we can use the same search grid for $\rho$ across all datasets (in the other datasets $p$ ranges from 5 to 12).}, where the aim is to distinguish between the presence and absence of cardiac arrhythmia and to classify it in one of the 16 groups.
In our case, we changed the task with the binary classification between normal arrhythmia against 15 different classes of arrhythmia.

\noindent\textbf{Training, Validation and Testing Procedure.}
In all other datasets we randomly select $2/3$ of the samples for training and we set the rest of the data for testing. We repeat the training, validation and testing process for $K_3$ times, while the Adult dataset comes with designated training and testing samples, and thus $K_3=1$. 

\noindent\textbf{Validation.} We select the hyper-parameter(s) of the classifier(s) (e.g., the radius of the Wasserstein ball for DR-FLR) using a cross-validation procedure on the training set similar to~\cite{ref:donini2018empirical}.
First, we collect statistics of the parameters of the model by splitting the training set into sub-training set ($N$ samples) and a validation set for $K_1$ times.
In the first step, the value of the parameter in the grid with highest accuracy calculated over the validation set is identified.
In the second step, we shortlist all the values of parameter in the grid with accuracy close (in our case $70 \% - 98\%$) to the maximum accuracy in that range minus the lowest possible accuracy. 
Finally, from this list, we select the parameter value that provides the lowest unfairness measure with respect to the log-probabilistic equalized opportunity.

\noindent\textbf{Testing.} We stratify sample $N$ samples from the training set and we collect the statistics regarding the performance of the classifiers on the test dataset.
We repeat this process for $K_2$ times.

\noindent\textbf{Discussion on Table~\ref{tab:results} in Section~\ref{sec:numerical}.} Table~\ref{tab:results} summarizes the testing accuracy and unfairness of averaged over $K_1=3, K_2=100, K_3=2$, where we tune the radius of Wasserstein ball $\rho \in [10^{-5},10^{-1}]$\footnote{After we obtain the logarithmic scale, we multiply the values by 5, and thus $\rho \in [5.10^{-5}, 5.10^{-1}]$ at the end.} for DR-FLR classifier on a logarithmic search grid with 50 discretization points:
All methods are trained with $N=150$ and we set $\eta= \min\{\hat p_{11}, \hat p_{01}\}/2$ both for FLR and DR-FLR, $\kappa_\mc{A}= \kappa_{\mc Y} = 0.5$ for DR-FLR,  DOB${}^+$ \cite{ref:donini2018empirical} (the model parameter $\epsilon= 0$), and ZVRG \cite{ref:zafar2015fairness} (the model parameter $\epsilon = 10^{-4}$).
We use the following accuracy thresholds at the validation step to tune radius of Wasserstein distance for DR-FLR: $95\%$ for Drug and Adult, $97\%$ for Arrhythmia dataset and $73\%$ for COMPAS dataset.
The difference of the threshold is due to the structure of dataset. For example, the COMPAS dataset is mostly categorical (other than one attribute that is numerical) and thus to decrease the unfairness, the threshold that we use in the validation step for the accuracy should be smaller than the one would use for other datasets that consists mostly numerical attributes.
Moreover, the accuracy threshold also depends on the unbalancedness of the dataset, which determines the lowest possible accuracy that is attained when a classifier only predicts $1$ (or $0$) for all samples.

\noindent\textbf{Worst Case Distribution.} To illustrate the extremal distribution $\QQ\opt$ from Proposition~\ref{prop:extreme-infty}, we generate two interleaving half circles, which is a simple toy dataset to visualize binary classification algorithm.
We assign the sensitive attributes of the binary classification data points uniformly at random by setting 2/3 of the data as the majority subgroup and while the rest as the minority subgroup.
We generate 500 samples and split it into training and test sets by $85\%$ and $15\%$ respectively.
Next, we train the classifiers with the training set and calculate the worst-case unfairness $\overline{\mathds U}_f$ for prescribed $\rho$.
The illustrated extremal distribution $\QQ^\star$ in Figure~\ref{fig:quantification} are obtained with radius of the Wasserstein ball $0.02, 0.05, 0.05, 0.01$ for classical logistic regression, support vector machine with RBF kernel, Gaussian process wiht RBF kernel and AdaBoost, respectively.
\subsection{Additional Numerical Experiments}
In this section, we provide additional experiments that we provide to compare performance of different classifiers. 
\paragraph{Discussion on Table~\ref{tab:results_donini_tuned_150}.} An interesting experiment would be to compare the performance of DOB${}^+$ and LR, FLR and DR-FLR, when we also tune the parameter of the classifier that is used in DOB${}^+$.
Since, SVM is a deterministic classifier (we cannot calculate log-probabilistic unfairness), in the cross-validation procedure from the acceptable parameter grid, that provides accuracy higher than the given threshold, we choose the parameter that gives the lowest unfairness with respect to the deterministic equalized opportunity both for DR-FLR and DOB${}^+$.

The results in the Table~\ref{tab:results_donini_tuned_150} summarize the testing accuracy and unfairness averaged over $K_1=5,\, K_2=100,\, K_3=5$, where we tune the radius of Wasserstein ball $\rho \in [10^{-5}, 10^{-1}]$\footnote{After we obtain the logarithmic scale, we multiply the values by 5, and thus $\rho \in [5\cdot 10^{-5}, 5\cdot 10^{-1}]$ at the end.} for DR-FLR classifier and regularization parameter $C \in [10^{-1}, 10^2]$ of linear support vector machine for DOB${}^+$ method on a logarithmic search grid with 50 discretization points.
Next, we keep training sample size $N=150$ for all LR, FLR, DOB$^+$ and DR-FLR.
We use the following accuracy thresholds at the validation step to tune $\rho$ for DR-FLR and $C$ for DOB${}^+$: $95\%$ for Drug, Adult, and Arrhythmia datasets and $70\%$ for COMPAS dataset.
\begin{center}
\vspace{-0.7cm}
\begin{table*}
\setlength{\tabcolsep}{0.05cm}
\tiny
\centering
\begin{tabular}{|l|l|c|c|c|c|c|c|}
\hline
Dataset & Metric
&{LR} 
&{FLR} 
&{DOB${}^+$\cite{ref:donini2018empirical}}
&{DR-FLR}\\
\hline
\hline
\multirow{4}{*}{Drug}
    & Accuracy& 
    $\textbf{0.79} {\pm}\textbf{0.01} $&
    $ \textbf{0.79}\pm\textbf{ 0.01}$ &
    $\textbf{0.79} \pm \textbf{0.01}$ &
    $\textbf{0.79} \pm \textbf{0.01}$\\
    & Det-UNF &
    $ 0.06\pm 0.05$ &
    $ 0.06\pm 0.05$ &
    $ 0.09\pm 0.07$ &
    $\mathbf{0.04} \pm \mathbf{ 0.04}$\\
    & Prob-UNF &
    $0.06 \pm 0.05$ &
    $ 0.06\pm 0.05$ &
    - & 
    $ \mathbf{0.05}\pm \mathbf{0.04}$\\
    & LogProb-UNF &
    $ 0.21\pm 0.20$ &
    $0.20 \pm 0.20$ & 
    - & 
    $ \mathbf{0.16} \pm \mathbf{0.14}$\\ 
    \hline
    \hline
\multirow{4}{*}{Adult}
    &Accuracy& 
    $\mathbf{ 0.80} {\pm} \mathbf{0.01}$ &
    $\mathbf{0.80} \pm \mathbf{0.01}$ & 
    $0.79 \pm 0.01$ & 
    $0.79 \pm 0.01$\\
    &Det-UNF &
    $ 0.08 \pm 0.06$ &
    $ \mathbf{0.06}\pm \mathbf{0.06}$ & 
    $0.16 \pm 0.10$ & 
    $\mathbf{0.06}\pm\mathbf{ 0.06}$\\
    & Prob-UNF & 
    $ 0.17 \pm 0.08$ &
    $\mathbf{0.12} \pm\textbf{ 0.08}$ & 
    $-$ & 
    $\mathbf{0.12} \pm \mathbf{0.08}$ \\
    & LogProb-UNF &
    $ 1.01 \pm 0.77$ &
    $0.68 \pm 0.68$ & 
    $-$ & 
    $\textbf{0.64} \pm \mathbf{0.65}$\\ 
\hline
\hline
\multirow{4}{*}{Compas}&Accuracy& 
    $\textbf{0.65}{\pm} \textbf{0.01}$ &
    $\textbf{0.65} \pm \textbf{0.02}$ & 
    $0.60\pm 0.03$ & 
    $0.60 \pm 0.03$ \\
    &Det-UNF  & 
    $ 0.24 \pm 0.04$ &
    $0.23 \pm 0.04$ & 
    $ 0.17\pm 0.06$ & 
    $ \mathbf{0.15}\pm \mathbf{0.07}$ \\
    & Prob-UNF & 
    $0.12 \pm 0.02$ &
    $ 0.10\pm 0.03$ & 
    $-$ & 
    $ \textbf{0.03}\pm \textbf{0.02}$ \\
    & LogProb-UNF & 
    $ 0.25 \pm 0.06$ &
    $0.22 \pm 0.06$ & 
    $-$ & 
    $\textbf{0.07}\pm \textbf{0.04}$ \\
\hline 
\hline
\multirow{4}{*}{Arrhythmia}&Accuracy& 
    $ 0.63{\pm} 0.03$ &
    $0.63 \pm 0.03$ & 
    $\textbf{0.65}\pm \textbf{0.02}$ & 
    $0.62 \pm 0.03$ \\
    &Det-UNF  & 
    $ 0.21 {\pm} 0.11$ &
    $0.15 \pm 0.10$ & 
    $ 0.11 \pm 0.08$ & 
    $\textbf{0.09} \pm \textbf{0.08}$ \\
    & Prob-UNF & 
    $0.14 {\pm} 0.07$ &
    $ 0.09\pm 0.06$ & 
    $-$ & 
    $\textbf{0.05}\pm \textbf{0.04}$ \\
    & LogProb-UNF & 
    $ 0.28 \pm 0.17$ &
    $ 0.19 \pm 0.15$ & 
    $-$ & 
    $\textbf{0.09} \pm \textbf{0.08}$ \\ 
\hline 
\end{tabular}
\caption{Testing accuracy and unfairness (average $\pm$ standard deviation). For DR-FLR $\rho$ and for DOB$^+$ method regularization parameter $C$ of linear SVM is tuned given the training data.
LR, FLR, DOB${}^+$ and DR-FLR are trained with $N=150$ samples stratify sampled from the training split. }
\label{tab:results_donini_tuned_150}
\end{table*}
\end{center} 

\bibliographystyle{siam} 
\bibliography{references}

\end{document}